\begin{document}

\title{Stochastic Differential Equations models for Least-Squares Stochastic Gradient Descent}

\author{Adrien Schertzer$^\dagger$ and Loucas Pillaud-Vivien$^{\star}$}
\address{$^\dagger$ Universität Bonn, email: aschertz@uni-bonn.de}
\address{$^\star$ Ecole des Ponts ParisTech - Cermics, email: loucas.pillaud-vivien@enpc.fr}

\maketitle
\begin{abstract}
We study the dynamics of a continuous-time model of the Stochastic Gradient Descent (SGD) for the least-square problem. Indeed, pursuing the work of \cite{li2019stochastic}, we analyze Stochastic Differential Equations (SDEs) that model SGD either in the case of the \textit{training loss} (finite samples) or the population one (online setting). A key qualitative feature of the dynamics is the existence of a perfect interpolator of the data, irrespective of the sample size. In both scenarios, we provide precise, non-asymptotic rates of convergence to the (possibly degenerate) stationary distribution. Additionally, we describe this asymptotic distribution, offering estimates of its mean, deviations from it, and a proof of the emergence of heavy-tails related to the step-size magnitude. Numerical simulations supporting our findings are also presented.
\end{abstract}

\section{Introduction}

The stochastic gradient descent (SGD) is the workhorse of any large-scale machine learning pipeline. Described more than seventy years ago as a versatile stochastic algorithm~\cite{robbins1951}, it has been studied thoroughly since then (e.g.~\cite{pages2001quelques,benaim2006dynamics}), and applied with success as an efficient computational and statistical device for large scale machine learning~\cite{bottou2008tradeoffs}. Yet, since the emergence of deep-neural networks (DNNs), some new lights have been shed on the use of the algorithm: among other, the role of SGD's noise in the good generalization performances of DNNs~\cite{zhang2021understanding} has been described empirically~\cite{xing2018walk}, while its particular shape of covariance is expected to play a predominant role is its dynamics~\cite{andriushchenko2023sgd}. 

In this direction, \cite{li2019stochastic} proposed to approximate the SGD dynamics thanks to a Stochastic Differential Equation (SDE) whose noise's covariance matches the one of SGD. Since then, many works have leveraged this continuous perspective, attempting to better describe some of SGD's phenomenology: among other, the role of step-size~\cite{haochen2021shape}, the escape time and direction from local minimizers~\cite{xie2020diffusion}, study of the invariant distribution~\cite{chaudhari2018stochastic} or the heavy tail phenomenon~\cite{gurbuzbalaban2021heavy}. However, lured by the strong analytical tools that offer SDE models, some misconceptions on the basic nature of SGD's noise \textit{in the machine learning context} have also emerged. Among other, one important message of \cite{wojtowytsch2023stochastic} is to recall that SGD's noise has a particular shape and intensity that make SGD far from a \textit{Langevin} type of dynamics where isotropic noise is added to the gradient at each step. Notably, if the data model can be interpolated perfectly, the invariant measure of SGD could be largely degenerate without need of step-size decay: this is the case in the overparametrized regime~\cite{berthier2020tight,varre2021last}. 

In this article, we take a step back from general purpose studies of SGD and focus on the specific case of least-squares, where the predictor is a linear function of fixed features. Needless to say that linear predictors do not lead to state-of-the art performance in most modern tasks, yet, the abundant literature on the \textit{neural tangent kernel}~\cite{jacot2018neural} recalled us that we have still to understand better linear models. Moreover, even if the article is not written within the setting of \textit{kernel methods} for the sake of simplicity, every assumption is made in order to easily adapt to this setting, replacing $\R^d$ equipped with its canonical Euclidean structure by an abstract reproducing kernel Hilbert space. 

\paragraph{Purpose and contributions.} The aim of the present paper is to show that, with a proper model, SDEs offer nice analytical tools that help to clean the analysis while capturing the qualitative as well as the quantitative essence of SGD's dynamics. We present a wide range of results on both online and empirical SGD, demonstrating through quantitative analysis that the key difference lies in the model's ability to achieve perfect interpolation. We also present systemically non-asymptotic rates of convergence in all the settings we present, either in $\ell_2$-norm in the interpolation regime (Theorems \ref{Th:convergConst} and \ref{Th:convergConstpop}), or in Wasserstein distance (towards the invariant distribution) in the case of a noisy system (Theorems \ref{thm:convergence_noisy} and \ref{thm:convergence_noisy_pop}). In the latter case, we further investigate the invariant distribution: we pinpoint its location (Proposition \ref{prop:localization}), and more importantly, we demonstrate in Proposition \ref{prop:momentexplo} that although there is no heavy tail phenomenon in finite time, it emerges asymptotically. We finally address convergence of \textit{variance reduction techniques} like time-averaging (Proposition~\ref{prop:ergodic}) and decay of step-size (Proposition \ref{prop:stepsizedecay}). Throughout the article, we try to present some theoretical background on the study of SDEs: among other the use of Lyapunov potentials and of coupling methods that enables to study quantitatively the speed of convergence to equilibrium. 

\paragraph{Further related work.} Formal links between the true stochastic gradient descent and its continuous model are studied in \cite{li2019stochastic} on the theoretical side, where a weak error analysis is provided, and in \cite{li2021validity} on the experimental side. In a similar article in spirit, \cite{fontaine2021convergence} provides convergence results on general convex function, but \textit{with a systematic polynomial step-size decay}. Some results implying the use of SDEs to study the influence of the noise on the \textit{implicit bias} the algorithm are given in the least-squares case in \cite{ali2020implicit}, and is an active topic in general~\cite{ziyin2021strength,wu2018sgd,pesme2021implicit,vivien2022label}. Quantitative studies of the invariant distribution focusing on the particular shape and intensity of the noise covariance include \cite{mori2022power,wojtowytsch2024stochastic}. Note that when the step-size is properly re-scaled with respect to the dimension, SGD converges in the high-dimensional limite to a SDE that is similar to the one we study here~\cite{ben2022high}, which is called \textit{homogenized SGD} in the least-square context~\cite{paquette2022homogenization}. Power laws of convergence toward stationarity related to the eigenvalue decay of the covariance matrix (capacity condition) are ubiquitous in statistics~\cite{caponnetto2007optimal} and the study of SGD for least-squares \cite{dieuleveut2016, berthier2020tight,pillaud2018statistical,cui2021generalization,bordelon2021learning}. Finally, the \textit{heavy-tail phenomenon}, has been re-discovered lately as an interesting feature of SGD with multiplicative noise~\cite{hodgkinson2021multiplicative,gurbuzbalaban2021heavy} and more recently in the context of SDE in the work of~\cite{jiao2024emergence}.

\paragraph{Organisation of the paper.} In Section~\ref{sec:set-up}, we present the general set-up of SGD both in the population and empirical cases as well as the possibility of the data \textit{in both cases} to be fully interpolated. Section~\ref{sec:SDE} explains the relevance of building a consistent SDE model of SGD and recall technical details related to SDEs. In Section~\ref{SGD_train}, the results concerning the dynamics of SGD in the training case are given, both in the interpolation regime (Section~\ref{subsec:SGD_noiseless}) and the noisy regime (Section ~\ref{subsec:SGD_noisy}). Section~\ref{SGD_online} is built similarly to the previous one and is devoted to online SGD. The proofs are postponed to the Appendix, for which precise references can be found in the main text.

\section{Set-up: Stochastic Gradient Descent on Least Squares Problems}
\label{sec:set-up}

In this section, we introduce the least-square problem that we consider throughout the article, putting emphasis on the difference between empirical distributions (training loss) and true ones (population loss). Nonetheless, the central argument of the article is that, aside from the known differences between empirical and test cases, the primary qualitative distinction hinges on whether the possibility of loss can be zero (interpolation), irrespective of the discrete nature of the distribution. The stochastic gradient descent is introduced at the end of the section.    

\subsection{The least-square problem: population and empirical losses.}

We consider a regression problem with input/output pair $(X,Y) \in \R^d \times \R$ distributed according to a joint probability law $\rho \in \mathcal{P}(\R^d \times \R)$. To learn the rule linking inputs to outputs, we take a linear family of predictors $\{ f_\theta : x \mapsto \langle \theta, x \rangle ,\, \theta \in \R^d\} $ and aim at minimizing the average \textit{square loss} on the penalty $\ell(\theta, (X, Y)): = \frac{1}{2}\big(\left\langle \theta, X \right\rangle - Y \big)^2 $
\begin{align}
\label{eq:population_loss}
    L(\theta) := \frac{1}{2} \E_{(X,Y)\sim \rho}\left[ \big(\left\langle \theta, X \right\rangle - Y \big)^2 \right]~.
\end{align}
\begin{remark}[Link with RKHS]
    Here, the family of predictor consists in linear functions of the input data $x \in \R^d$. While for the sake of clarity, we'll keep this linear family throughout the article, note that the same results apply for any family of linear predictors in an abstract reproducing kernel Hilbert space $(\mathcal{H}, \langle \cdot ,\cdot \rangle_{\mathcal{H}} )$, with feature $\R^d \ni x \mapsto \varphi(x) \in \mathcal{H} $, changing the dot product and family of predictors with the natural structure: $\{ f_\theta : x \mapsto \langle \theta, \varphi(x) \rangle_{\mathcal{H}},\, \theta \in \mathcal{H} \}$.
\end{remark}
In this article, in order to put emphasis on the possible difference between the two settings, we make a clear distinction between the \textit{population} and the \textit{empirical} cases. To keep notations simple, we refer to the population case whenever $\Omega:=\supp{(\rho)}$ is an open set of $\R^d$. On the contrary, we refer to the empirical case when $\rho$ is a finite sum of atomic measures, i.e. there exist $(x_1, y_1), \dots, (x_n, y_n) \in \R^d \times \R$ such that $\rho = \frac{1}{n} \sum_{i=1}^n \delta_{(x_i, y_i)}$. Obviously, the term empirical refers to the fact that, in this case, Eq.\eqref{eq:population_loss} can be seen as the training loss
\begin{align}
\label{eq:training_loss}
    L(\theta)=\frac{1}{2n} \sum_{i = 1}^n \left[ \big(\left\langle \theta, x_i \right\rangle - y_i \big)^2 \right]~.
\end{align}
In this case, the number $n \in \N^*$ will always denote the number of \textit{observed inputs/outputs pairs} $(x_i, y_i)_{i = 1 \dots n}$, and we will use the notations $\X = [x_1, \dots, x_n]^\top \in \R^{n \times d}$ to denote the design matrix, and $\y = (y_1, \dots, y_n)^\top \in \R^n$ to denote the output vector. With these notations, the training loss rewrites $L(\theta)=\frac{1}{2n} ||\X \theta - \y||_2^2$.

Note that even though for the sake of clarity we will sometimes treat them separately, these cases fit into the same framework and notations.

\subsection{Noisy and noiseless settings.}

For this, let us introduce the \textit{set of interpolators}:
\begin{align}
\label{eq:interpolators}
    \I = \left\{ \theta \in \R^d, \text{ such that } L(\theta) = 0 \right\}.
\end{align}
We distinguish between the two following settings.
\paragraph{Noisy setting $\boldsymbol{\I} \boldsymbol{=} \boldsymbol{\emptyset}$.} In this case, the loss $L$ is strictly lower bounded, i.e. there exists $\sigma > 0$ such that $L(\theta) \geq \sigma^2$ for all $\theta \in \R^d$. The two following examples show that this situation typically arises for different reasons in the population and the empirical cases. In the former, this is an effect of the noise model on the output, whereas in the latter, this is only a consequence of the underparametrization. 
\begin{example}[Noisy model]
\label{ex:pop_noisy}
Assume that there exists $\theta_* \in \R^d$ and $\xi \in \R$ a random variable independent of $X$ of mean zero and variance $2\sigma^2$, such that $Y = \langle \theta_*, X \rangle + \xi$. Then
\begin{align*}
    L(\theta)=\frac{1}{2} \left\| \Sigma^{1/2} (\theta - \theta_*) \right\|^2 + \frac{1}{2}\E(\xi^2) \geq \sigma^2~,
\end{align*}
where we use the notation for the input covariance matrix: $\Sigma:= \E_\rho [XX^\top] \in \R^{d \times d}$. 
\end{example}
\begin{example}[Underparametrized setting]
Consider the underparametrized regime for which $n > d$, and assume that each i.i.d. input/output couple $(x_i, y_i)_{i = 1 \hdots n}$ come from independent distributions that have densities (i.e. for all $ i \in \llbracket 1, n\rrbracket$, $x_i$ and $y_i$ are independent and are distributed according to laws that are absolutely continuous with respect to Lebesgue). Then, we have that almost surely $\I = \emptyset$.
\end{example}
\paragraph{Noiseless setting $\boldsymbol{\I} \boldsymbol{\neq} \boldsymbol{\emptyset}$.} This means that there exists at least one perfect linear interpolator of the model. In the population setting, this corresponds to the strong assumption that the model is well-specified and noiseless (formally $\xi = 0$, if we refer to Example~\ref{ex:pop_noisy}). Notably, this regime has received a large attention recently to model the large expressive power of neural network~\cite{ali2020implicit,berthier2020tight,vaswani2019fast}. Yet, in the empirical case, this typically and simply corresponds to the overparametrized regime~$d \geq n$.  
\begin{example}[Overparametrized setting]
Consider the overparametrized regime for which $d \geq n$, and assume that $((x_1, y_1), \dots, (x_n, y_n))$ are i.i.d. samples drawn from a distribution that has a density. Then, we have that, the zero-loss set is the affine set  $\I = \theta_* +\, \ker(\X)$, where $\theta_*$ is any element of $\I$. Furthermore, $\mathrm{dim}\, \I \geq d -n\geq 0 $ almost surely.
\end{example}

\subsection{The stochastic gradient descent}
The stochastic gradient descent (SGD) aims at minimizing a function through unbiased estimates of its gradient. While this method has been developed for a different purpose in the early 50's~\cite{robbins1951}, it is remarkable how SGD fits perfectly the modern large-scale machine learning framework~\cite{bottou2018optimization}. Indeed, the SGD iterative procedure corresponds to sample at each time $t \in \N^*$ an independent draw $(x_t, y_t)\sim \rho$ and update the predictor $\theta$ with respect to the local gradient calculated on this sample:
\begin{align}
    \theta_{t + 1} = \theta_t - \gamma \nabla_\theta \ell(\theta_t, (x_t, y_t)), 
\end{align}
where $\gamma > 0$ is the step size. 
Even if the population and empirical cases fall under the same general framework of stochastic approximation, let us detail the setups for these.
\paragraph{The population case: online SGD.} The population case corresponds to what is refer to as \textit{online SGD} in the literature~\cite{bottou2008tradeoffs}. In this case, for each time $t \in \N^*$, we have an independent draw $(x_t, y_t)\sim \rho$, and if we denote $\mathcal{F}_t = ((x_k, y_k),\, k \leq t)$ the natural adapted filtration, then $\E[\nabla_\theta \ell(\theta_t, (x_t, y_t))|\mathcal{F}_{t-1}] = \nabla L(\theta_t)$. That is $ \nabla_\theta \ell(\theta_t, (x_t, y_t))$ is an unbiased estimate of the true gradient of the risk and the recursion writes in explicit form
\begin{align}
\label{eq:SGD_test}
    \theta_{t + 1} = \theta_t - \gamma \left( \langle \theta_t, x_t \rangle - y_t \right) x_t.
\end{align}
Remark that even though at iteration~$t$, it has seen only $t$ samples, the SGD recursion directly optimizes the population loss~$L$. This is due to the fact that as long the recursion goes, we have access to \textit{fresh samples} from the distribution $\rho$. This can be seen informally as the case $n=+\infty$ and is in contrast to the finite~$n $ empirical case. A convenient way to present this dynamics is to force the apparition of the true gradient and rewrite the rest as a martingale increment. Indeed Eq.~\eqref{eq:SGD_test} reads
\begin{align}
\label{eq:SGD_test_martigale_form}
    \theta_{t + 1} &= \theta_t - \gamma \nabla_\theta L(\theta_t) + \gamma  \left(\nabla_\theta L(\theta_t) -\left( \langle \theta_t, x_t \rangle - y_t \right) x_t \right)\nonumber  \\
     &= \theta_t - \gamma \E_\rho\left[ \left( \langle \theta_t, X \rangle - Y \right) X \right] + \gamma m(\theta_t, (x_t, y_t))~,
\end{align}
where $m(\theta_t, (x_t, y_t)) := \E_\rho\left[ \left( \langle \theta_t, X \rangle - Y \right) X \right] - \left( \langle \theta_t, x_t \rangle - y_t \right) x_t$.

\paragraph{The empirical case: training SGD.} In this case, the main difference is that we consider a finite number of samples so that the SGD recursion will iterate over them uniformly at random. Mathematically, this corresponds to take at each time $t \in \N^*$ an independent sample of the uniform distribution~$i_t \sim \Unif(\{1, \dots,n\})$ and consider the unbiased estimate $\nabla_\theta \ell(\theta_t, (x_{i_t}, y_{i_t}))$ of the true gradient of the training loss Eq.\eqref{eq:training_loss}. For this, the adapted filtration writes $\mathcal{F}_t = ((x_k, y_k)_{k\leq n}, (i_k)_{k\leq t})$ and the recursion reads:
\begin{align}
\label{eq:SGD_train}
    \theta_{t + 1} = \theta_t - \gamma \left( \langle \theta_t, x_{i_t} \rangle - y_{i_t} \right) x_{i_t}.
\end{align}
Despite the fact that this dynamics looks very similar, the important difference is that, the batch of samples being fixed, the dynamics can select several times the same pair $(x_k, y_k)$. This is the reason why, informally after $t \geq \Theta(n)$ iterations, the training dynamics Eq.\eqref{eq:SGD_train} will deviate from the online SGD presented in Eq.\eqref{eq:SGD_test}. To end this section, let us also rewrite this dynamics with respect to the gradient descent plus the martingale increments:
\begin{align}
\label{eq:SGD_train_martigale_form}
    \theta_{t + 1} &= \theta_t - \gamma \nabla_\theta L(\theta_t) + \gamma \left( \nabla_\theta L(\theta_t) - \left( \langle \theta_t, x_{i_t} \rangle - y_{i_t} \right) x_{i_t} \right) \nonumber  \\
     &= \theta_t - \gamma \frac{1}{n}\sum_{i = 1}^n \left( \langle \theta_t, x_i \rangle - y_i \right) x_i  + \gamma m(\theta_t, (x_{i_t}, y_{i_t}))~,
\end{align}
where $m(\theta_t, (x_t, y_t)) := \frac{1}{n}\sum_{i = 1}^n \left( \langle \theta_t, x_i \rangle - y_i \right) x_i - \left( \langle \theta_t, x_{i_t} \rangle - y_{i_t} \right) x_{i_t}$. Note that these equations are in fact exactly the same as the ones presented in the population case, simply considering that the distribution is an empirical one, i.e. $\rho = \frac{1}{n}\sum_{i = 1}^n \delta_{(x_i, y_i)}$. We nonetheless decided to present them explicitly for the sake of clearness.

\section{Continuous model of SGD}
\label{sec:SDE}

In this section we give stochastic differential equations (SDEs) models for SGD. In a first time, we provide general necessary conditions to model well SGD. We instantiate them more precisely in a second time.

\subsection{The requirement of a SDE model}

As said above, the decomposition of the SGD recursion as in equations~\eqref{eq:SGD_test_martigale_form},\eqref{eq:SGD_train_martigale_form} is a generic feature of the stochastic descent~\cite{benaim2006dynamics}, as it has led to the celebrated \textit{ODE method}~\cite{harold1997stochastic} to study stochastic approximation of this type. Going further, this decomposition between \textit{a drift} $\nabla L$ and \textit{local martingale term} $m(\cdot, (x,y))$ is reminiscent of the decomposition occuring for Itô processes, i.e. solution to a SDE of the type 
\begin{align}
    \label{eq:general_SDE}
    d \theta_t = b(t, \theta_t) dt + \sqrt{\gamma}\sigma(t, \theta_t) d B_t,
\end{align}
where $(B_t)_{t\geq 0}$ is a Brownian motion of $\R^d$. These types of models have been largely studied in the literature~\cite{khasminskii2011stochastic}, as, beyond their large modelling abilities, they offer useful tools, e.g. Itô calculus, when it comes to mathematical analysis. One of the aim of this article is to link the SGD dynamics to processes that are exemplary in the SDE literature. In order to establish this link, we first have to answer the following question
\vspace{-0.05cm}
\begin{center}
    \textit{What SDE model fits well with the SGD dynamics? \\ $\ $ \vspace{-0.1cm}}
\end{center}
This question has received a large attention in the last decade, and a good principle to answer this is to turn to \textit{stochastic modified equations}~\cite{li2019stochastic}. This is a natural way to build models of SDE since they are consistent in the infinitesimal step-size limit with SGD. In order to build such model, there are two requirements:
\begin{enumerate}[label=(\roman*)]
    \item The drift term $b(t, \theta_t)$ should match $-\nabla L(\theta_t)$.
    \item The noise factor $\sigma$ should have the same covariance as the local martingale $m$, i.e.
    \begin{align}
        \sigma(t, \theta_t) \sigma(t, \theta_t)^\top = \E_\rho \left[ m(\theta_t, (x_t, y_t)) m(\theta_t, (x_t, y_t))^\top\, \big|\, \mathcal{F}_{t-1} \right].
    \end{align}
\end{enumerate}
Besides technical assumptions, these are the two requirements presented in~\cite[Theorem 3]{li2019stochastic} to show that the SDE model is \textit{consistent} in the small step-size limit with the SGD recursion. Going beyond the approximation concerns tackled by (i) and (ii), it has recently been observed that the SGD noise carries a specific shape that the SDE model should carry as well~\cite{wojtowytsch2023stochastic,pesme2021implicit}. This requirement has a more \textit{qualitative nature} but is important to fully capture the essence of the SGD dynamics 
\begin{enumerate}[label=(\roman*)]
    \item[(iii)] The noise term $\sigma(t, \theta_t) dB_t$ should span the same space as $m(\theta_t, (x_t, y_t))$.  
\end{enumerate}

We will see below, that, in order to build a SDE model of SGD, this third requirement is particularly important in the empirical case, where the noise a strong degeneracy. 

\subsection{Explicit form of the SDE models}

Let us first write explicitly the multiplicative noise factor $\sigma(t, \theta_t)$ in the population and empirical cases. Then, we derive the expression of the SDE models that we analyze later. 
\paragraph{The population case.}
In the population case, the calculation has already been made in~\cite{ali2020implicit} and, defining $r_{X}(\theta) = \langle \theta_t, X \rangle - Y \in \R $, the residual random variable, then we have that 
    \begin{align*}
        \sigma(t, \theta_t) \sigma(t, \theta_t)^\top =  \left( \E_\rho \left[ r_{X}(\theta_t)^2 X X^\top \right] - \E_\rho \left[ r_{X}(\theta_t) X\right] \E_\rho \left[ r_{X}(\theta_t) X\right]^\top \right).
    \end{align*}
In this case, there is no geometric specificity of the noise and we choose to $\sigma(t, \theta_t)$ as the PSD square root of the right hand side, that is, we define, for all $\theta \in \R^d$, 
    \begin{align}
    \label{eq:def_sigma_population}
        \sigma(\theta) := \left(\E_\rho \left[ r_{X}(\theta)^2 X X^\top \right] - \E_\rho \left[ r_{X}(\theta) X\right] \E_\rho \left[ r_{X}(\theta) X\right]^\top \right)^{1/2} \in \R^{d \times d},
    \end{align}
and we study the following SDE model:
\begin{align}
    \label{eq:population_SDE}
    d \theta_t = - \E_\rho \left[ \left( \langle \theta_t, X \rangle - Y \right) X \right] dt + \sqrt{\gamma} \sigma(\theta_t) d B_t,
\end{align}
where $(B_t)_{t\geq 0}$ is a Brownian motion of $\R^d$ and $\sigma \in \R^{d\times d}$ is given by \eqref{eq:def_sigma_population}.

\paragraph{The empirical case.}

The empirical case is, from the geometric perspective, a bit more subtle. Indeed, as can be seen directly in Eq.\eqref{eq:SGD_train}, the iterates of SGD stay in the low-dimenional space of dimension at most $n$: $\theta_0\, +\, \vspan(x_1, \dots, x_n) \subset \R^d$. Hence, it is primordial for a good SDE model that the noise carries this degenerate structure. Let us apply for the SDE noise factor calculation. Let us define $r_{\mathsf{x}}(\theta) := ( \langle \theta_t, x_1 \rangle - y_1, \dots, \langle \theta_t, x_n \rangle - y_n)^\top \in \R^n $, the vector of the residuals, then  
    \begin{align*}
        \sigma(t, \theta_t) \sigma(t, \theta_t)^\top & =  \left( \frac{1}{n} \X^\top\,\diag(r_{\mathsf{x}}(\theta_t))^2\X - \frac{1}{n^2}\X^\top\, r_{\mathsf{x}}(\theta_t)\, r_{\mathsf{x}}(\theta_t)^\top\X \right) \\
        & = \frac{1}{n}\X^\top\, \left(\diag(r_{\mathsf{x}}(\theta_t))^2 - \frac{1}{n}r_{\mathsf{x}}(\theta_t)\, r_{\mathsf{x}}(\theta_t)^\top \right) \X \\
        & = \frac{1}{n}\X^\top\, \left(\diag(r_{\mathsf{x}}(\theta_t)) - \frac{1}{n} r_{\mathsf{x}}(\theta_t)\mathds{1}^\top \right) \left(\diag(r_{\mathsf{x}}(\theta_t)) - \frac{1}{n} r_{\mathsf{x}}(\theta_t)\mathds{1}^\top \right)^\top \X~,
    \end{align*}
where for any $\theta \in \R^d$, we define $\diag(\theta) \in \R^{d \times d}$ as the diagonal matrix with the coordinates of $\theta$ as diagonal entries and $\mathds{1} \in \R^n$ the vector of ones.  
In the case where $n < d$, the overall covariance matrix is degenerate as its rank is at most~$n$: this is because the ``noise of SGD'' belong naturally to $\mathrm{span}(x_1, \hdots, x_n)$. Hence, it is important to keep this degeneracy and this is the reason why we choose \textit{the} square root that preserves it:
    \begin{align*}
        \sigma(\theta_t) := \frac{1}{\sqrt{n}}  \X^\top R_\x (\theta_t) \in \R^{d \times n},
    \end{align*}
where we defined $R_\x (\theta) := \diag(r_{\mathsf{x}}(\theta_t)) - \frac{1}{n} r_{\mathsf{x}}(\theta_t)\mathds{1}^\top \in \R^{n \times n}$. With these notations, we study the following SDE model:
\begin{align}
    \label{eq:train_SDE}
    d \theta_t = - \frac{1}{n}\X^\top(\X \theta_t - \y) dt + \sqrt{\frac{\gamma}{n}}  \X^\top R_\x (\theta_t) d B_t,
\end{align}
where $(B_t)_{t\geq 0}$ is a Brownian motion of $\R^n$.
\paragraph{Initial condition and moments.} Initialization will be taken at some $\theta_0 \in \R^d$ which can be considered as a random variable. Standard choices for the law of $\theta_0$ include the standard Gaussian of $\R^d$, or a dirac measure on some vector $\theta_0$, e.g. $\theta_0 = 0$. Either cases, its law $\rho_0$ has moments of all orders, and since drift and multiplicative noise of the SDEs \eqref{eq:population_SDE}-\eqref{eq:train_SDE} have at most linear growth, Theorem 3.5 of \cite{khasminskii2011stochastic} shows that the marginal laws $(\rho_t)_{t \geq 0}$ all have moments at all orders at any time $t \geq 0$. 

Now that we motivated the SDE models in the previous part, and stated them in Eqs.\eqref{eq:population_SDE}-\eqref{eq:train_SDE}, we study their convergence property. This is the purpose of the main results that are stated in the two following sections. 

\section{SGD on the training loss}
\label{SGD_train}
Recall that this corresponds to the finite data set case with $n$ input/output pairs $(x_i, y_i)_{i = 1\hdots n}$, stacked into data matrix $\X \in \R^{n \times d}$ and data output vector $\y \in \R^d$. We study in this section the SDE~given in equation~\eqref{eq:train_SDE}.
%
%
We assume that all data are bounded, i.e. there is some $\K > 0$ such that $\|x_i\|^2 \leq \K$  for all $i \in \llbracket 1, n\rrbracket$. Let us introduce first an important element of the model. In this section, we define
\begin{definition}
    Let $\X^\dagger$ denote the \textit{pseudo-inverse} of the design matrix $\X$. We define 
    \begin{align}
    \label{eq:least_square_estimator}
        \theta_* = \X^\dagger \y~ + (I - \X^\dagger \X) \theta_0~. 
    \end{align}
\end{definition}
Note that, in generic cases, in the underparametrized case for which $d \leq n$, we have  $\X^\dagger =  (\X^\top \X)^{-1} \X^\top$. Then, $\X^\dagger \X = I$, and $\theta_* = \X^\dagger y$ is the \textit{Ordinary least-square estimator} and does not depend on $\theta_0$. In the generic overparameterized case for which $n \leq d$, we have $\X^\dagger = \X^\top (\X \X^\top)^{-1} $, and hence $\X \theta_* = \X\X^\dagger \y + (\X - \X \X^\dagger \X) \theta_0 = \y $, that is to say that $\theta_* \in \mathcal{I}$. 

Finally, for both cases, we define $\Sigma := \frac{1}{n}\X^\top \X \in \R^{d \times d}$ the design covariance matrix. With this notation, the SDE becomes
\begin{align*}
    d \theta_t = - \Sigma(\theta_t - \theta_*) dt + \sqrt{\frac{\gamma}{n}}  \X^\top R_\x (\theta_t) d B_t,
\end{align*}
where $(B_t)_{t\geq 0}$ is a Brownian motion of $\R^n$.
\paragraph{Comparison with standard processes.} As already said, the dynamics in the noiseless and noisy cases are of different natures because of the possibility to cancel the multiplicative noise term. Indeed, for clarity imagine that $n = d$ and $\X / \sqrt{n} = \Sigma = I_d$. If the noise can cancel, $R_\x (\theta_t)$ has the shape of a linear term like $\diag(\theta - \theta_*)$, and each the coordinate of the difference $\eta = \theta - \theta_* $ follows a one-dimensional \textit{Geometric Brownian motion}
$
    d\eta_t = - \eta_t dt + \sqrt{\gamma} \eta_t d B_t~,
$
for which it is known that $\eta_t \to 0$ almost surely, which corresponds to $\theta_t \to \theta_*$.
This comparison is the governing principle of the analysis of this setup.
Otherwise, if the noise cannot cancel and is strictly lower bounded, then, under the same proxy, the movement of each coordinate of $\eta$ resembles the SDE  
$
    d\eta_t = - \eta_t dt + \sqrt{\gamma} \sqrt{\eta_t^2 + \sigma^2} d B_t
$
which looks like a Ornstein-Uhlenbeck process if $\eta \ll \sigma$, but with a noise that has a multiplicative part $\eta_t d B_t$ when $\eta \gg \sigma$. These are known in one dimension under the name of \textit{Pearsons diffusions}~\cite{forman2008pearson} and exhibit stationary distribution with heavy tails.

The difference between these two settings is the reason why we divide the results into two different subsections.

\subsection{The noiseless case}
\label{subsec:SGD_noiseless}

In this section, we let $(\theta_t)_{t \geq 0}$ follow the dynamics given by Eq.\eqref{eq:train_SDE} initialized at $\theta_0 \in \R^d$ and assume that $\ker(\X) \neq \{0\}$ and  that $\mathcal{I} \neq \emptyset$. Note that this generically occurs in the overparametrized case for which $n \leq d$. Recall that we have defined $\theta_* = \X^\dagger \y~ + (I - \X^\dagger \X) \theta_0$ in Eq.\eqref{eq:least_square_estimator}, we present now a geometric interpretation of $\theta_*$ in this case,
\begin{lemma}
\label{lem:projection}
    The vector $\theta_*$ is the orthogonal projection of $\theta_0$ into $\I$, that is 
    \begin{align}
        \theta_* = \underset{\X \theta = y}{\mathrm{argmin}} \, \, \|\theta - \theta_0\|^2~.
    \end{align}
\end{lemma}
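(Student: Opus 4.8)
The plan is to show that $\theta_*$ as defined in Eq.~\eqref{eq:least_square_estimator} coincides with the minimum-norm solution (relative to $\theta_0$) of the interpolation constraints $\X\theta = \y$. Since we are in the case $\mathcal I \neq \emptyset$, the constraint set $\{\theta : \X\theta = \y\}$ equals $\mathcal I$ and is a non-empty affine subspace, so the strictly convex problem $\min_{\X\theta = \y}\|\theta - \theta_0\|^2$ has a unique minimizer, and it suffices to verify that $\theta_*$ satisfies the two defining conditions: (a) feasibility, $\X\theta_* = \y$, and (b) the optimality/orthogonality condition $\theta_* - \theta_0 \in \ker(\X)^\perp = \mathrm{range}(\X^\top)$.

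First I would check feasibility. Writing $\theta_* - \theta_0 = \X^\dagger\y + (I - \X^\dagger\X)\theta_0 - \theta_0 = \X^\dagger\y - \X^\dagger\X\theta_0 = \X^\dagger(\y - \X\theta_0)$, I get $\X\theta_* - \X\theta_0 = \X\X^\dagger(\y - \X\theta_0)$. Now $\X\X^\dagger$ is the orthogonal projector onto $\mathrm{range}(\X)$; since $\mathcal I \neq \emptyset$ there is some $\bar\theta$ with $\X\bar\theta = \y$, hence $\y \in \mathrm{range}(\X)$ and also $\X\theta_0 \in \mathrm{range}(\X)$, so $\y - \X\theta_0 \in \mathrm{range}(\X)$ and $\X\X^\dagger(\y - \X\theta_0) = \y - \X\theta_0$. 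Therefore $\X\theta_* = \y$, i.e. $\theta_* \in \mathcal I$.

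Next I would check the orthogonality condition. From the identity just derived, $\theta_* - \theta_0 = \X^\dagger(\y - \X\theta_0) \in \mathrm{range}(\X^\dagger) = \mathrm{range}(\X^\top)$, and $\mathrm{range}(\X^\top) = \ker(\X)^\perp$. For any feasible $\theta$ (so $\theta - \theta_* \in \ker(\X)$ since both interpolate) we then have $\|\theta - \theta_0\|^2 = \|(\theta - \theta_*) + (\theta_* - \theta_0)\|^2 = \|\theta - \theta_*\|^2 + \|\theta_* - \theta_0\|^2 \geq \|\theta_* - \theta_0\|^2$, with equality iff $\theta = \theta_*$. This establishes that $\theta_*$ is the unique minimizer, which is the claim. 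The only mild subtlety — and the step I would be most careful about — is the use of $\mathcal I \neq \emptyset$ to guarantee $\y \in \mathrm{range}(\X)$, which is exactly what makes $\X\X^\dagger$ act as the identity on $\y - \X\theta_0$; without that assumption the stated formula would instead give the least-squares projection of $\theta_0$ onto the set of minimizers of the residual, not an exact interpolator.
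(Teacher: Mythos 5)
Your proof is correct, and it takes a genuinely different (reversed) route from the paper's. The paper derives the formula from the optimization problem: it invokes KKT conditions, introduces a Lagrange multiplier $\lambda \in \R^n$ with $\theta_* - \theta_0 = \X^\top\lambda$ and $\X\theta_* = \y$, solves $\lambda = (\X\X^\top)^{-1}(\y - \X\theta_0)$, and checks the result matches Eq.~\eqref{eq:least_square_estimator}. You instead start from Eq.~\eqref{eq:least_square_estimator} and verify directly that $\theta_*$ is feasible and that $\theta_* - \theta_0 \in \mathrm{range}(\X^\top) = \ker(\X)^\perp$, then invoke Pythagoras on the affine constraint set to get uniqueness of the minimizer. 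Your argument is slightly more general, and worth noting: the paper's computation of $\lambda$ requires $\X\X^\top$ to be invertible (i.e., full row rank, which is the \emph{generic} overparametrized situation the paper has in mind), whereas your argument only needs $\y \in \mathrm{range}(\X)$, which is exactly $\mathcal I \neq \emptyset$, and relies on the standard fact that $\X\X^\dagger$ is the orthogonal projector onto $\mathrm{range}(\X)$. Both arguments are short and standard; yours avoids the implicit rank assumption, while the paper's has the pedagogical advantage of \emph{deriving} the formula rather than merely verifying it.
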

The proof is deferred to Section~\ref{appsec:noiseless_train} of the Appendix.
Note that this fact is often known as the implicit bias of least-squares methods and has statistical consequences studied under the name of ``benign overfitting''~\cite{bartlett2020benign,bartlett2021deep}.

Remarkably, despite the randomness of the SDE~\eqref{eq:train_SDE}, the convergence is almost sure towards~$\theta_*$, with explicit rates that we show below.

\begin{theorem} \label{Th:convergConst}
    Let $(\theta_t)_{t \geq 0}$ follows the dynamics given by Eq.\eqref{eq:train_SDE} initialized at $\theta_0 \in \R^d$, then for $\gamma <  \frac{1}{3K}$, we have that $(\theta_t)_{t \geq 0} $ converges almost surely to $\theta_*$ with the following rates
    \begin{enumerate}[label=\bfseries(\roman*)]
        \item {\bfseries Parametric rate.} For all $ t \geq 0$, we have that
        \begin{align}
            \E[\|\theta_t - \theta_*\|^2] \leq \|\theta_0 - \theta_*\|^2 e^{-\mu (2-\K\gamma)  t},
        \end{align}
         where $\mu>0$ is the smallest non zero eigenvalue of $\Sigma$. 
        \item {\bfseries Non-parametric rate.} For all $ t \geq 0$, we have that for all $\alpha > 0$, 
        \begin{align}
            \E[\|\theta_t - \theta_*\|^2] \leq \left[ \frac{1}{\| \theta_0 - \theta_*\|^{-2/\alpha} + \Consa t}\right]^{\alpha}~,
        \end{align} 
        where
        $ \Consa =\frac{1}{2\alpha}( \langle \theta_0  - \theta_*,  \Sigma^{-\alpha} (\theta_0  - \theta_*)   \rangle + \frac{\gamma \K_{\alpha}}{2-\K \gamma } \|\theta_0 - \theta_*\|^2)^{-1/\alpha}$, and  $K_{\alpha}=\max_{i\leq n}\langle x_i, \Sigma^{-\alpha} x_i \rangle.$
    \end{enumerate}
\end{theorem}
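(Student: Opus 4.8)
The plan is to work throughout with the centered process $\eta_t:=\theta_t-\theta_*$. Since $\mathcal I\neq\emptyset$ is equivalent to $\y\in\mathrm{range}(\X)$, one has $\X\theta_*=\X\X^\dagger\y=\y$, hence the residual vector $r_{\mathsf x}(\theta_t)=\X\theta_t-\y=\X\eta_t$ is \emph{linear} in $\eta_t$, and \eqref{eq:train_SDE} becomes $d\eta_t=-\Sigma\eta_t\,dt+\sqrt{\gamma/n}\,\X^\top R_\x(\theta_t)\,dB_t$ with $R_\x(\theta_t)=\diag(\X\eta_t)-\tfrac1n(\X\eta_t)\mathds 1^\top$. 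Both the drift and every column of the diffusion matrix lie in $E:=\mathrm{range}(\X^\top)=\mathrm{range}(\Sigma)$, and $\eta_0=\theta_0-\theta_*=\X^\dagger(\X\theta_0-\y)\in E$; therefore $\eta_t\in E$ for all $t$, almost surely. This is exactly what makes $\mu>0$ (the smallest nonzero eigenvalue of $\Sigma$), the powers $\Sigma^{-\alpha}$ restricted to $E$, and $K_\alpha=\max_i\langle x_i,\Sigma^{-\alpha}x_i\rangle$ all meaningful, and it gives $\langle\eta_t,\Sigma\eta_t\rangle\ge\mu\|\eta_t\|^2$.

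Next I would compute, for a PSD operator $M$ on $E$, the generator of $\eta\mapsto\langle\eta,M\eta\rangle$ along the SDE. Writing $r=\X\eta_t$, the identity $R_\x(\theta_t)R_\x(\theta_t)^\top=\diag(r)^2-\tfrac1n rr^\top$ produces the self-bounding estimate
\[
\tfrac{\gamma}{n}\,\mathrm{Tr}\!\big(M\,\X^\top R_\x(\theta_t)R_\x(\theta_t)^\top\X\big)=\tfrac{\gamma}{n}\Big(\textstyle\sum_i r_i^2\langle x_i,Mx_i\rangle-\tfrac1n\|M^{1/2}\X^\top r\|^2\Big)\le\tfrac{\gamma}{n}\big(\max_i\langle x_i,Mx_i\rangle\big)\|\X\eta_t\|^2,
\]
and, using $\|\X\eta_t\|^2=n\langle\eta_t,\Sigma\eta_t\rangle$ and killing the martingale part in expectation (legitimate by the moment bounds recalled after \eqref{eq:train_SDE}), I obtain for $a_t:=\E\|\eta_t\|^2$ the two differential inequalities $\dot a_t\le-(2-\gamma\K)\,\E\langle\eta_t,\Sigma\eta_t\rangle$ (take $M=I$, $\max_i\|x_i\|^2\le\K$) and $\tfrac{d}{dt}\E\langle\eta_t,\Sigma^{-\alpha}\eta_t\rangle\le-2\,\E\langle\eta_t,\Sigma^{1-\alpha}\eta_t\rangle+\gamma K_\alpha\,\E\langle\eta_t,\Sigma\eta_t\rangle$ (take $M=\Sigma^{-\alpha}$, use $\langle x_i,\Sigma^{-\alpha}x_i\rangle\le K_\alpha$). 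For \textbf{(i)}, combine the first with $\langle\eta_t,\Sigma\eta_t\rangle\ge\mu\|\eta_t\|^2$ and apply Grönwall. For almost sure convergence, the drift of $\|\eta_t\|^2$ equals $-2\langle\eta_t,\Sigma\eta_t\rangle+\tfrac\gamma n\mathrm{Tr}(\cdots)\le(\gamma\K-2)\langle\eta_t,\Sigma\eta_t\rangle\le0$ (here $\gamma<1/(3\K)$), so $\|\eta_t\|^2$ is a nonnegative supermartingale and converges a.s.\ to some $\ell\ge0$; since $\E\|\eta_t\|^2\to0$, Fatou forces $\ell=0$, i.e.\ $\theta_t\to\theta_*$ a.s.

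For \textbf{(ii)}, I would rewrite $\gamma K_\alpha\E\langle\eta_t,\Sigma\eta_t\rangle\le-\tfrac{\gamma K_\alpha}{2-\gamma\K}\dot a_t$ using the first inequality, so that $t\mapsto\E\langle\eta_t,\Sigma^{-\alpha}\eta_t\rangle+\tfrac{\gamma K_\alpha}{2-\gamma\K}a_t$ is non-increasing, whence $\E\langle\eta_t,\Sigma^{-\alpha}\eta_t\rangle\le W_0:=\langle\theta_0-\theta_*,\Sigma^{-\alpha}(\theta_0-\theta_*)\rangle+\tfrac{\gamma K_\alpha}{2-\gamma\K}\|\theta_0-\theta_*\|^2$ for all $t$. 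The spectral Hölder inequality $\|\eta\|^2\le\langle\eta,\Sigma\eta\rangle^{\alpha/(1+\alpha)}\langle\eta,\Sigma^{-\alpha}\eta\rangle^{1/(1+\alpha)}$ (applied to the spectral measure of $\eta$ w.r.t.\ $\Sigma$), followed by Hölder in $\omega$ with exponents $\tfrac{1+\alpha}{\alpha}$ and $1+\alpha$, gives $a_t\le(\E\langle\eta_t,\Sigma\eta_t\rangle)^{\alpha/(1+\alpha)}W_0^{1/(1+\alpha)}$, hence $\E\langle\eta_t,\Sigma\eta_t\rangle\ge a_t^{(1+\alpha)/\alpha}W_0^{-1/\alpha}$. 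Plugged into $\dot a_t\le-(2-\gamma\K)\E\langle\eta_t,\Sigma\eta_t\rangle$ this yields the scalar inequality $\dot a_t\le-(2-\gamma\K)W_0^{-1/\alpha}a_t^{1+1/\alpha}$; since then $\tfrac{d}{dt}a_t^{-1/\alpha}\ge\tfrac{2-\gamma\K}{\alpha}W_0^{-1/\alpha}\ge\tfrac1{2\alpha}W_0^{-1/\alpha}=\Consa$ (because $2-\gamma\K>\tfrac12$), integrating gives $a_t\le(a_0^{-1/\alpha}+\Consa t)^{-\alpha}$ with $a_0=\|\theta_0-\theta_*\|^2$, which is the claim.

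The parts I expect to be routine are Grönwall, the scalar ODE comparison, and the supermartingale/Fatou argument. The main obstacle, and the only place where the noiseless structure is genuinely used, is twofold: first, the algebraic reduction $R_\x R_\x^\top=\diag(r)^2-\tfrac1n rr^\top$ together with the self-bounding trace estimate, which converts the multiplicative noise into a term controlled by the drift's dissipation $\langle\eta_t,\Sigma\eta_t\rangle$ — so that, remarkably, a larger step-size only slows the rate and never destabilizes, as long as $\gamma<1/\K$; second, identifying the composite Lyapunov functional $\langle\cdot,\Sigma^{-\alpha}\cdot\rangle+\tfrac{\gamma K_\alpha}{2-\gamma\K}\|\cdot\|^2$, whose coefficient is tuned precisely so that the $\langle\eta_t,\Sigma\eta_t\rangle$ contributions cancel and one obtains the uniform-in-time control of $\E\langle\eta_t,\Sigma^{-\alpha}\eta_t\rangle$ that the interpolation inequality requires.
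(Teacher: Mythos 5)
Your proposal is correct and takes essentially the same approach as the paper: Itô formula plus the trace/self-bounding estimate feeding into Grönwall for (i), the spectral Hölder interpolation inequality combined with a uniform-in-time bound on $\E\langle\eta_t,\Sigma^{-\alpha}\eta_t\rangle$ for (ii), and the nonnegative supermartingale argument for almost-sure convergence. Your composite-Lyapunov framing of part (ii) (the functional $\E\langle\eta_t,\Sigma^{-\alpha}\eta_t\rangle+\tfrac{\gamma K_\alpha}{2-\gamma\K}\E\|\eta_t\|^2$ being non-increasing) is a cleaner packaging of the paper's arithmetic, which first bounds $\int_0^t\E L(\theta_u)\,du$ and then plugs that back in, but it is not a genuinely different route.
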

Hence, for all $ t \geq 0$, 
$$\displaystyle \E[\|\theta_t - \theta_*\|^2] \leq \min \left\{\|\theta_0 - \theta_*\|^2 e^{-\mu t},  \inf_{\alpha \in \R^+}\left[ \frac{1}{\| \theta_0 - \theta_*\|^{-2/\alpha} + \Consa t}\right]^{\alpha}\right\}~,$$ 
and, if $\mu$ is non-zero but very small (e.g. $10^{-10}$), this inequality describes well the difference between the transcient regime of convergence that is polynomial and the asymptotic regime that is exponential but occuring after time-scale $1/\mu$. These polynomial convergence bounds relying on the covariance upperbound, $K_\alpha$, are common in the study of SGD in RKHS~\cite{dieuleveut2016,lin2016optimal,berthier2020tight}. Second, by Markov inequality, remark that the estimates (i) and (ii) give convergence rates in probability. Furthermore, one steadily checks that one can derive the same kind of inequality as in (i) for the expected iterates $(\E[\theta_t])_{t \geq 0}$. In this case, one obtains the overestimation $ \|\theta_0 - \theta_*\|^2 e^{-2\mu   t}$ and from the perspective of the rate given by the Theorem, the stochastic nature of the dynamic seems to slow down the convergence. Finally, note that considering $\theta_0$ as a random variable, all expectations can be thought of as conditional expectations $\E[\,\cdot\, | \theta_0]$. The results are illustrated in Figure~\ref{fig:convergence_noiseless} and \ref{fig:convergence_noisy}.

\begin{figure}
\centering
  \includegraphics[width=0.6\linewidth]{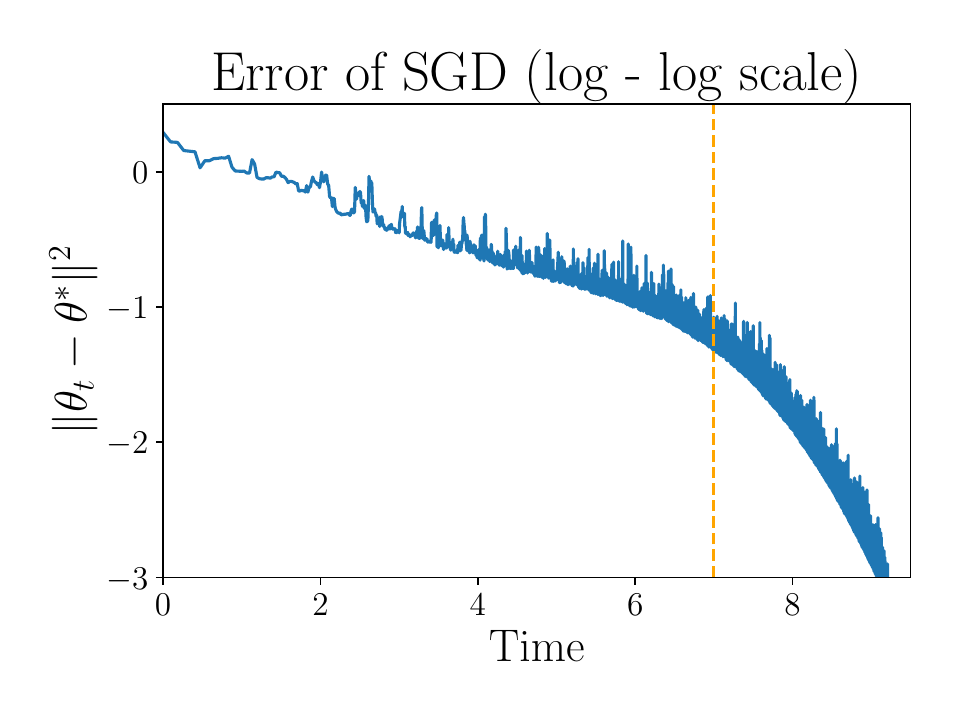}
  \caption{Plot showing the error of SGD along time for an overparametrized regime where $n = 100$ and $d = 200$. The samples $(x_i)_{i\leq n}$ come from a Gaussian distribution with a covariance whose eigenvalues decay as a power law. The vertical dotted orange line illustrates the separation between the two regimes depicted by Theorem~\ref{Th:convergConst}, the polynomial one before (a straight line in a log-log plot) and the exponential line, after typical time scale~$1/\mu$. This illustrates perfectly the rates of convergence shown in Theorem \ref{Th:convergConst}.}
  \label{fig:convergence_noiseless}
\end{figure}

\begin{figure}
  \centering
  \includegraphics[width=0.6\linewidth]{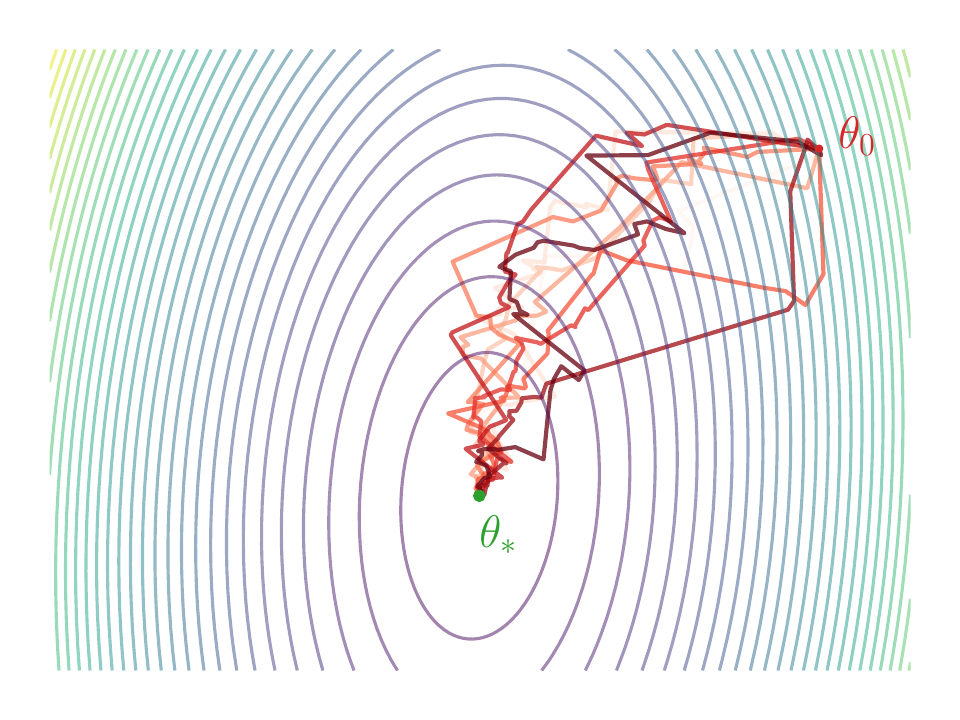}
  \caption{Display of a two-dimensional projection of $10$ trajectories of SGD for $n = 100$, $d = 200$, in the case that there is an perfect interpolator $\theta_*$. The ellipses represent the level curves of the training loss.}
  \label{fig:convergence_noisy}
\end{figure}

\subsection{The noisy case}
\label{subsec:SGD_noisy}

In this section too, $(\theta_t)_{t \geq 0}$ follows the dynamics given by Eq.\eqref{eq:train_SDE} initialized at $\theta_0 \in \R^d$. However, we assume in this subsection that $\ker \X = \{0\}$ and that $\mathcal{I} = \emptyset$. 
Note that this is generically the case if $n \geq d$. The important consequence of this  is that the loss is uniformly lower bounded by some positive number, indeed, we have
\begin{lemma}
    Let $\sigma^2 = L(\theta_*)$, we have, for all $\theta \in \R^d$ that $ L (\theta) \geq \sigma^2$.
\end{lemma}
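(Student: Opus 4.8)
The plan is to show that $\theta_*$ is in fact a global minimizer of the training loss $L$, from which the uniform lower bound $L(\theta)\ge L(\theta_*)=\sigma^2$ is immediate; the strict positivity of $\sigma^2$ then follows from the standing assumption $\mathcal I=\emptyset$. Since $L$ is a convex quadratic (its Hessian is $\Sigma=\tfrac1n\X^\top\X\succeq 0$), everything reduces to checking that $\theta_*$ is a stationary point, i.e. that it solves the normal equations.

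First I would use that, because $\ker\X=\{0\}$, one has $\X^\dagger\X=I$, so the term $(I-\X^\dagger\X)\theta_0$ in \eqref{eq:least_square_estimator} vanishes and $\theta_*=\X^\dagger\y$. The pseudo-inverse identity $\X^\top\X\X^\dagger=\X^\top$ then gives $\X^\top(\X\theta_*-\y)=\X^\top\X\X^\dagger\y-\X^\top\y=0$, i.e. $\nabla L(\theta_*)=\tfrac1n\X^\top(\X\theta_*-\y)=0$. Next, writing an arbitrary $\theta\in\R^d$ as $\theta=\theta_*+h$ and expanding,
\[ 2nL(\theta)=\|\X\theta_*-\y+\X h\|^2=\|\X\theta_*-\y\|^2+2\langle\X^\top(\X\theta_*-\y),h\rangle+\|\X h\|^2=2nL(\theta_*)+\|\X h\|^2, \]
the cross term vanishing by the previous step. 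Hence $L(\theta)=L(\theta_*)+\tfrac1{2n}\|\X h\|^2\ge L(\theta_*)=\sigma^2$ for every $\theta$, which is the claim.

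Finally I would record that $\sigma^2>0$: if $\sigma^2=L(\theta_*)=0$ then $\theta_*\in\mathcal I$, contradicting $\mathcal I=\emptyset$; this is what makes the bound genuinely a positive lower bound and is what distinguishes this regime from the noiseless one. I do not expect any real obstacle here — the statement is a textbook property of ordinary least squares. The only point that needs a little care is invoking the correct pseudo-inverse identity, $\X^\top\X\X^\dagger=\X^\top$ (equivalently: $\X\X^\dagger$ is the orthogonal projector onto $\operatorname{range}(\X)$, so $\X\theta_*-\y\in\operatorname{range}(\X)^\perp=\ker\X^\top$), rather than the false $\X\X^\dagger=I$, which would require $\X$ to have full row rank.
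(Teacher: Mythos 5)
Your proof is correct and is essentially the same argument as the paper's (which simply records that $L$ is convex and that $\nabla L(\theta)=\Sigma(\theta-\theta_*)$, so $\theta_*$ is a critical point and hence a global minimizer). You spell out the one fact the paper uses implicitly — that $\X^\top\X\theta_*=\X^\top\y$, via the pseudo-inverse identity $\X^\top\X\X^\dagger=\X^\top$ — and add the optional remark that $\sigma^2>0$ under $\mathcal I=\emptyset$; both are accurate and in the spirit of the paper's proof.
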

\begin{proof}
    The function $\theta \mapsto L(\theta)$ is convex, hence any minimizer of $L$ satisfies the equation $\nabla L(\theta) = 0$. This is the case of $\theta_*$, as $\nabla L (\theta) = \Sigma (\theta - \theta_*)$.
\end{proof}
Hence the noise covariance matrix is (most of the time) uniformly bounded away from zero, and this implies that there exists a unique invariant measure to the SDE equation. This in contrast to the noiseless case ($\I \neq \emptyset$) where any measure that is supported in $\I$ is naturally invariant.

\paragraph{Semi-group.} Due to the non-degeneracy of the noise term in this case, it is more convenient to track the dynamics of the probability measure $\rho_t:= \text{Law}(\theta_t)$ for any $t \geq 0$, which is the time marginal of the process initialized at $\theta_0$ and defined such that $\E [f(\theta_t)] = \int f d\rho_t $, for any $f: \R^d \to \R $ smooth enough. In fact the SDE Eq.\eqref{eq:train_SDE} has an associated semi-group $\P_t$, defined so that for all $\eta \in \R^d$, $(\P_t f)(\eta) := \E[f(\theta_t)\, |\,\theta_{t = 0} = \eta]$. Formally, if $\rho_t$ has a density, we have $\rho_t(\theta) = (\P_t \delta_\theta)(\theta_0) $ for all $\theta \in \R^d$. Note that by local strong ellipticity, even if $\rho_0 = \delta_{\theta_0}$ is singular, regularization properties of SDEs ensure that at $t = 0^+$, the measure $\rho_{t}$ has a density with respect to Lebesgue as well as a second order moment~\cite[Section 4 of Chapter 9]{friedman2008partial}. We place ourselves in this setup where $\rho_0 \in \mathbb{P}_2(\R^d)$, the set of probability measures $\mu$ such that $\int \|\theta\|^2 d\mu(\theta) < + \infty $.

\paragraph{Infinitesimal generator.} It is known that the time-marginal law of $(\theta_t)_{t \geq 0}$ satisfies the (parabolic) Fokker-Planck equation (at least in the weak sense):
\begin{align*}
    \partial_t \rho_t = \L^* \rho_t~,
\end{align*}
with the operator $\L^*$ being defined as, for all $\theta \in \R^d$
\begin{align*}
   (\L^* f)(\theta)  = \text{div}\left[\Sigma\left(\theta - \theta_*\right) f(\theta) \right] + \frac{\gamma}{2n}\sum_{i,j = 1}^d \partial_{ij} \left( \left[\X^\top R^2_{\x}(\theta) \X\right]_{ij}  f(\theta)\right),
\end{align*}
whose adjoint (with respect to the canonical dot product of $L^2(\R^d)$) is often referred to as the \textit{infinitesimal generator} of the dynamics and writes, 
\begin{align}
    (\L f) (\theta) = -  \langle \Sigma\left(\theta - \theta_*\right), \nabla f(\theta) \rangle  + \frac{\gamma}{2n}\sum_{i,j = 1}^d [\X^\top R^2_{\x}(\theta) \X]_{ij} \partial_{ij} f(\theta),
\end{align}
for all test functions $f: \R^d \to \R$ sufficiently smooth. Recall furthermore that the evolution of expectations of \textit{observables} $(f(\theta_t))_{t \geq 0}$ is given by the Dynkin forrmula~\cite[Lemma 3.2]{khasminskii2011stochastic}: $ \frac{d}{dt}\E \left[f(\theta_t)\right] = \L \E \left[f(\theta_t)\right]$.
This identity enables, as in the deterministic case for gradient flow, the use of Lyapunov functions. This is a very useful tool to study the asymptotic behavior of stochastic processes. This is the objective of the following lemma:
\begin{lemma}
    \label{lem:Lyapunov_noisy}
    Let $V (\theta) = \frac{1}{2} \|\theta - \theta_*\|^2$, we have the inequality, for all $\theta \in \R^d$, 
    \begin{align}
        \mathcal{L} V (\theta) \leq - 2(1-\gamma \K/2) L (\theta) + 2\sigma^2~.    
    \end{align}
     In consequence, for $\gamma \leq 1/(3\K)$, there exists a stationary process to the SDE \eqref{eq:train_SDE}.
\end{lemma}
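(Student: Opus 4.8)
The plan is to evaluate $\L V$ explicitly and then estimate the two resulting terms separately. Since $\nabla V(\theta)=\theta-\theta_*$ and the Hessian of $V$ is $I_d$, the generator applied to $V$ reads
\[
\L V(\theta) = -\langle \theta-\theta_*,\,\Sigma(\theta-\theta_*)\rangle + \frac{\gamma}{2n}\,\mathrm{Tr}\big(\X^\top R_\x(\theta)R_\x(\theta)^\top\X\big),
\]
so the whole inequality reduces to two facts: the drift term equals $-2\big(L(\theta)-\sigma^2\big)$, and the trace term (which is $\tfrac12$ the trace of the diffusion matrix, with $R_\x^2$ in the stated generator read as $R_\x R_\x^\top$, matching the diffusion $\tfrac{\gamma}{n}\X^\top R_\x R_\x^\top\X$ of~\eqref{eq:train_SDE}) is at most $\gamma\K\,L(\theta)$.

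For the drift term I would use the orthogonal splitting $\X\theta-\y=\X(\theta-\theta_*)+(\X\theta_*-\y)$: because $\ker\X=\{0\}$ forces $\theta_*=(\X^\top\X)^{-1}\X^\top\y$, the vector $\X\theta_*-\y=-(I-P)\y$ is orthogonal to $\mathrm{range}(\X)$ (with $P$ the orthogonal projector onto it), hence to $\X(\theta-\theta_*)$; Pythagoras then gives $2nL(\theta)=\|\X(\theta-\theta_*)\|^2+\|\X\theta_*-\y\|^2=n\langle\theta-\theta_*,\Sigma(\theta-\theta_*)\rangle+2n\sigma^2$, i.e. $\langle\theta-\theta_*,\Sigma(\theta-\theta_*)\rangle=2(L(\theta)-\sigma^2)$ — this is just the once-integrated form of the identity $\nabla L(\theta)=\Sigma(\theta-\theta_*)$ used in the preceding lemma. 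For the trace term I would use $R_\x(\theta)R_\x(\theta)^\top=\diag(r_\x(\theta))^2-\tfrac1n r_\x(\theta)r_\x(\theta)^\top$ (already implicit in the covariance computation of Section~\ref{sec:SDE}), drop the negative semidefinite rank-one piece, and use $\X^\top e_i=x_i$ to get
\[
\mathrm{Tr}\big(\X^\top R_\x(\theta)R_\x(\theta)^\top\X\big)\le \sum_{i=1}^n (\langle\theta,x_i\rangle-y_i)^2\|x_i\|^2 \le \K\,\|r_\x(\theta)\|^2 = 2n\K\,L(\theta).
\]
Substituting both into the formula for $\L V$ gives $\L V(\theta)\le -2(L(\theta)-\sigma^2)+\gamma\K L(\theta)=-2(1-\gamma\K/2)L(\theta)+2\sigma^2$, which is the claimed bound; this part is entirely routine.

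For the existence of a stationary process under $\gamma\le 1/(3\K)$, I would turn the Lyapunov estimate into a genuine drift condition. Since $\I=\emptyset$ gives $\sigma^2=L(\theta_*)>0$ and $\ker\X=\{0\}$ gives $\Sigma\succ 0$ with smallest eigenvalue $\mu>0$, the identity just derived yields $L(\theta)\ge\mu V(\theta)+\sigma^2$; together with $1-\gamma\K/2\ge 5/6>0$ this gives $\L V(\theta)\le -c\,V(\theta)+b$ with $c=2(1-\gamma\K/2)\mu>0$ and $b=\gamma\K\sigma^2$. From here I would conclude either (i) via Dynkin's formula and Grönwall — $\sup_{t\ge0}\E[\|\theta_t-\theta_*\|^2]\le\max\{\E\|\theta_0-\theta_*\|^2,\,2b/c\}<\infty$, legitimate since $\rho_0\in\mathbb{P}_2(\R^d)$ and all marginal moments stay finite, as recalled in Section~\ref{sec:SDE} — so that $(\rho_t)_{t\ge0}$ is tight by Markov's inequality and a Krylov--Bogolyubov argument on $\tfrac1T\int_0^T\rho_t\,dt$ produces an invariant measure $\pi$ (the Feller property needed there follows from~\eqref{eq:train_SDE} having globally Lipschitz coefficients: linear drift and affine $\theta\mapsto\X^\top R_\x(\theta)$); or (ii) more directly by invoking Has'minskii's existence criterion \cite[Ch.~3]{khasminskii2011stochastic}, whose hypotheses (coercive $V$, $\L V\le -1$ off a compact set, non-explosion) are exactly what has been checked. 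Running the SDE from $\theta_0\sim\pi$ gives the stationary process. The only mildly delicate point throughout — and the one I would be most careful about — is justifying the use of Itô/Dynkin for the unbounded quadratic $V$ and the tightness/Feller step in Krylov--Bogolyubov; both are standard but worth spelling out.
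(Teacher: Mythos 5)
Your proof is correct and takes essentially the same approach as the paper: compute $\L V$ from the generator, use the identity $\langle\theta-\theta_*,\Sigma(\theta-\theta_*)\rangle = 2(L(\theta)-\sigma^2)$ for the drift term (you derive it via the Pythagorean split of the residual, the paper via the quadratic-function identity $\langle\nabla L,\theta-\theta_*\rangle=2(L(\theta)-L(\theta_*))$ — the same fact), and bound the trace term by dropping the rank-one piece and using $\|x_i\|^2\le\K$. For the existence of a stationary measure you spell out the drift condition $\L V\le -cV+b$ and the Krylov--Bogolyubov/Has'minskii argument, where the paper simply cites Theorem~3.7 of Khasminskii; your version is more explicit but amounts to the same thing.
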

The proof is postponed to
the Appendix \ref{appsec:noisy_training}.
\subsubsection{Invariant measure and convergence.} 

Any invariant measure $\rho_\infty$ satisfies the (elliptic) Partial Differential Equation (PDE): $\L^* \rho_\infty = 0$. As said before if the multiplicative noise does not cancel, $\L^*$ is uniformly elliptic and the uniqueness of such a measure is ensured. This is the case generically if $n \geq 2d$, however, for the sake of completeness we prove that in any case there exists a unique invariant measure (at this expense of a bit more technicality). Moreover, by ergodicity, the law of the iterates eventually converges towards this unique invariant measure. In the following result we also provide a quantitative statement, in Wassertein distance, on the speed of convergence of the dynamics towards such a measure.
\begin{theorem} \label{thm:convergence_noisy}
    Let $(\theta_t)_{t \geq 0}$ follows the dynamics given by Eq.\eqref{eq:train_SDE} initialized at $\theta_0$ distributed according to $\rho_0 \in \mathbb{P}_2(\R^d)$, then for $\gamma <  \frac{1}{3\K}$ , there exists a unique stationary distribution $\rho^* \in \mathbb{P}_2(\R^d)$, and quantitatively,
    \begin{enumerate}[label=\bfseries(\roman*)]
        \item {\bfseries Parametric rate.} For all $ t \geq 0$, we have that
        \begin{align}
            \W_2^2(\rho_t, \rho^*) \leq \W_2^2(\rho_0, \rho^*) e^{-2\mu( 1 - 2\gamma \K )  t},
        \end{align}
         where $\mu>0$ is the smallest non zero eigenvalue of $\Sigma$.
        \item {\bfseries Non-parametric rate.} For all $ t \geq 0$, we have that $ \forall \alpha > 0$,
        \begin{align}
            \W_2^2(\rho_t, \rho^*)  \leq \left[ \frac{1}{\W_2^2(\rho_0, \rho^*)^{-2/\alpha} + \Consa t}\right]^{\alpha}~,
        \end{align} 
        with 
        $\Consa =\frac{2 \left(1 - 2\gamma \K\right)}{\alpha}\left(\E\left[ \langle \theta_0 - \Theta_*, \Sigma^{-\alpha} (\theta_0 - \Theta_*) \rangle \right]+ \frac{2 \gamma\K_{\alpha}}{1-2\gamma\K}\W_2^2(\rho_0, \rho^*))\right)^{-1 / \alpha}$, and where the expectation is taken w.r.t. the optimal Wasserstein coupling between $ \theta_0 \sim \rho_0$ and $\Theta_* \sim \rho_*$.
    \end{enumerate}
\end{theorem}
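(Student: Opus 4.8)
The plan is to use a synchronous (reflection-free) coupling: let $(\theta_t)_{t\ge 0}$ solve \eqref{eq:train_SDE} from $\theta_0\sim\rho_0$ and $(\Theta_t)_{t\ge 0}$ solve the same SDE, driven by the \emph{same} Brownian motion $(B_t)$, from $\Theta_0\sim\rho^*$ chosen so that $(\theta_0,\Theta_0)$ realizes the optimal $\W_2$-coupling between $\rho_0$ and $\rho^*$; stationarity of $\rho^*$ then gives $\rho_t=\mathrm{Law}(\theta_t)$, $\rho^*=\mathrm{Law}(\Theta_t)$, so $\W_2^2(\rho_t,\rho^*)\le\E\|\theta_t-\Theta_t\|^2$. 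Set $\delta_t:=\theta_t-\Theta_t$. Since both drifts are affine, $-\Sigma(\theta_t-\theta_*)+\Sigma(\Theta_t-\theta_*)=-\Sigma\delta_t$, and the diffusion difference is $\sqrt{\gamma/n}\,\X^\top\bigl(R_\x(\theta_t)-R_\x(\Theta_t)\bigr)dB_t$; crucially $R_\x$ is \emph{affine} in $\theta$ (it is $\diag(\X\theta-\y)-\tfrac1n(\X\theta-\y)\mathds1^\top$ up to constants), so $R_\x(\theta_t)-R_\x(\Theta_t)=\diag(\X\delta_t)-\tfrac1n(\X\delta_t)\mathds1^\top=:R_\x^0(\delta_t)$, a linear function of $\delta_t$ with \emph{no additive part}. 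Thus $\delta_t$ obeys a closed, driftless-constant linear SDE $d\delta_t=-\Sigma\delta_t\,dt+\sqrt{\gamma/n}\,\X^\top R_\x^0(\delta_t)dB_t$, which is exactly the noiseless training SDE \eqref{eq:train_SDE} with $\y\to 0$ and initial datum $\delta_0$ (equivalently, $\theta_*\to 0$). Therefore the two rates are obtained by applying Theorem~\ref{Th:convergConst} verbatim to $(\delta_t)$.

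Concretely, for part (i), apply Dynkin's formula to $W(\delta)=\tfrac12\|\delta\|^2$ along this difference process. The drift contributes $-\langle\Sigma\delta_t,\delta_t\rangle$ and the Itô correction contributes $\tfrac{\gamma}{2n}\|\X^\top R_\x^0(\delta_t)\|_F^2$, which by the data-boundedness $\|x_i\|^2\le\K$ and the same algebra as in Lemma~\ref{lem:Lyapunov_noisy} is bounded by $\gamma\K\,\langle\Sigma\delta_t,\delta_t\rangle$ (up to the constant matching the $2\gamma\K$ in the statement). Hence $\tfrac{d}{dt}\E W(\delta_t)\le-(1-2\gamma\K)\,\E\langle\Sigma\delta_t,\delta_t\rangle$. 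Now $\delta_t$ lives in $\mathrm{span}(x_1,\dots,x_n)=(\ker\Sigma)^\perp=\mathrm{Ran}(\Sigma)$ for all $t$ (both processes move only in that subspace), so on this invariant subspace $\Sigma\succeq\mu I$ with $\mu$ the smallest nonzero eigenvalue, giving $\tfrac{d}{dt}\E\|\delta_t\|^2\le-2\mu(1-2\gamma\K)\E\|\delta_t\|^2$ and Grönwall closes the bound. For part (ii), replace $W$ by $W_\alpha(\delta)=\tfrac12\langle\delta,\Sigma^{-\alpha}\delta\rangle$ (pseudo-inverse on $\mathrm{Ran}\,\Sigma$), exactly as in the proof of the non-parametric rate of Theorem~\ref{Th:convergConst}: Dynkin gives $\tfrac{d}{dt}\E W_\alpha(\delta_t)\le-(1-2\gamma\K)\E\langle\delta_t,\Sigma^{1-\alpha}\delta_t\rangle+\tfrac{\gamma\K_\alpha}{ } \E\|\delta_t\|^2$ with $\K_\alpha=\max_i\langle x_i,\Sigma^{-\alpha}x_i\rangle$, and one interpolates $\E\langle\delta_t,\Sigma^{1-\alpha}\delta_t\rangle\ge (\E\|\delta_t\|^2)^{1+1/\alpha}(\E W_\alpha(\delta_t))^{-1/\alpha}$ by Hölder/Jensen, turning the differential inequality on $u_t:=\E\|\delta_t\|^2$ into $\dot u_t\le-c\,u_t^{1+1/\alpha}$ after absorbing the lower-order term via part-(i)-type control; integrating yields the stated $[\,\W_2^2(\rho_0,\rho^*)^{-2/\alpha}+\Consa t\,]^{-\alpha}$ form with the precise constant $\Consa$ as written. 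The uniqueness and existence of $\rho^*\in\mathbb P_2(\R^d)$ follows from Lemma~\ref{lem:Lyapunov_noisy} (tightness via the Lyapunov function gives existence) together with the contraction in (i), which forces any two stationary measures to coincide.

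The main obstacle is not the coupling itself (the affine structure makes it clean) but \textbf{bookkeeping the degeneracy}: one must check that $\delta_t$ genuinely stays in $\mathrm{Ran}(\Sigma)$ so that $\Sigma^{-\alpha}$ and the spectral gap $\mu$ make sense, and that $\rho^*$ is supported on the affine slice compatible with the conserved quantity $(I-\X^\dagger\X)\theta_t\equiv(I-\X^\dagger\X)\theta_0$ — indeed on $\ker\Sigma$ the noisy SDE is deterministic, $\theta_t^{\ker}$ is frozen, so $\rho^*$ is a product of a Dirac on that component with a genuinely $\mathrm{Ran}\,\Sigma$-supported invariant law; the optimal coupling in the theorem must respect this, which is why the expectation in $\Consa$ is over the $\W_2$-optimal coupling and not an arbitrary one. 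A secondary technical point is justifying Dynkin's formula for the unbounded quadratic Lyapunov functions, handled by the moment bounds recorded after \eqref{eq:train_SDE} (Theorem~3.5 of \cite{khasminskii2011stochastic}) plus a standard localization/stopping-time argument.
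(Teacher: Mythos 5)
Your proposal is correct and follows the same route as the paper: a synchronous coupling under the optimal $\W_2$-coupling of the initial laws, an Itô/Grönwall estimate on $\E\|\theta^1_t-\theta^2_t\|^2$ exploiting that both the drift and $R_\x$ are affine so the additive parts cancel, and the same Hölder interpolation via $\langle\delta,\Sigma^{-\alpha}\delta\rangle$ for the non-parametric rate. Your observation that the difference process solves exactly the noiseless training SDE with $\y=0$ is a tidy way to package the computation (the paper redoes it by hand rather than citing Theorem~\ref{Th:convergConst}); note however that the degeneracy caveats you raise are vacuous in this theorem's setting since $\ker\X=\{0\}$ is assumed, so $\Sigma$ is invertible and $\mathrm{Ran}\,\Sigma=\R^d$.
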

The proof of these results is deferred in Appendix, Section~\ref{appsubsec:Invariant measure} and rests on coupling techniques standard in the asymptotic behavior of SDEs~\cite{von2005transport} for the Wasserstein distance. One could adapt the argument to derive rates of convergence for other \textit{probabilistic distances} that have a coupling representations like the total variation~\cite{hairer2010convergence}; however, due to the multiplicative noise, convergence rates in the ``natural metric" given by $L^2(\rho^*)$ are more difficult to obtain. This could be the occasion for a future investigation (see the recent work of \cite{cattiaux2023journey} for a possibility to overcome this). 
Similarly as for the noiseless cases, there is a difference between time-scale of convergence between the early regime $t \ll 1/\mu$ that is polynomial and the asymptotic time $t \geq 1/\mu$ that is exponentially fast. 
\begin{figure}[ht] 
  \begin{minipage}[b]{0.5\linewidth}
    \centering
    \includegraphics[width = \linewidth]{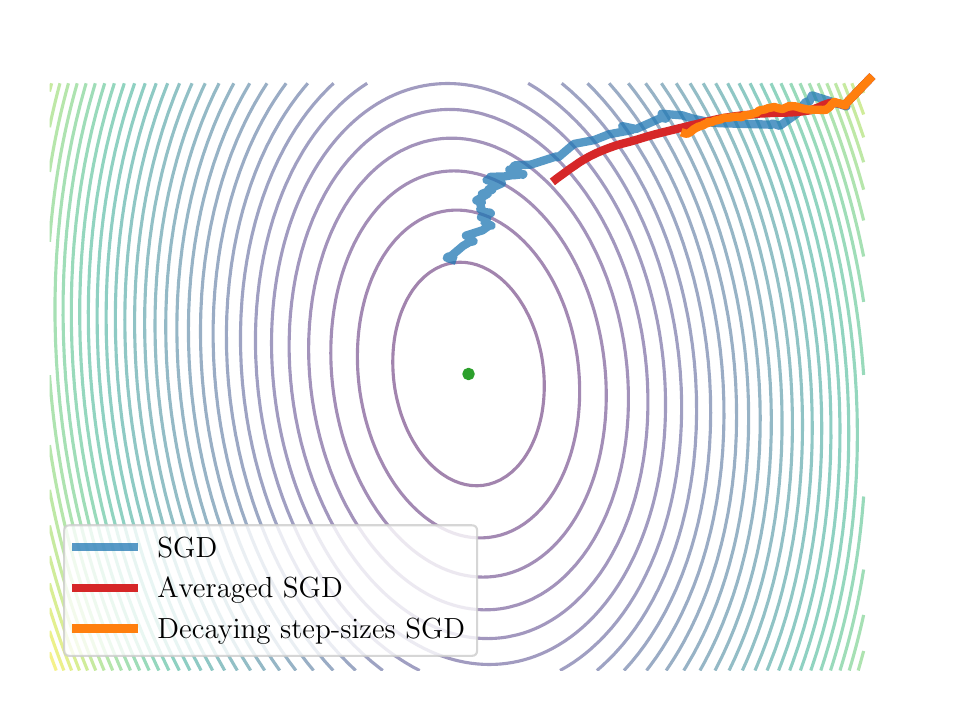} 
    \vspace{-3em}
  \end{minipage}
  \begin{minipage}[b]{0.5\linewidth}
    \centering
    \includegraphics[width = \linewidth]{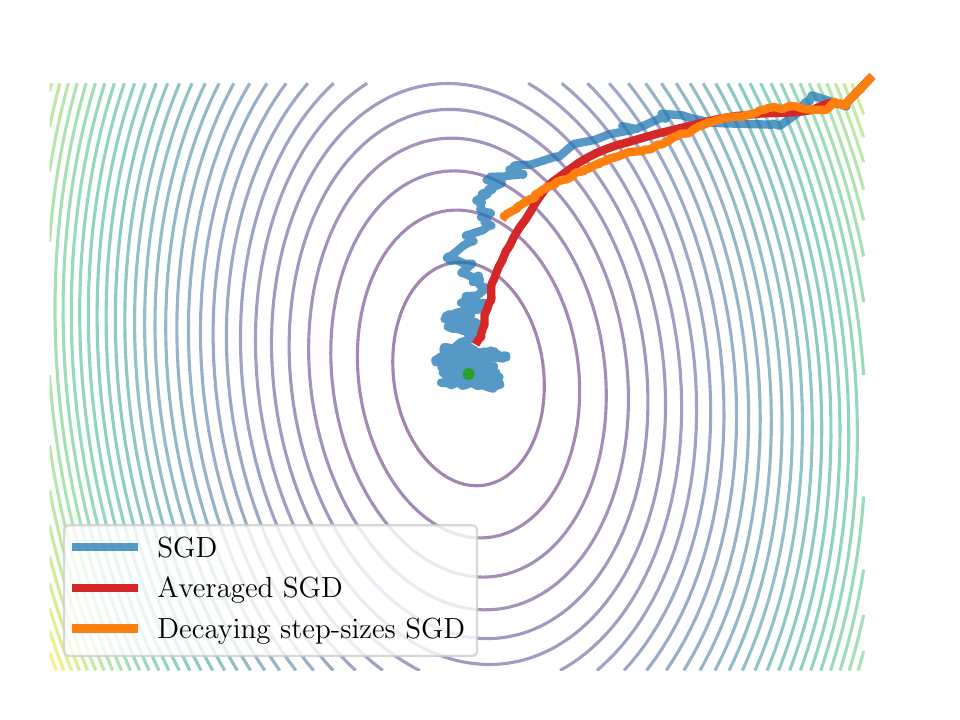} 
    \vspace{-3em}
  \end{minipage} 
  \begin{minipage}[b]{0.5\linewidth}
    \centering
    \includegraphics[width = \linewidth]{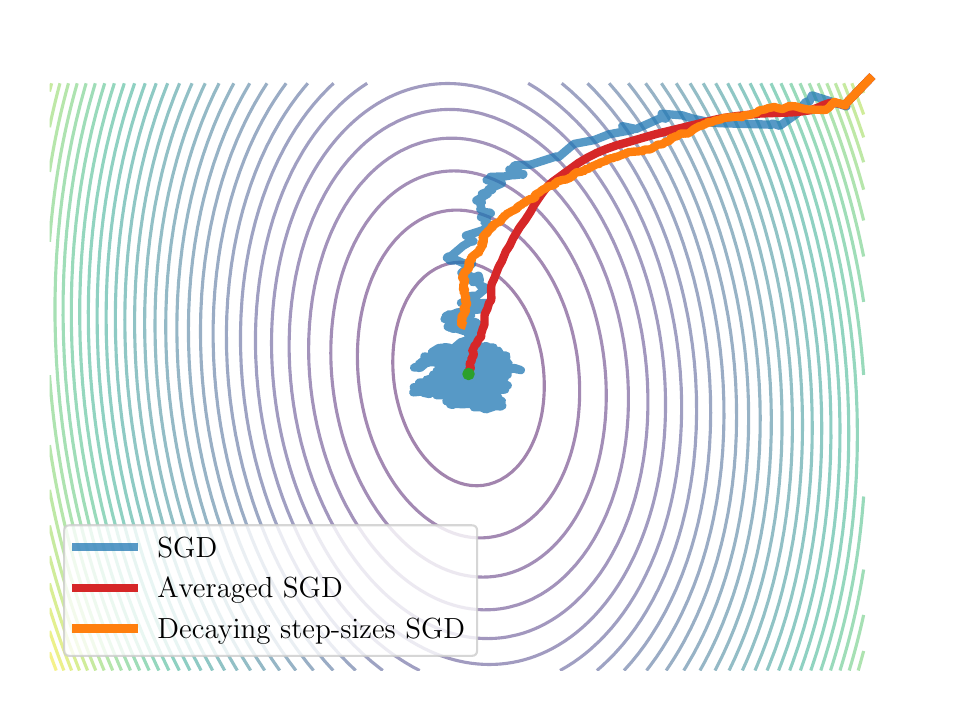} 
    \vspace{-3em}
  \end{minipage}
  \begin{minipage}[b]{0.5 \linewidth}
    \centering
    \includegraphics[width = \linewidth]{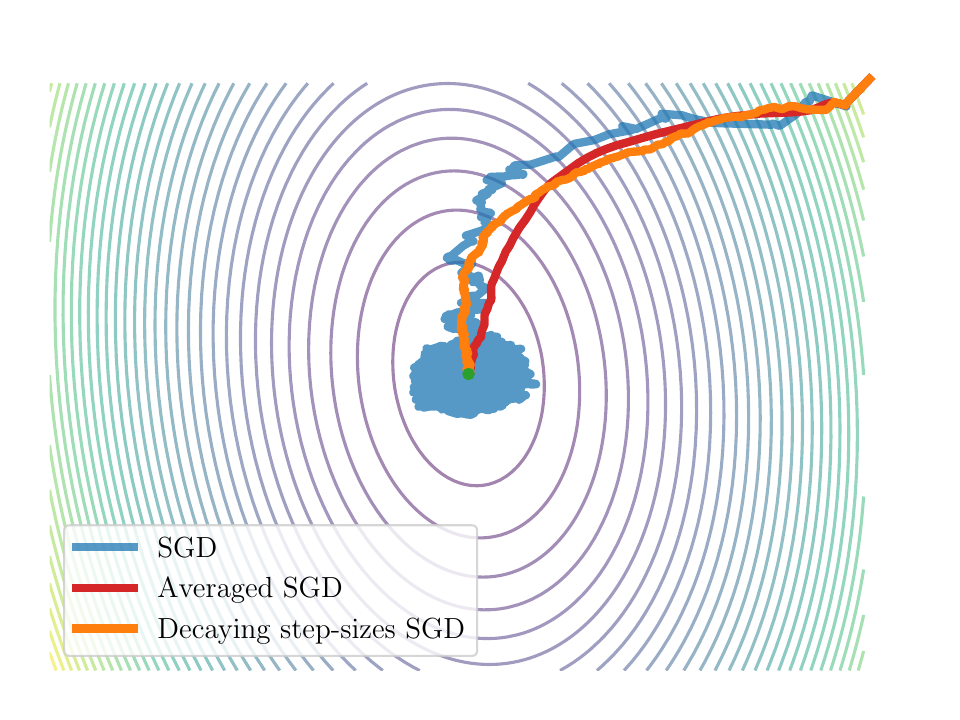} 
    \vspace{-3em}
  \end{minipage} 
  \caption{Four plots showing the trajectory of SGD in the noisy setting. The arrow of time goes from top left to bottom right. We see that the two variance reduction methods (time average and decaying step-sizes) converge towards $\theta_*$ (confirming Propositions~\ref{prop:ergodic} and \ref{prop:stepsizedecay}), while plain SGD has a stationary distribution with certain fluctuations around its mean $\theta_*$ as explained in Theorem~\ref{thm:convergence_noisy} and Proposition~\ref{prop:localization}. Plain SGD is faster to its invariant distribution as the variance reduction methods as shown in the convergence rates provided in the results.}
  \label{fig:sgd_noisy}
\end{figure}

\subsubsection{Localization of the invariant measure.} 

Above, we have shown that the law of the iterates converges towards a measure $\rho^*$ at a certain speed. We know that the invariant measure satisfies the equation $\L^* \rho^* = 0$; yet, this does not really give any practical description or insight on it. Notably, we want to understand how its localization depends on $\theta_*$ and on the parameters $\sigma^2, \gamma \text{ or } \X$. This is the purpose of the following proposition.
%


\begin{proposition}
\label{prop:localization}
    Let $\Theta_*\sim\rho^*$, then for $\gamma < \frac{1}{3\K}$ we have 
    \begin{align}
      \E \left[  \Theta_*  \right] = \theta_*~,  \text{ and } \ \E \left[ \left\| \Theta_* - \theta_* \right\|^2 \right] \leq \frac{ \gamma \K  \sigma^2}{\mu(1-\gamma \K )}~.
    \end{align}
\end{proposition}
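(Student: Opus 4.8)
The plan is to exploit the stationarity of $\rho^*$ through the Dynkin identity $\frac{d}{dt}\E[f(\theta_t)] = \E[\L f(\theta_t)]$: if $(\theta_t)_{t\geq 0}$ is started at $\theta_0 \sim \rho^*$, then $t \mapsto \E[f(\theta_t)]$ is constant, so $\E[\L f(\Theta_*)] = 0$ for every test function $f$ for which the manipulation is legitimate. Since the drift and diffusion coefficients of \eqref{eq:train_SDE} have at most linear growth and $\rho^* \in \mathbb{P}_2(\R^d)$ by Theorem~\ref{thm:convergence_noisy}, the polynomial observables of degree at most $2$ are admissible: one truncates with the exit times of large balls, uses that the stochastic integral becomes a true martingale after localization, and passes to the limit using the moment bounds of \cite[Theorem 3.5]{khasminskii2011stochastic}. (An alternative that avoids starting at stationarity: run the dynamics from a deterministic $\theta_0$, integrate the Lyapunov inequality of Lemma~\ref{lem:Lyapunov_noisy}, divide by $t$ and let $t \to \infty$, identifying the Ces\`aro limit of $\E[L(\theta_t)]$ with $\E[L(\Theta_*)]$ via the $\W_2$-convergence of Theorem~\ref{thm:convergence_noisy} and uniform integrability.)

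For the mean, I would apply this with $f(\theta) = \langle \theta, e_i\rangle$ for each coordinate $i$. Since $f$ is affine, the second-order part of $\L$ annihilates it, so $\L f(\theta) = -\langle \Sigma(\theta - \theta_*), e_i\rangle$, and $\E[\L f(\Theta_*)] = 0$ for all $i$ yields $\Sigma(\E[\Theta_*] - \theta_*) = 0$. In the noisy regime we assumed $\ker \X = \{0\}$, hence $\Sigma = \frac1n\X^\top\X$ is positive definite, in particular invertible, and we conclude $\E[\Theta_*] = \theta_*$.

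For the second moment, I would take $f = V$ with $V(\theta) = \frac12\|\theta - \theta_*\|^2$, the Lyapunov function of Lemma~\ref{lem:Lyapunov_noisy} (note $\L V$ is a quadratic polynomial, hence admissible). Stationarity gives $\E[\L V(\Theta_*)] = 0$, and combined with the inequality $\L V(\theta) \leq -2(1 - \gamma\K/2)L(\theta) + 2\sigma^2$ of Lemma~\ref{lem:Lyapunov_noisy}, this gives $\E[L(\Theta_*)] \leq \sigma^2/(1 - \gamma\K/2)$. It remains to convert this into a bound on $\E[\|\Theta_* - \theta_*\|^2]$: since $\nabla L(\theta_*) = \Sigma(\theta_* - \theta_*) = 0$ and $L$ is quadratic, $L(\theta) = \sigma^2 + \frac12\|\Sigma^{1/2}(\theta - \theta_*)\|^2$, and $\Sigma \succeq \mu I$ (here on all of $\R^d$, as $\Sigma \succ 0$, so $\mu$ is just its smallest eigenvalue), whence $\E[L(\Theta_*)] - \sigma^2 \geq \frac{\mu}{2}\E[\|\Theta_* - \theta_*\|^2]$. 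Therefore $\E[\|\Theta_* - \theta_*\|^2] \leq \frac{2}{\mu}\big(\tfrac{\sigma^2}{1-\gamma\K/2} - \sigma^2\big) = \frac{\gamma\K\sigma^2}{\mu(1-\gamma\K/2)} \leq \frac{\gamma\K\sigma^2}{\mu(1-\gamma\K)}$, the last inequality holding since $\gamma\K > 0$; this is the claimed estimate, with in fact a slightly sharper constant.

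The main obstacle I anticipate is purely technical: rigorously justifying $\E[\L f(\Theta_*)] = 0$ for the unbounded observables $f(\theta) = \langle\theta, e_i\rangle$ and $f = V$. This requires care with the localization — controlling the stochastic-integral contribution near the exit times and invoking the SDE's moment estimates so that dominated convergence applies. Everything else (the Lyapunov inequality, invertibility of $\Sigma$, the exact quadratic expansion of $L$) is either already available or elementary.
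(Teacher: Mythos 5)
Your proof is correct in its broad strokes and takes a genuinely different route from the paper. You exploit stationarity directly: $\E[\L f(\Theta_*)] = 0$ for affine $f$ gives $\Sigma(\E[\Theta_*]-\theta_*)=0$ (hence the mean by invertibility of $\Sigma$), and for $f=V$ the Lyapunov inequality of Lemma~\ref{lem:Lyapunov_noisy} together with the exact expansion $L(\theta)=\sigma^2+\tfrac12\|\Sigma^{1/2}(\theta-\theta_*)\|^2$ gives the variance bound — in fact with the slightly sharper constant $1-\gamma\K/2$ in the denominator. The paper instead runs the dynamics from a fixed $\theta_0$, derives an explicit Gronwall bound on $\E[V(\theta_t)]$, and passes to the limit $t\to\infty$ using the weak convergence of Theorem~\ref{thm:convergence_noisy}; for the mean it integrates the (deterministic) ODE satisfied by $\E[\theta_t]$. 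Your version is cleaner and shorter when the Dynkin identity at stationarity is justified, and it cleanly separates the algebraic content (the Lyapunov inequality and the quadratic expansion of $L$) from the analytic one.

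There is, however, a real gap in the justification of $\E[\L V(\Theta_*)] = 0$, and you identify it but do not close it. The Itô martingale term when applying the formula to $V$ has integrand $\langle\theta_s-\theta_*,\sqrt{\gamma/n}\,\X^\top R_\x(\theta_s)\,dB_s\rangle$, whose quadratic variation is \emph{quartic} in $\theta$; hence showing it is a true (not merely local) martingale requires fourth moments of $\rho^*$, not just $\rho^*\in\mathbb{P}_2(\R^d)$. Invoking \cite[Theorem 3.5]{khasminskii2011stochastic} only gives finite-time moment bounds along trajectories started from a point; to transfer this to the invariant measure one needs a uniform-in-time bound, which is exactly what the paper establishes separately (it first proves $\sup_t\E\|\theta_t-\theta_*\|^4<\infty$ via Itô and Gronwall, then uses weak convergence). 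Your alternative (Cesàro averaging of the Lyapunov inequality plus uniform integrability) is essentially what the paper does and would also require this fourth-moment estimate. So the argument is sound but needs the fourth-moment lemma inserted as a preliminary step; once that is done, both your identities become legitimate and your derivation goes through, giving the advertised bound (and in fact an improvement by the factor $(1-\gamma\K)/(1-\gamma\K/2)$).
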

The proof can be found in the Appendix, Section \ref{appsubsec:Localization}. Note that, as expected, the deviation of $\Theta_*$ to its mean goes to $0$ as $\sigma$ or $\gamma$ goes to~$0$. Moreover, in the limit $\gamma \ll 1$, we have  $\E \left\| \Theta_* - \theta_* \right\|^2\sim \gamma \K \sigma^2/\mu$, which reflects, up to constants, the scale given by the Gaussian calculation provided in Remark~\ref{rmk:gaussian_case}.

\begin{remark}
\label{rmk:gaussian_case}
    In the case for which we consider that the residuals equilibrate near the noise level $\sigma>0$, we can model that $R_\x \simeq \sigma I_n$, so that the SDE dynamics rewrites 
    \begin{align*}
    d \theta_t = - \frac{1}{n}\X^\top(\X \theta_t - \y) dt + \sqrt{\frac{\gamma \sigma^2}{n}}  \X^\top d B_t.
    \end{align*}
    This simple model is particularly convenient as it allows to compute in close form the invariant distribution. Indeed, solving $\L^* \rho^* = 0$ in this case gives for all $\theta \in \R^d$, 
    \begin{align*}
        \rho^*(\theta) \propto \mathrm{e}^{-\frac{\|\theta-\theta_*\|^2}{\gamma\sigma^2}}.
    \end{align*} 
    That is to say that $\rho^*$ is the Gaussian of mean $\theta_*$ and covariance $\displaystyle \frac{\gamma\sigma^2}{2} I_d$.
    \end{remark}
However, recall that $R_\x(\cdot)$ is in fact linear, this implies that for large $\|\theta\|$,  the noise in the SDE amplifies: this has consequences on the tails of the distribution, see next paragraph. 
\paragraph{Heavy-tails or not?} Recall that at any time $t>0$ all the moments of the law of $\theta_t$ exist (provided the initial distribution had such moments). Yet, depending on the step-size, moments of the invariant distribution might not exist. More precisely, we show that if one fixes a step-size $\gamma > 0$, all moments up to a certain value $\alpha(\gamma)$ of the invariant distribution exist and all higher moments do not. In other words the step size is a direct control of the tail of the asymptotic distribution.

\begin{proposition}
\label{prop:momentexplo} For $n\geq 2d$ and a fixed $\gamma<\frac{1}{3\K}$, it exists $\alpha>0$ large enough such that 
\[
\E(\|\Theta_*\|^{\alpha})=+\infty.
\]
\end{proposition}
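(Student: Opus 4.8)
The plan is to show that some polynomial moment of the invariant distribution diverges by finding a one-dimensional projection of the dynamics along which the noise is genuinely multiplicative, and then invoking the known moment-explosion behavior of geometric-Brownian-type diffusions. Since $n \geq 2d$, the noise term $\frac1n \X^\top R_\x^2(\theta)\X$ is uniformly elliptic, and more importantly its quadratic form grows quadratically in $\theta$: writing $\eta = \theta - \theta_*$ and using that $R_\x(\theta)$ is affine in $\eta$ (indeed $r_\x(\theta) = \X\eta + r_\x(\theta_*)$), for a fixed direction $v \in \R^d$ one has $v^\top \X^\top R_\x^2(\theta)\X v = \|(\,\diag(r_\x(\theta)) - \tfrac1n r_\x(\theta)\mathds{1}^\top\,)\X v\|^2$, which is a quadratic polynomial in $\eta$ with a strictly positive leading term as soon as $\X v \neq 0$ has two distinct coordinates (generic, and guaranteed by $\ker \X = \{0\}$ plus mild genericity). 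First I would isolate such a direction $v$ and track $Z_t = \langle v, \theta_t - \theta_*\rangle$.

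Next I would derive, via Itô's formula applied to $\varphi(\theta) = \langle v, \theta - \theta_*\rangle^\alpha$ (or more robustly to $(\epsilon + Z_t^2)^{\alpha/2}$ to avoid sign issues), the evolution equation $\frac{d}{dt}\E[\varphi(\theta_t)] = \L\E[\varphi(\theta_t)]$ from Dynkin's formula. The drift contributes a term of order $-\alpha\, v^\top \Sigma \eta \cdot Z^{\alpha-1}$, which is controlled linearly, while the second-order term contributes $\frac{\gamma}{2n}\alpha(\alpha-1) v^\top \X^\top R_\x^2(\theta)\X v \cdot Z^{\alpha-2}$; since the latter quadratic form is $\gtrsim c\, Z^2$ for large $|Z|$ (with $c>0$ from the leading coefficient), the second-order term dominates and scales like $\tfrac{\gamma c}{2n}\alpha(\alpha-1) Z^\alpha$. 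Thus for $\alpha$ large enough the generator satisfies a reverse-Lyapunov inequality of the form $\L\varphi(\theta) \geq \lambda(\alpha)\varphi(\theta) - C$ on $\{|Z| \geq R\}$ with $\lambda(\alpha) \to +\infty$ as $\alpha \to \infty$, in particular $\lambda(\alpha) > 0$ for $\alpha$ large. If $\E[\varphi(\Theta_*)]$ were finite, then applying this inequality in stationarity (integrating against $\rho^*$, for which $\int \L\varphi\, d\rho^* = 0$) would force $0 \geq \lambda(\alpha)\E[\varphi(\Theta_*)] - C'$, i.e. $\E[\varphi(\Theta_*)] \leq C'/\lambda(\alpha)$ — which is not yet a contradiction by itself, so I would instead argue directly that the stationary measure cannot have this moment: run the process from $\rho^*$, so $\E[\varphi(\theta_t)] = \E[\varphi(\Theta_*)]$ is constant, but the differential inequality $\frac{d}{dt}\E[\varphi(\theta_t)] \geq \lambda(\alpha)\E[\varphi(\theta_t)] - C'$ then gives $0 \geq \lambda(\alpha)\E[\varphi(\Theta_*)] - C'$; to turn this into an explosion I truncate. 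Concretely, consider the process started from a point mass far out and show the truncated moments grow without bound before any stopping, forcing $\E[\varphi(\Theta_*)] = +\infty$ by a standard localization/Fatou argument: if it were finite, comparison with the sub-exponentially-growing lower bound on $\E[\varphi(\theta_t^{\text{stopped}})]$ yields a contradiction as the truncation level is removed.

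The main obstacle is the last step: converting the pointwise reverse inequality $\L\varphi \geq \lambda \varphi - C$ into an actual proof that $\int \varphi\, d\rho^* = +\infty$, since naively integrating against the stationary measure only gives an upper bound on a (possibly infinite) integral and justifying $\int \L\varphi\, d\rho^* = 0$ presupposes integrability. The clean way around this is to work with the truncated test function $\varphi_M = \varphi \wedge M$ or with exit times: let $\tau_M$ be the first time $|Z_t|$ exits a large ball, start from stationarity, and use the optional-stopping version of Dynkin to get $\E[\varphi(\theta_{t\wedge\tau_M})] \geq e^{\lambda t}\E[\varphi(\Theta_*)\mathbf{1}_{|Z_0|\leq M}] - \frac{C}{\lambda}$-type growth on the event $\{t < \tau_M\}$; letting $M \to \infty$ and using that $\rho^*$ charges $\{|Z_0|\leq M\}$ with probability $\to 1$, the right side would blow up if $\E[\varphi(\Theta_*)]>0$ is finite and positive, contradicting stationarity $\E[\varphi(\theta_{t\wedge\tau_M})] \leq \E[\varphi(\theta_t)] = \E[\varphi(\Theta_*)]$. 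I would also need to double-check the genericity condition ensuring $\X v$ has two distinct entries for a suitable eigen-direction $v$ of $\Sigma$ — this is where the hypothesis $n \geq 2d$ (full rank with room to spare) is used to guarantee the required non-degeneracy of the multiplicative part of the noise, so I would state that mild genericity explicitly or remark that it holds whenever the $x_i$ are in general position.
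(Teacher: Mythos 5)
Your proposal takes a genuinely different route from the paper. The paper works directly with the radial potential $\|\eta_t\|^{2\alpha}$ and applies the generator to it; the key input is Lemma \ref{lem:almost_def_pos}, a lower bound $\langle \X\eta, R_\x^2 \X\eta\rangle \geq \mathsf{c}\|\eta\|^2\|\X\eta\|^2$ that holds uniformly over the \emph{$\eta$-dependent} direction $\X\eta$. Proving that uniform bound is the heart of the argument (it requires an explicit description of $\ker R_\x^2$ and a compactness argument on the sphere, and is where $n\geq 2d$ enters to make restricted covariance matrices invertible). You instead project onto a \emph{fixed} direction $v$, chosen an eigenvector of $\Sigma$ so that the drift term closes, and argue via a one-dimensional reverse-Lyapunov inequality. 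The non-degeneracy you need is then a much simpler statement: with $a_i := (\X v)_i - \overline{\X v}$, you need the fixed weighted covariance $H := \sum_i a_i^2 x_i x_i^\top$ to be positive definite, since then $\min\{\eta^\top H\eta : \langle v,\eta\rangle = Z\} = Z^2/\langle v, H^{-1}v\rangle$ gives the quadratic lower bound on the diffusion along $v$. That is a cleaner lemma (fixed direction, fixed matrix) even if the paper's Lemma~\ref{lem:almost_def_pos} is stronger. Your approach also has a bonus: it makes the localization issue explicit. The paper assumes finiteness of the moment, writes the stationary identity, and derives a contradiction, but passing from ``finite moment'' to ``time derivative equals zero at stationarity'' does require justifying the exchange of $\frac{d}{dt}$ and $\E$, which is exactly the gap you identify and propose to close with stopping times / truncated test functions. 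So your version is in that sense more careful.

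Two things to tighten. First, your stated non-degeneracy condition ``$\X v$ has two distinct coordinates'' only gives a positive leading coefficient when restricting $\eta$ to the line $\R v$; it is necessary but not sufficient for the full quadratic bound $\Gamma(\theta)\gtrsim Z^2$. What you actually need is that the vectors $\{x_i : a_i \neq 0\}$ span $\R^d$ (equivalently $H\succ 0$), so that $\Gamma(\theta)\geq \eta^\top H\eta + \mathcal{O}(\|\eta\|)\geq c Z^2 - C$ uniformly over the slab $\{\langle v,\eta\rangle = Z\}$. This is generic for $n\geq d$ and certainly holds under $n\geq 2d$ with data in general position, but it should be stated as such rather than via the two-distinct-coordinates criterion. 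Second, for odd $\alpha$ the function $Z\mapsto Z^\alpha$ is signed, so either restrict to even powers $Z^{2\alpha}$ (the paper's $\|\eta\|^{2\alpha}$ sidesteps this automatically), or work with your regularized $(\epsilon+Z^2)^{\alpha/2}$ and check the $\epsilon\to 0$ passage; and to conclude $\E\|\Theta_*\|^\alpha=\infty$ from $\E|Z_*|^\alpha=\infty$, record the elementary inequality $|Z_*|\leq\|v\|(\|\Theta_*\|+\|\theta_*\|)$. With those adjustments your route gives a valid, and arguably more self-contained, proof.
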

The proof of this result can be found in the Appendix, Section~\ref{appsubsec:Localization}. This result is in accordance of the fact that multiplicative noise in SGD induces heavy-tails of its asymptotic distribution~\cite{hodgkinson2021multiplicative,gurbuzbalaban2021heavy,jiao2024emergence}.  
\subsubsection{Convergence of variance reduction techniques}
If the aim is to have an estimator that converges at long times to the optimal $\theta_*$, it is necessary to use \textit{variance reduction techniques}: here we show how time averages and step-size decay can be used in order to achieve this.
\paragraph{Ergodicity: convergence of the time average.} Classical ergodic theorems tell that the time average along a trajectory converges towards spatial mean taken according to the invariant measure, i.e. $\frac{1}{t} \int_0^t \varphi(\theta_s) ds \to \E_{\rho^*} \left[ \varphi(\Theta_*) \right]$, as $t$ goes to infinity, for every smooth function $\varphi$. Taking $\varphi = I$, we thus expect that $\bar{\theta}_t := \frac{1}{t} \int_0^t \theta_s ds \to \E_{\rho^*} \left[ \Theta_* \right] = \theta_*$, as $t$ goes to infinity, as explained in Proposition~\ref{prop:localization}. We quantify the convergence speed in the following.  
\begin{proposition}\label{prop:ergodic}
    We have for all $t \geq 0$ that 
    \begin{align*}
        \E \| \Sigma (\bar{\theta}_t - \theta_*) \|^2 &\leq \frac{8 \gamma \K  \sigma^2 }{t}  + \frac{10 \| \theta_0 - \theta_* \|^2}{t^2}.
    \end{align*}
\end{proposition}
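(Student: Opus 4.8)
The plan is to exploit the same Lyapunov/It\^o machinery used for Theorem~\ref{thm:convergence_noisy} and Proposition~\ref{prop:localization}, now applied to the time-averaged iterate $\bar\theta_t = \frac1t\int_0^t \theta_s\,ds$. The starting point is the identity obtained by integrating the SDE \eqref{eq:train_SDE} between $0$ and $t$. Since $\nabla L(\theta_s) = \Sigma(\theta_s-\theta_*)$, dividing by $t$ yields
\begin{align*}
\Sigma(\bar\theta_t - \theta_*) = \frac1t\int_0^t \Sigma(\theta_s-\theta_*)\,ds = \frac{\theta_0 - \theta_t}{t} + \frac{\sqrt{\gamma/n}}{t}\int_0^t \X^\top R_\x(\theta_s)\,dB_s~.
\end{align*}
First I would take $\|\cdot\|^2$ of both sides, use $\|a+b\|^2 \le 2\|a\|^2 + 2\|b\|^2$, and take expectations. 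The first term contributes $\frac{2}{t^2}\,\E\|\theta_0-\theta_t\|^2 \le \frac{4}{t^2}(\|\theta_0-\theta_*\|^2 + \E\|\theta_t-\theta_*\|^2)$; here I would bound $\E\|\theta_t-\theta_*\|^2$ by a constant times $\|\theta_0-\theta_*\|^2$ plus a term of order $\gamma\K\sigma^2$ using the Lyapunov estimate of Lemma~\ref{lem:Lyapunov_noisy} together with Gr\"onwall (the same computation that produces the stationary process there), so that this first piece is absorbed into the stated $\frac{10\|\theta_0-\theta_*\|^2}{t^2}$ term up to tuning constants, with a leftover $O(\gamma\K\sigma^2/t^2)$ that is dominated by the $1/t$ term for $t\ge 1$.

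For the martingale term I would use the It\^o isometry:
\begin{align*}
\E\Big\| \int_0^t \X^\top R_\x(\theta_s)\,dB_s \Big\|^2 = \int_0^t \E\,\mathrm{tr}\!\left(\X^\top R_\x(\theta_s) R_\x(\theta_s)^\top \X\right) ds~.
\end{align*}
The integrand is exactly $n$ times the trace of the noise covariance $\sigma(\theta_s)\sigma(\theta_s)^\top$ appearing in the SDE, and by the computation in Lemma~\ref{lem:Lyapunov_noisy} (the bound on the diffusion part of $\mathcal L V$) one has $\frac{\gamma}{2n}\,\mathrm{tr}(\X^\top R_\x(\theta_s)^2\X) \le \gamma\K\, L(\theta_s)$. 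Hence $\frac{\gamma}{n}\int_0^t \E\,\mathrm{tr}(\cdots)\,ds \le 2\gamma\K \int_0^t \E L(\theta_s)\,ds$, and the whole martingale contribution to $\E\|\Sigma(\bar\theta_t-\theta_*)\|^2$ is at most $\frac{4\gamma\K}{t^2}\int_0^t \E L(\theta_s)\,ds$. It therefore remains to show $\int_0^t \E L(\theta_s)\,ds \le \sigma^2 t + C\|\theta_0-\theta_*\|^2$ with a manageable constant. This is precisely what Lemma~\ref{lem:Lyapunov_noisy} gives after integration: from $\frac{d}{dt}\E V(\theta_t) = \E\mathcal L V(\theta_t) \le -2(1-\gamma\K/2)\E L(\theta_t) + 2\sigma^2$ and $\gamma\le 1/(3\K)$ (so $1-\gamma\K/2 \ge 5/6$), integrating on $[0,t]$ gives $\frac{5}{3}\int_0^t \E L(\theta_s)\,ds \le V(\theta_0) - \E V(\theta_t) + 2\sigma^2 t \le \tfrac12\|\theta_0-\theta_*\|^2 + 2\sigma^2 t$, hence $\int_0^t \E L(\theta_s)\,ds \le \tfrac{6}{5}\sigma^2 t + \tfrac{3}{10}\|\theta_0-\theta_*\|^2$.

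Putting the pieces together, $\E\|\Sigma(\bar\theta_t-\theta_*)\|^2 \le \frac{4\gamma\K}{t^2}\big(\tfrac{6}{5}\sigma^2 t + \tfrac{3}{10}\|\theta_0-\theta_*\|^2\big) + \frac{4}{t^2}\big(\|\theta_0-\theta_*\|^2 + \E\|\theta_t-\theta_*\|^2\big)$, and collecting the $1/t$ and $1/t^2$ contributions (using $\gamma\K<1/3$ and $\E\|\theta_t-\theta_*\|^2 \le \|\theta_0-\theta_*\|^2 + O(\gamma\K\sigma^2)$, with the $O(\gamma\K\sigma^2/t^2)$ piece absorbed into the $1/t$ term for $t\ge1$) yields the claimed bound $\frac{8\gamma\K\sigma^2}{t} + \frac{10\|\theta_0-\theta_*\|^2}{t^2}$ after the constants are checked. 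The main obstacle I anticipate is purely bookkeeping: making sure all the numerical constants genuinely close up to $8$ and $10$ under the sole hypothesis $\gamma<1/(3\K)$ (in particular controlling $\E\|\theta_t-\theta_*\|^2$ cleanly and handling the regime $t<1$ where the $1/t^2$ term dominates anyway); there is no conceptual difficulty, since It\^o isometry plus the already-established Lyapunov inequality do all the analytic work.
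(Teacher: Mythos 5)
Your proof is essentially the paper's. The paper arrives at the identity
\begin{align*}
\Sigma(\bar\theta_t-\theta_*) \;=\; \frac{\theta_0-\theta_t}{t} \;+\; \frac{1}{t}\sqrt{\frac{\gamma}{n}}\int_0^t\X^\top R_\x(\theta_s)\,dB_s
\end{align*}
by solving the Poisson equation $\mathcal L\varphi = \theta-\theta_*$ with $\varphi(\theta)=\Sigma^{-1}\theta$; since $\varphi$ is linear, this is literally the direct integration of the SDE you perform, and the remaining steps (the $\|a+b\|^2\le 2\|a\|^2+2\|b\|^2$ split, It\^o isometry, the trace bound $\tfrac1n\Tr(\X^\top R_\x^2\X)\le 2\K L$, and the integrated Lyapunov bound on $\int_0^t\E L(\theta_s)\,ds$ from Lemma~\ref{lem:Lyapunov_noisy}) coincide with the paper's.

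The bookkeeping worry you flag at the end is genuine, and in fact the paper's own proof shares it. To close the constant $10$ one needs $\E\|\theta_t-\theta_*\|^2\lesssim\|\theta_0-\theta_*\|^2$ uniformly in $t$, but the Lyapunov estimate \eqref{eq:V} only yields $\E\|\theta_t-\theta_*\|^2\le\max\bigl\{\|\theta_0-\theta_*\|^2,\ 2\gamma\K\sigma^2/((1-\gamma\K)\mu)\bigr\}$. When the second argument dominates there is a leftover term of order $\gamma\K\sigma^2/\mu$, and your suggestion to absorb it for $t\ge 1$ actually requires $t\gtrsim 1/\mu$. The paper drops this term without comment in the chain $\E\|\theta_t-\theta_0\|^2\le\cdots\le 4\|\theta_0-\theta_*\|^2$, which is only valid when $2\gamma\K\sigma^2/((1-\gamma\K)\mu)\le\|\theta_0-\theta_*\|^2$. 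So your suspicion was correct: the clean $8\gamma\K\sigma^2/t + 10\|\theta_0-\theta_*\|^2/t^2$ bound as stated either implicitly assumes that regime or should carry an extra additive $O(\gamma\K\sigma^2/(\mu t^2))$ term.
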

%

%
%
\paragraph{Step-size decay.} 
In this section, the step-size $\gamma=\gamma_t$ depends on the time $t$ and tends to $0$ as $t$ tends to $\infty$. In this case, $(\theta_t)_{t \geq 0}$ eventually converges.
\begin{proposition}
\label{prop:stepsizedecay}
    Let $\gamma_t=1/(2\K+t^{\alpha})$ and $\alpha>1$, we have
    \begin{align*}
        \E [\| \theta_t - \theta_* \|^2] &\leq \frac{C_\alpha}{t^{\alpha - 1}},
    \end{align*}
    with $C_\alpha = e^{-\alpha}\left(\E [\| \theta_0 - \theta_* \|^2]e(2(\alpha-1)/\mu)^{\alpha-1} +(2\alpha/\mu)^{\alpha}   \sigma^2\right) + \frac{2^{1+\alpha}\K \sigma^2}{(\alpha-1)}$.  
\end{proposition}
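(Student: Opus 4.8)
The plan is to run the time-dependent Lyapunov analysis on $V(\theta)=\frac12\|\theta-\theta_*\|^2$, exploiting the inequality already proved in Lemma~\ref{lem:Lyapunov_noisy}. First I would write $u(t) := \E[\|\theta_t-\theta_*\|^2]$ and use Dynkin's formula together with Lemma~\ref{lem:Lyapunov_noisy} (with $\gamma$ replaced by $\gamma_t$, which is legitimate since $\gamma_t \le 1/(2\K) < 1/(3\K)$ for all $t$) to get the differential inequality
\begin{align*}
    \tfrac{d}{dt} u(t) &\le -2(1-\gamma_t\K/2)\,\E[2 L(\theta_t)] + 2\gamma_t\sigma^2 \cdot 2\\
    &\le -2\mu\,(1-\gamma_t\K)\, u(t) + 4\gamma_t\sigma^2,
\end{align*}
where for the second line I use $2L(\theta)=\langle\Sigma(\theta-\theta_*),\theta-\theta_*\rangle \ge \mu\|\theta-\theta_*\|^2$ on $\mathrm{range}(\Sigma)=\R^d$ (since $\ker\X=\{0\}$ here) together with $\gamma_t\K/2\le 1$ so that $1-\gamma_t\K/2 \ge (1-\gamma_t\K)$ when bounding the loss term from above — I should be a little careful that the coefficient stays nonnegative, which it does since $\gamma_t\K \le 1/2$. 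Since $\gamma_t = 1/(2\K+t^\alpha)$ one has $1-\gamma_t\K \ge 1/2$, so crudely $\tfrac{d}{dt}u(t) \le -\mu\, u(t) + 4\gamma_t\sigma^2$; but to obtain the stated $t^{-(\alpha-1)}$ rate I cannot simply discard the exponential factor, so I would instead keep the Grönwall form.

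Next I would integrate the linear differential inequality explicitly. Writing $\Lambda(t) := 2\mu\int_0^t (1-\gamma_s\K)\,ds$, Grönwall gives
\begin{align*}
    u(t) \le e^{-\Lambda(t)} u(0) + 4\sigma^2 \int_0^t e^{-(\Lambda(t)-\Lambda(s))}\gamma_s\, ds.
\end{align*}
For the first term, since $1-\gamma_s\K \ge 1/2$ one has $\Lambda(t)\ge \mu t$, so $e^{-\Lambda(t)}u(0)$ decays exponentially, which is $o(t^{-(\alpha-1)})$; to package it into the claimed constant I would use the elementary bound $e^{-\mu t} \le (\alpha-1)!/(\mu t)^{\alpha-1} \le \big(2(\alpha-1)/(\mu t)\big)^{\alpha-1}$ for the relevant range (or optimize $\sup_t t^{\alpha-1}e^{-\mu t}$, which equals $((\alpha-1)/(e\mu))^{\alpha-1}$), producing the $e^{-\alpha}\E[\|\theta_0-\theta_*\|^2]e(2(\alpha-1)/\mu)^{\alpha-1}$ piece.

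The heart of the matter is the convolution integral $I(t):=4\sigma^2\int_0^t e^{-(\Lambda(t)-\Lambda(s))}\gamma_s\,ds$, and this is where I expect the main work. The standard trick is to split the integral at $t/2$: on $[t/2,t]$ one bounds $\gamma_s \le \gamma_{t/2} = 1/(2\K+(t/2)^\alpha) \le 2^\alpha/t^\alpha$ and the exponential factor by $1$, contributing $O(t\cdot t^{-\alpha}) = O(t^{-(\alpha-1)})$ — this is precisely the $2^{1+\alpha}\K\sigma^2/(\alpha-1)$-type term (after also using $\int_{t/2}^t \gamma_s ds \le$ something like $2^\alpha t^{1-\alpha}/(\alpha-1)$ more carefully, since $\int \gamma_s ds$ behaves like $s^{1-\alpha}/(1-\alpha)$... wait, $\alpha>1$ so $\int_{t/2}^\infty \gamma_s ds \le \int_{t/2}^\infty s^{-\alpha}ds = \tfrac{1}{\alpha-1}(t/2)^{1-\alpha}$, clean). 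On $[0,t/2]$ one bounds $\gamma_s \le \gamma_0 = 1/(2\K)$ and the exponential by $e^{-(\Lambda(t)-\Lambda(t/2))} \le e^{-\mu t/2}$ (again using $1-\gamma_s\K\ge 1/2$ on that sub-interval), which is exponentially small and contributes at most another term of the form $\sup_t t^{\alpha-1}e^{-\mu t/2}$, i.e. $\le (2\alpha/(e\mu))^{\alpha}\sigma^2$-type, explaining the $e^{-\alpha}(2\alpha/\mu)^\alpha\sigma^2$ piece in $C_\alpha$. Collecting the three contributions and bounding each by $C_\alpha/t^{\alpha-1}$ with $C_\alpha$ as stated finishes the proof. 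The only genuinely delicate points are (a) keeping the nonnegativity of the drift coefficient throughout and checking $\gamma_t\K\le 1/2$, and (b) the bookkeeping of constants in the $t/2$-splitting so that the final $C_\alpha$ matches the claimed expression; neither is conceptually hard but both require care.
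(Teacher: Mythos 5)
Your high-level plan — a time-dependent Lyapunov/It\^o estimate for $u(t):=\E\|\theta_t-\theta_*\|^2$, followed by a Gr\"onwall integration and a split of the convolution integral at $t/2$ — is exactly the paper's route, and your treatment of the two tail contributions via $\sup_t t^{\alpha-1}e^{-\mu t}$ and $\int_{t/2}^\infty s^{-\alpha}\,ds$ is the right bookkeeping. However, the derivation of the key differential inequality is broken in a way that doesn't quite reconstruct the stated $C_\alpha$.

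Concretely, you justify the step from $-2(1-\gamma_t\K/2)\E[2L(\theta_t)]$ to $-2\mu(1-\gamma_t\K)u(t)$ by the identity $2L(\theta)=\langle\Sigma(\theta-\theta_*),\theta-\theta_*\rangle$. That identity is \emph{false} in this noisy section: here $L(\theta_*)=\sigma^2>0$, so $2L(\theta)=\langle\Sigma(\theta-\theta_*),\theta-\theta_*\rangle+2\sigma^2$. Simultaneously, you invoke Lemma~\ref{lem:Lyapunov_noisy} but write its additive constant as $2\gamma_t\sigma^2$, whereas the lemma gives $+2\sigma^2$ with no $\gamma$-factor. These two slips partially cancel, which is why the final shape looks plausible, but they don't cancel correctly: following a correct version of your chain (use $\E[L(\theta_t)]-\sigma^2=\tfrac12\E\langle\Sigma(\theta_t-\theta_*),\theta_t-\theta_*\rangle\ge\tfrac\mu2 u(t)$ and then recombine the $\sigma^2$'s) gives a residual term of order $\gamma_t\K\sigma^2$, \emph{with} a $\K$; your $4\gamma_t\sigma^2$ has no $\K$, and consequently your split-integral bound would produce $2^{1+\alpha}\sigma^2/(\alpha-1)$ rather than the $2^{1+\alpha}\K\sigma^2/(\alpha-1)$ appearing in $C_\alpha$. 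The paper avoids this fragility by not routing through the fixed-$\gamma$ lemma at all: it re-derives the It\^o inequality directly, using $\|\X\theta_t-\y\|^2\le 2\|\X(\theta_t-\theta_*)\|^2+2\|\X\theta_*-\y\|^2$ to land on $\tfrac{d}{dt}u(t)\le-\mu\,u(t)+4\gamma_t\K\sigma^2$ cleanly, and then does precisely your Gr\"onwall-plus-$t/2$-split. Replacing your first display with that direct derivation would repair the argument. One small side issue: you write $\gamma_t\le 1/(2\K)<1/(3\K)$; the inequality is reversed ($1/(2\K)>1/(3\K)$), though this is harmless since the Lyapunov inequality itself holds for any $\gamma>0$.
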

This result of polynomial decay when the step-size decay polynomially is similar to the one provided in the work of~\cite{fontaine2021convergence}, and is re-proven here for the sake of completeness. 
For both results, the proofs are postponed to the Appendix, Section~\ref{appsubsec:Reductiontechniques}. All results on convergence are illustrated in Figure~\ref{fig:sgd_noisy}.

\section{Online SGD (population loss)}
\label{SGD_online}

In the previous section, on SGD in the \textit{empirical setting}, we have shown that the core of the qualitative description relies on the fact that the noise can cancel or not. The situation is similar for online SGD. In fact most of the calculation that we have shown so far transfer almost immediately within this setting. This is the reason why we will only show the results concerning the convergence in the \textit{noisy} and the \textit{noiseless} settings and describe briefly what properties will be similar than the ones exhibited in the previous section.

First, we recall the main equation governing the dynamics \eqref{eq:population_SDE}:
\begin{align*}
    d \theta_t = - \E_\rho \left[ \left( \langle \theta_t, X \rangle - Y \right) X \right] dt + \sqrt{\gamma} \sigma(\theta_t) d B_t,
\end{align*}
where $(B_t)_{t\geq 0}$ is a Brownian motion of $\R^d$ and $\sigma \in \R^{d\times d}$ is given by
\begin{align*}
        \sigma(\theta) := \left(\E_\rho \left[ r_{X}(\theta)^2 X X^\top \right] - \E_\rho \left[ r_{X}(\theta) X\right] \E_\rho \left[ r_{X}(\theta) X\right]^\top \right)^{1/2} \in \R^{d \times d}.
\end{align*}
Let us define $\Sigma = \E_{\rho}[ X X^\top] \in \R^{d \times d}$ the covariance matrix that we assume invertible and $\theta_* =  \Sigma^{-1} \E_{\rho}[ Y X] \in \R^d$. We have that for all $\theta \in \R^d$, 
$\E_\rho \left[ \left( \langle \theta, X \rangle - Y \right) X \right] = \Sigma (\theta - \theta_*)$, and hence the dynamics writes:
\begin{align*}
    d \theta_t = - \Sigma (\theta_t - \theta_*) dt + \sqrt{\gamma} \sigma(\theta_t) d B_t.
\end{align*}
Despite the difference between the empirical and population learning setups, the story here is similar to what is describe in Section~\ref{SGD_train}. Indeed, the core of the behavior of the algorithm relies only on the fact that $\theta_*$ cancels also the noise matrix $\sigma$, i.e $\sigma(\theta_*) = 0$. If this is the case, i.e. we are in the interpolation (or noiseless) regime, then the same analysis as in Subsection~\ref{subsec:SGD_noiseless} applies and the movement resembles multivariate geometric Brownian motion. If uniformly $\sigma(\theta) \succ 0$, then there will be a non degenerate invariant measure and the same results as Subsection~\ref{subsec:SGD_noisy} go through.

\subsection{The noiseless case}
Here we assume that the distribution $\rho$ is such that $Y = \langle X, \theta_* \rangle$ and that the features are almost surely bounded, i.e., $\|X\|^2 \leq \K$ for some $\K > 0$. This means that we are in the noiseless/interpolation regime where the input/output distribution admits a linear interpolator. As previously, in this case, despite the randomness of the SDE, the convergence is almost sure towards~$\theta_*$, with explicit rates that we show below.

\begin{theorem} \label{Th:convergConstpop}
    Let $(\theta_t)_{t \geq 0}$ follows the dynamics given by Eq.\eqref{eq:train_SDE} initialized at $\theta_0 \in \R^d$, then for $\gamma <  \frac{1}{3\K}$, we have that $\|\theta_t - \theta_*\|$ converges almost surely to $0$ with the following rates
    \begin{enumerate}[label=\bfseries(\roman*)]
        \item {\bfseries Parametric rate.} For all $ t \geq 0$, we have that
        \begin{align}
            \E[\|\theta_t - \theta_*\|^2] \leq \|\theta_0 - \theta_*\|^2 e^{-\mu (2-\K\gamma)  t},
        \end{align}
         where $\mu>0$ is the smallest non zero eigenvalue of $\Sigma$. 
        \item {\bfseries Non-parametric rate.} For all $ t \geq 0$, we have that for all $\alpha > 0$, 
        \begin{align}
            \E[\|\theta_t - \theta_*\|^2] \leq  \left[ \frac{1}{\| \theta_0 - \theta_*\|^{-2/\alpha} + \Consa t}\right]^{\alpha}~,
        \end{align} 
        where we defined 
        $ \Consa =\frac{1}{2\alpha}( \langle \eta_0,  \Sigma^{-\alpha} \eta_0   \rangle + \frac{\gamma \K_{\alpha}}{2-\K \gamma } \|\theta_0 - \theta_*\|^2)^{-1/\alpha}$, and $K_{\alpha}=\max_{i\leq n}\langle x_i, \Sigma^{-\alpha} x_i \rangle$.
    \end{enumerate}
\end{theorem}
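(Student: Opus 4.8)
The plan is to reduce Theorem~\ref{Th:convergConstpop} to a computation with the Lyapunov-type functionals $V_\alpha(\theta) := \langle \theta - \theta_*, \Sigma^{-\alpha}(\theta - \theta_*)\rangle$ (with $V_0$ giving the parametric rate), exactly as in the empirical case of Theorem~\ref{Th:convergConst}. First I would set $\eta_t := \theta_t - \theta_*$ and note that the SDE rewrites $d\eta_t = -\Sigma \eta_t\,dt + \sqrt{\gamma}\,\sigma(\theta_t)\,dB_t$, where crucially $\sigma(\theta_*) = 0$ because in the noiseless regime $r_X(\theta_*) = \langle X, \theta_*\rangle - Y = 0$ almost surely, so the residual random variable $r_X(\theta) = \langle X, \eta\rangle$ is \emph{linear} in $\eta$. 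Consequently $\sigma(\theta)\sigma(\theta)^\top = \E_\rho[\langle X,\eta\rangle^2 XX^\top] - \E_\rho[\langle X,\eta\rangle X]\E_\rho[\langle X,\eta\rangle X]^\top$, which is a quadratic form in $\eta$ that vanishes at $\eta = 0$; this is what makes the convergence almost sure without step-size decay, and it plays the role that $R_\x(\theta_*) = 0$ played in the empirical analysis.

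Next I would apply Dynkin's formula to $V_\alpha$. The drift contributes $-2\langle \eta, \Sigma^{1-\alpha}\eta\rangle$, while the diffusion contributes $\gamma\,\mathrm{Tr}(\Sigma^{-\alpha}\sigma(\theta)\sigma(\theta)^\top)$. Using the boundedness $\|X\|^2 \le \K$, the leading term of the trace is bounded by $\gamma\,\E_\rho[\langle X,\eta\rangle^2 \langle X, \Sigma^{-\alpha}X\rangle] \le \gamma \K_\alpha\,\E_\rho[\langle X,\eta\rangle^2] = \gamma \K_\alpha\,\langle\eta,\Sigma\eta\rangle \le \gamma \K_\alpha \K \|\eta\|^2$ (dropping the subtracted outer-product term, which is PSD), where $\K_\alpha = \sup \langle x, \Sigma^{-\alpha}x\rangle$ over the support — in the population case this should be stated as an essential supremum rather than $\max_{i\le n}$. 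Combining, for $\alpha = 0$ one gets $\frac{d}{dt}\E\|\eta_t\|^2 \le -2\E\langle\eta_t,\Sigma\eta_t\rangle + \gamma\K\E\|\eta_t\|^2 \le -\mu(2-\K\gamma)\E\|\eta_t\|^2$ using $\langle\eta,\Sigma\eta\rangle \ge \mu\|\eta\|^2$ on the relevant subspace, and Grönwall gives the parametric rate. For the non-parametric rate one instead bounds $\E L(\theta_t) = \frac12\E\langle\eta_t,\Sigma\eta_t\rangle$ and plays the interpolation-type inequality $\langle\eta,\Sigma\eta\rangle \ge \langle\eta,\Sigma^{-\alpha}\eta\rangle^{-1/\alpha}\|\eta\|^{2(1+1/\alpha)}$ (a consequence of Hölder/Jensen for the spectral measure of $\Sigma$ against $\eta$) to close a differential inequality of the form $u' \le -c\, u^{1+1/\alpha}$ for $u_t = \E\|\eta_t\|^2$, which integrates to the stated bound after controlling the evolution of $\E V_\alpha(\theta_t)$ to pin down the constant $\Consa$.

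The main obstacle I expect is the rigorous justification of applying Dynkin's formula to $V_\alpha$, which is only polynomially (quadratically) growing but whose second derivatives are constant, against a diffusion coefficient $\sigma(\theta)\sigma(\theta)^\top$ that grows quadratically in $\theta$ — so the naive generator estimate has quartic growth and one must either localize with stopping times and pass to the limit using the moment bounds guaranteed by Theorem~3.5 of~\cite{khasminskii2011stochastic} (already invoked in the paper for all moments at finite times), or verify the Lyapunov condition $\L V_\alpha \le C$ outside a compact set to license the argument. The almost-sure convergence then follows from the $L^2$ (hence in-probability) decay together with a supermartingale argument on $V_0(\theta_t)$: since $\L V_0 \le -\mu(2-\K\gamma)V_0 \le 0$, $V_0(\theta_t)$ is a nonnegative supermartingale, hence converges a.s., and its $L^1$-limit is zero by the exponential bound, forcing $\theta_t \to \theta_*$ a.s. A secondary technical point is that when $\Sigma$ is assumed invertible (as stated in this section) $\mu$ is simply $\lambda_{\min}(\Sigma) > 0$ and the subspace subtleties of the empirical case disappear, so the population proof is in fact slightly cleaner; I would remark on this rather than repeat the kernel-of-$\X$ bookkeeping.
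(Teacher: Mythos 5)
Your approach is essentially identical to the paper's: apply It\^o/Dynkin to $\|\eta_t\|^2$ and to the weighted quadratics $\langle \eta_t, \Sigma^{-\alpha}\eta_t\rangle$, use the boundedness of $\|X\|^2$ to dominate the diffusion term, close via Gr\"onwall (and a H\"older/Jensen interpolation between $\langle\eta,\Sigma\eta\rangle$, $\|\eta\|^2$, and $\langle\eta,\Sigma^{-\alpha}\eta\rangle$ for the polynomial rate), and finish the a.s.\ convergence with the nonnegative supermartingale argument. Your remarks about the essential supremum replacing $\max_{i\le n}$ in the definition of $\K_\alpha$, and about the localization needed to license Dynkin's formula against a quadratically growing diffusion coefficient, are both sound observations that the paper glosses over.

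One bookkeeping slip worth fixing, though, in the chain for the parametric rate. You bound the diffusion contribution (at $\alpha=0$) by
\begin{align*}
\gamma\,\E_\rho\bigl[\langle X,\eta\rangle^2\,\|X\|^2\bigr]\le \gamma\K\,\langle\eta,\Sigma\eta\rangle \le \gamma\K^2\,\|\eta\|^2
\end{align*}
and then write the differential inequality as $\frac{d}{dt}\E\|\eta_t\|^2 \le -2\E\langle\eta_t,\Sigma\eta_t\rangle + \gamma\K\,\E\|\eta_t\|^2$, which upon Gr\"onwall gives decay at rate $2\mu-\gamma\K$, not $\mu(2-\gamma\K)$. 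For $\mu<1$ this is a strictly worse rate and, for $\mu$ small, can fail to be negative at all under the hypothesis $\gamma<1/(3\K)$. The correct move (which the paper makes) is to stop at $\gamma\K\,\E\langle\eta_t,\Sigma\eta_t\rangle$, factor to get $-(2-\gamma\K)\E\langle\eta_t,\Sigma\eta_t\rangle$, and only then lower-bound $\langle\eta,\Sigma\eta\rangle\ge\mu\|\eta\|^2$. The last upper bound $\langle\eta,\Sigma\eta\rangle\le\K\|\eta\|^2$ you inserted is true but loses exactly the factor of $\mu$ needed to reproduce the stated constant.
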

Hence, as in the case of the training dynamics, we have that for all $ t \geq 0$, $$\displaystyle \E[\|\theta_t - \theta_*\|^2] \leq \min \left\{\|\theta_0 - \theta_*\|^2 e^{-\mu t},  \inf_{\alpha \in \R^+}\left[ \frac{1}{\| \theta_0 - \theta_*\|^{-2/\alpha} + \Consa t}\right]^{\alpha}\right\}~,$$ and, if $\mu$ is non-zero but very small (e.g. $10^{-10}$), this inequality describes well the difference between the transcient regime of convergence that is polynomial and the asymptotic regime that is exponential but occuring after time-scale $1/\mu$. We recalled here the result as in the previous section for the sake of completeness, but the result and the proof of the theorem (which can be found in Appendix, Section~\ref{appsec:noiseless_pop}) are rather similar to the training case. This shows that what really matters is the ability of the model to be interpolated or not and not the finite sample size.

\subsection{The noisy case}
\label{subsec:SGD_noisy_population}

In this section we suppose that we are in the noisy regime. This corresponds to the fact that there exists a constant $a > 0$ such that for all $\theta \in \R^d$, we have $\sigma^2(\theta)/L(\theta) \succcurlyeq a^2 I_d$ as well as $L(\theta) \geq a^2$, that is to say that neither the loss nor the multiplicative noise can cancel.  Let us give first a concrete example when this happens.

\begin{example}[Gaussian model]
\label{ex:pop_noisy_gaussian}
Assume that $X \sim \mathcal{N}(0, \Sigma)$, with $\Sigma \succeq \mu I_d$ and that there exists $\theta_* \in \R^d$ and $\xi \in \R$ a random variable independent of $X$ of mean zero and variance $2\sigma^2$, such that $Y = \langle \theta_*, X \rangle + \xi$. Then, by some forth order Gaussian moment calculation given in \cite[Lemma~1]{berthier2020tight}, we have the exact calculation
\begin{align*}
        \sigma(\theta)^2 = \left(\Sigma (\theta- \theta_*)\right)\left(\Sigma (\theta- \theta_*)\right)^\top + 2 L(\theta) \Sigma \succcurlyeq 2 \mu L(\theta) I_d~,
\end{align*} 
as well as $L(\theta) \geq \sigma^2$.
\end{example}
\paragraph{Semi-group and Fokker-Planck equation.} As for the training setup, due to the non-degeneracy of the noise term in this case, it is more convenient to track the dynamics of the probability measure $\rho_t:= \text{Law}(\theta_t)$ for any $t \geq 0$. We place ourselves in the situation $\rho_0 \in \mathbb{P}_2(\R^d)$ and recall that $(\theta_t)_{t \geq 0}$ satisfies the (parabolic) Fokker-Planck equation (at least in the weak sense) $\partial_t \rho_t = \L^* \rho_t$,
with the operator $\L^*$ being the adjoint (with respect to the canonical dot product of $L^2(\R^d)$) of the infinitesimal generator of the dynamics that writes, 
\begin{align}
    (\L f) (\theta) = -  \langle \Sigma\left(\theta - \theta_*\right), \nabla f(\theta) \rangle  + \frac{\gamma}{2}\sum_{i,j = 1}^d [\sigma(\theta) \sigma(\theta)^\top]_{ij} \partial_{ij} f(\theta),
\end{align}
for all test functions $f: \R^d \to \R$ sufficiently smooth. Recall furthermore that the evolution of expectations of \textit{observables} $(f(\theta_t))_{t \geq 0}$ is given by the action of the generator:
\begin{align}
    \frac{d}{dt}\E \left[f(\theta_t)\right] = \L \E \left[f(\theta_t)\right]~.
\end{align}
\subsubsection{Invariant measure and convergence.} As the multiplicative noise does not cancel, $\L^*$ is uniformly elliptic and there is existence and uniqueness of the invariant measure $\rho_\infty$ of the SDE, which satisfies the PDE $\L^* \rho_\infty = 0$. Moreover, by ergodicity, the law of the iterates eventually converges towards this unique invariant measure. In order to show this quantitavely, we first prove a useful lemma on the Lipschitz behavior of the multiplicative noise:
\begin{lemma}
\label{lem:noise_lip}
    There exists of constant $c > 0$ depending only on the distribution $\rho$, such that for all $\theta, \eta \in \R^d$, we have
    \begin{align}
        \| \sigma(\theta) - \sigma(\eta)  \|^2_{\mathrm{HS}} &\leq 2 c \K \langle \Sigma(\theta - \eta), \theta - \eta \rangle~.
    \end{align}
\end{lemma}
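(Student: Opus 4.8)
The plan is to control $\sigma(\theta)-\sigma(\eta)$ by passing through the difference of the \emph{squared} noise matrices via an integral representation of the matrix square root, while carefully tracking how each factor grows with $\theta$ so that the growths cancel. Write $A_\theta := \sigma(\theta)\sigma(\theta)^\top$ and $Z_\theta := r_X(\theta)\,X\in\R^d$, so that $A_\theta$ is the covariance of $Z_\theta$ under $\rho$. Since $r_X(\theta)=\langle\theta-\theta_*,X\rangle+r_X(\theta_*)$ and $\E_\rho[r_X(\theta_*)X]=\Sigma(\theta_*-\theta_*)=0$ by the definition of $\theta_*$, we can write $Z_\theta=\langle\theta-\theta_*,X\rangle X+W$ with $W:=r_X(\theta_*)X$ not depending on $\theta$; hence $Z_\theta-Z_\eta=\langle h,X\rangle X$ with $h:=\theta-\eta$, and bilinearity of the covariance gives
\[
A_\theta-A_\eta \;=\; \mathrm{Cov}_\rho\!\big(\langle h,X\rangle X,\,Z_\theta\big)+\mathrm{Cov}_\rho\!\big(Z_\eta,\,\langle h,X\rangle X\big).
\]

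Next I would estimate each cross-covariance in Hilbert--Schmidt norm by Cauchy--Schwarz, using $\|X\|^2\le\K$, $\E_\rho\langle h,X\rangle^2=\langle\Sigma h,h\rangle$, $\E_\rho\|Z_\theta\|^2\le\K\,\E_\rho[r_X(\theta)^2]=2\K L(\theta)$, together with $\|\E_\rho[\langle h,X\rangle X]\|=\|\Sigma h\|\le\K^{1/2}\langle\Sigma h,h\rangle^{1/2}$ and $\|\E_\rho Z_\theta\|=\|\Sigma(\theta-\theta_*)\|\le(2\K L(\theta))^{1/2}$. This produces a constant $C$ depending only on $\K$ with
\[
\|A_\theta-A_\eta\|_{\mathrm{HS}} \;\le\; C\big(\sqrt{L(\theta)}+\sqrt{L(\eta)}\big)\,\langle\Sigma h,h\rangle^{1/2}.
\]
The noisy hypothesis $\sigma(\theta)^2\succcurlyeq a^2 L(\theta) I_d$ with $L(\theta)\ge a^2$ then gives $A_\theta\succcurlyeq a^2 L(\theta) I_d\succ 0$, so $\|(sI+A_\theta)^{-1}\|_{\mathrm{op}}\le (s+a^2 L(\theta))^{-1}$ for all $s\ge 0$, and likewise for $A_\eta$.

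Now I would invoke the representation $A^{1/2}=\frac1\pi\int_0^\infty A(sI+A)^{-1}\,s^{-1/2}\,ds$ (valid for $A\succ0$) together with the resolvent identity $A_\theta(sI+A_\theta)^{-1}-A_\eta(sI+A_\eta)^{-1}=s(sI+A_\eta)^{-1}(A_\theta-A_\eta)(sI+A_\theta)^{-1}$ to obtain
\[
\sigma(\theta)-\sigma(\eta) \;=\; \frac1\pi\int_0^\infty \sqrt{s}\,(sI+A_\eta)^{-1}(A_\theta-A_\eta)(sI+A_\theta)^{-1}\,ds .
\]
Taking Hilbert--Schmidt norms, inserting the two previous bounds, and using the elementary identity $\int_0^\infty \frac{\sqrt{s}\,ds}{(s+p)(s+q)}=\frac{\pi}{\sqrt p+\sqrt q}$ (substitute $s=u^2$ and decompose into partial fractions) with $p=a^2L(\eta)$, $q=a^2L(\theta)$, the factor $\sqrt{L(\theta)}+\sqrt{L(\eta)}$ cancels exactly and one is left with $\|\sigma(\theta)-\sigma(\eta)\|_{\mathrm{HS}}\le \frac{C}{a}\,\langle\Sigma h,h\rangle^{1/2}$. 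Squaring yields the lemma with $c:=C^2/(2\K a^2)$, which depends only on $\rho$.

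The crux --- and essentially the only delicate point --- is this matched cancellation. The naive route, bounding the square root through the global Lipschitz constant of $t\mapsto\sqrt t$ on $[a^4,\infty)$, fails: $A_\theta$ grows quadratically in $\theta$, so $\|A_\theta-A_\eta\|$ is only of order $\|h\|(\|\theta\|+\|\eta\|)$, not $\|h\|$. One must exploit that the cross-covariance structure forces that growth to appear precisely as the factor $\sqrt{L(\theta)}+\sqrt{L(\eta)}$ and that the resolvent integral, thanks to the ellipticity $A_\theta\succcurlyeq a^2L(\theta)I_d$, contributes exactly its reciprocal. Everything else --- the Cauchy--Schwarz estimates, the scalar integral, and checking that the square-root representation applies (which only requires $A_\theta\succ 0$, guaranteed by the noisy hypothesis) --- is routine.
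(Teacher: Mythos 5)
Your proof is correct, and it reaches the same cancellation mechanism as the paper's, but via a genuinely different integral representation. Both arguments share the skeleton: show that the difference of noise covariances grows like $\sqrt{L}$, namely $\|A_\theta-A_\eta\|_{\mathrm{HS}}\lesssim(\sqrt{L(\theta)}+\sqrt{L(\eta)})\langle\Sigma h,h\rangle^{1/2}$, and then use the ellipticity $\sigma^2(\theta)\succcurlyeq a^2L(\theta)I_d$ so that the square-root operation produces exactly the reciprocal factor. The paper realizes the second step by differentiating $\sigma=\psi(\sigma^2)$, writing $\mathsf d\sigma_\theta(h)$ as the solution of a matrix Lyapunov equation, representing it as the semigroup integral $\int_0^\infty e^{-s\sigma}\mathsf l_\theta(h)e^{-s\sigma}\,ds$, bounding that integral by $\tfrac12\mathrm{Tr}(\sigma^{-1})$, and finally integrating over the segment joining $\theta$ to $\eta$ by the fundamental theorem of calculus; the ellipticity enters through the estimate $L(\theta)\,\mathrm{Tr}(\sigma^{-1}(\theta))^2\le d^2/a^2$, which costs a factor of $d$. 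You bypass the derivative entirely: the covariance-bilinearity decomposition $A_\theta-A_\eta=\mathrm{Cov}(D,Z_\theta)+\mathrm{Cov}(Z_\eta,D)$ with $D=\langle h,X\rangle X$ is a cleaner way to obtain the same $\sqrt{L}$-growth bound than computing $\mathsf df_\theta$ and $\mathsf dg_\theta$ explicitly, and the resolvent representation $A^{1/2}=\tfrac1\pi\int_0^\infty A(sI+A)^{-1}s^{-1/2}ds$ combined with the resolvent identity gives a direct integral formula for $\sigma(\theta)-\sigma(\eta)$. The pay-off of your route is that the scalar identity $\int_0^\infty\sqrt{s}\,ds/((s+p)(s+q))=\pi/(\sqrt p+\sqrt q)$ makes the $\sqrt{L}$ cancellation \emph{exact}, yielding the dimension-free constant $C^2/(2\K a^2)$, whereas the paper's trace argument via Cauchy--Schwarz carries a factor $d^2/a^2$. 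The only point worth being slightly more careful about is that the resolvent representation and the integral for $\sigma(\theta)-\sigma(\eta)$ require $A_\theta\succ0$ (which your hypotheses supply), the same requirement the paper needs for its semigroup integral. Overall this is a valid alternative proof with the same conceptual core but a sharper constant.
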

The proof is in the Appendix, Section~\ref{appsec:noise_lip}.
We are now ready to state the main theorem of the section. Indeed, in the following result we also provide a quantitative statement, in Wassertein distance, on the speed of convergence of the dynamics towards such a measure.
\begin{theorem}\label{thm:convergence_noisy_pop}
    Let $(\theta_t)_{t \geq 0}$ follows the dynamics given by Eq.\eqref{eq:train_SDE} initialized at $\theta_0 \in \R^d$, then for $\gamma <  \frac{1}{\K c}$ , there exists a unique stationary distribution $\rho^* \in \mathbb{P}_2(\R^d)$, and quantitatively,
    \begin{enumerate}[label=\bfseries(\roman*)]
        \item {\bfseries Parametric rate.} For all $ t \geq 0$, we have that
        \begin{align}
            \W_2^2(\rho_t, \rho^*) \leq \W_2^2(\rho_0, \rho^*) e^{-2\mu( 1 - \gamma \K c)  t},
        \end{align}
         where $\mu>0$ is the smallest non zero eigenvalue of $\Sigma$.
        \item {\bfseries Non-parametric rate.} Assume that we have the inequality for all $\alpha > 0$,
        \begin{align}
        \label{eq:lip_chelou}
             \| \Sigma^{-\alpha/2} (\sigma(\theta) - \sigma(\eta))  \|^2_{\mathrm{HS}} &\leq 2 c_\alpha \K_{\alpha} \langle \Sigma(\theta - \eta), \theta - \eta \rangle~,
        \end{align}
        then, for all $ t \geq 0$, we have that $ \forall \alpha > 0$,
        \begin{align}
            \W_2^2(\rho_t, \rho^*)  \leq \left[ \frac{1}{\W_2^2(\rho_0, \rho^*)^{-2/\alpha} + \Consa t}\right]^{\alpha}~,
        \end{align} 
        with 
        $\Consa =\frac{2 \left(1 - \gamma c \K\right)}{\alpha}\left(\E\left[ \langle \theta_0 - \Theta_*, \Sigma^{-\alpha} (\theta_0 - \Theta_*) \rangle \right]+ \frac{ \gamma c
        _\alpha \K_{\alpha}}{1-\gamma c\K}\W_2^2(\rho_0, \rho^*))\right)^{-1 / \alpha}$, and where the expectation is taken w.r.t. the optimal Wasserstein coupling between $ \theta_0 \sim \rho_0$ and $\Theta_* \sim \rho_*$.
    \end{enumerate}
\end{theorem}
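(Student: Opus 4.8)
The plan is to establish the result via a synchronous coupling argument, exactly paralleling the proof of Theorem~\ref{thm:convergence_noisy} in the training case, the key new ingredient being the Lipschitz estimate of Lemma~\ref{lem:noise_lip} that replaces the explicit bound on $\X^\top R_\x(\cdot)\X$ that was available with the finite-sample structure. First I would fix two solutions $(\theta_t)_{t\ge 0}$ and $(\tilde\theta_t)_{t\ge 0}$ of \eqref{eq:population_SDE} driven by the \emph{same} Brownian motion $(B_t)_{t\ge 0}$, with $\theta_0\sim\rho_0$ and $\tilde\theta_0\sim\rho^*$ coupled optimally for $\W_2$ (existence and uniqueness of $\rho^*$ follows from uniform ellipticity plus a Lyapunov function $V(\theta)=\tfrac12\|\theta-\theta_*\|^2$, whose drift condition one checks as in Lemma~\ref{lem:Lyapunov_noisy} using $\sigma(\theta_*)\ne 0$ but $\L V(\theta)\le -2(1-\gamma c\K/2)L(\theta)+C$). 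Setting $\delta_t=\theta_t-\tilde\theta_t$, Itô's formula gives
\begin{align*}
    d\|\delta_t\|^2 = -2\langle \Sigma\delta_t,\delta_t\rangle\, dt + \gamma\|\sigma(\theta_t)-\sigma(\tilde\theta_t)\|^2_{\mathrm{HS}}\,dt + 2\sqrt{\gamma}\langle\delta_t,(\sigma(\theta_t)-\sigma(\tilde\theta_t))dB_t\rangle.
\end{align*}
Taking expectations kills the martingale term, and Lemma~\ref{lem:noise_lip} bounds the correction by $2\gamma c\K\langle\Sigma\delta_t,\delta_t\rangle$, so that
\begin{align*}
    \frac{d}{dt}\E\|\delta_t\|^2 \le -2(1-\gamma c\K)\,\E\langle\Sigma\delta_t,\delta_t\rangle \le -2\mu(1-\gamma c\K)\,\E\|\delta_t\|^2,
\end{align*}
using $\Sigma\succeq\mu I_d$; Grönwall then yields the parametric rate after noting $\W_2^2(\rho_t,\rho^*)\le\E\|\delta_t\|^2$ and that at $t=0$ the right side equals $\W_2^2(\rho_0,\rho^*)$ by the choice of coupling.

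For the non-parametric rate I would repeat the computation with the weighted functional $W_\alpha(t):=\E\langle\delta_t,\Sigma^{-\alpha}\delta_t\rangle$, applying Itô to $\theta\mapsto\langle\delta,\Sigma^{-\alpha}\delta\rangle$: the drift contributes $-2\E\langle\Sigma^{1-\alpha}\delta_t,\delta_t\rangle$ and the quadratic-variation term is now controlled by the hypothesis \eqref{eq:lip_chelou}, giving $\frac{d}{dt}W_\alpha(t)\le -2(1-\gamma c\K)\E\langle\Sigma^{1-\alpha}\delta_t,\delta_t\rangle$. The interpolation inequality $\E\|\delta_t\|^2 \le W_\alpha(t)^{\alpha/(1+\alpha)}\big(\E\langle\Sigma^{1-\alpha}\delta_t,\delta_t\rangle\big)^{1/(1+\alpha)}$ (Hölder between the quadratic forms associated with $\Sigma^{-\alpha}$ and $\Sigma^{1-\alpha}$, matched against $\Sigma^0$) converts the differential inequality into one of the form $\dot u \le -C u^{1+1/\alpha}$ for $u=\E\|\delta_t\|^2$, after also using $W_\alpha(t)$ is non-increasing so it can be bounded by its value at $t=0$; integrating this Bernoulli-type ODE produces the stated $[\W_2^2(\rho_0,\rho^*)^{-2/\alpha}+\Cons_\alpha t]^{-\alpha}$ form, and one reads off $\Cons_\alpha$ by tracking the constant $2(1-\gamma c\K)/\alpha$ together with the bound on $W_\alpha(0)$, which is itself controlled via another Lyapunov-type estimate giving the extra term $\tfrac{\gamma c_\alpha\K_\alpha}{1-\gamma c\K}\W_2^2(\rho_0,\rho^*)$.

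The main obstacle is not the coupling mechanics — those are routine once Lemma~\ref{lem:noise_lip} is in hand — but rather two more delicate points. The first is establishing existence and uniqueness of $\rho^*$ in full generality: uniform ellipticity of $\L^*$ (which holds here by the standing assumption $\sigma(\theta)\sigma(\theta)^\top\succeq a^2 L(\theta) I_d \succeq a^4 I_d$, unlike the degenerate training case) gives this cleanly via the Lyapunov drift plus Harris/Doeblin-type arguments or directly from the contraction estimate above applied to two stationary solutions. The second, subtler, point is justifying the bound on $W_\alpha(0)=\E\langle\theta_0-\Theta_*,\Sigma^{-\alpha}(\theta_0-\Theta_*)\rangle$ and more importantly ensuring that $W_\alpha(t)$ stays finite and differentiable for all $t$ — this requires that $\rho^*$ has a finite $\Sigma^{-\alpha}$-weighted second moment, which is a regularity statement about the invariant measure that one must extract from the Lyapunov analysis (applying the generator to $\theta\mapsto\langle\theta-\theta_*,\Sigma^{-\alpha}(\theta-\theta_*)\rangle$ and checking the drift is eventually negative). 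Assuming these moment bounds, all the manipulations above are legitimate and the two rates follow; the hypothesis \eqref{eq:lip_chelou} is precisely what is needed to make the weighted computation close, and in the Gaussian model of Example~\ref{ex:pop_noisy_gaussian} one can verify it directly from the explicit form $\sigma(\theta)^2=(\Sigma(\theta-\theta_*))(\Sigma(\theta-\theta_*))^\top+2L(\theta)\Sigma$.
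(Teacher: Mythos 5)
Your part~(i) is exactly the paper's argument: synchronous coupling of two solutions of \eqref{eq:population_SDE} driven by the same Brownian motion, It\^o on $\|\delta_t\|^2$, Lemma~\ref{lem:noise_lip} to absorb the quadratic-variation term, Gr\"onwall, then the Cauchy-in-$\W_2$ argument for existence of $\rho^*$. That is all fine.

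Your part~(ii), however, has a genuine gap in the way you handle the weighted functional $W_\alpha(t)=\E\langle\delta_t,\Sigma^{-\alpha}\delta_t\rangle$. After It\^o, you obtain
\begin{equation*}
\frac{d}{dt}W_\alpha(t) \;\le\; -2\,\E\langle\Sigma^{1-\alpha}\delta_t,\delta_t\rangle \;+\; 2\gamma c_\alpha\K_\alpha\,\E\langle\Sigma\delta_t,\delta_t\rangle~,
\end{equation*}
and you claim this can be collapsed into $-2(1-\gamma c\K)\E\langle\Sigma^{1-\alpha}\delta_t,\delta_t\rangle$, hence that $W_\alpha$ is non-increasing. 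This step does not work: the negative term carries the weight $\Sigma^{1-\alpha}$ while the positive one carries $\Sigma$, and for $\alpha>0$ the eigenvalues $\lambda_i^{1-\alpha}$ can be much smaller than $\lambda_i$, so the positive contribution is not dominated by the negative one in general. $W_\alpha$ need not be non-increasing. The paper's fix is to \emph{drop} the non-positive drift term entirely, integrate in $t$, and invoke the integral bound
\[
\int_0^t \E\langle\Sigma\delta_u,\delta_u\rangle\,du \;\le\; \frac{1}{2(1-\gamma\K c)}\,\E\|\delta_0\|^2~,
\]
which follows from the contraction estimate already proved in part~(i). This yields the a~priori bound $W_\alpha(t)\le W_\alpha(0)+\tfrac{\gamma c_\alpha\K_\alpha}{1-\gamma\K c}\W_2^2(\rho_0,\rho^*)$, and it is this bounded quantity — not $W_\alpha(0)$ alone — that feeds the constant $\Cons_\alpha$. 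Your "another Lyapunov-type estimate giving the extra term" is the right instinct, but it attaches to $W_\alpha(t)$, not to $W_\alpha(0)$.

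A smaller but real issue: your interpolation inequality
\[
\E\|\delta_t\|^2 \le W_\alpha(t)^{\alpha/(1+\alpha)}\bigl(\E\langle\Sigma^{1-\alpha}\delta_t,\delta_t\rangle\bigr)^{1/(1+\alpha)}
\]
has the wrong power of $\Sigma$ and the wrong distribution of exponents. The correct H\"older (worked out in the eigenbasis of $\Sigma$, $p+q=1$, $\alpha=p/q$) interpolates $\Sigma^{0}$ between $\Sigma^{1}$ and $\Sigma^{-\alpha}$:
\[
\E\|\delta_t\|^2 \;\le\; \bigl(\E\langle\Sigma\delta_t,\delta_t\rangle\bigr)^{\alpha/(1+\alpha)}\,W_\alpha(t)^{1/(1+\alpha)}~,
\]
which is what combines with the differential inequality from part~(i), namely $\frac{d}{dt}\E\|\delta_t\|^2\le -2(1-\gamma c\K)\E\langle\Sigma\delta_t,\delta_t\rangle$, to produce the Bernoulli-type ODE $\dot u\le -C\,u^{1+1/\alpha}$. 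With $\Sigma^{1-\alpha}$ in place of $\Sigma$ the exponents only match for one particular value of $\alpha$, so the inequality as you wrote it is not the one you need. Once these two points are corrected, the rest of your outline is the paper's argument.
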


Similarly as before, the result is similar to the one of the underparametrized regime for the training dynamics. This shows once again that what really matters is the ability of the model to be interpolated or not and not the finite sample size. The proof of the theorem can be found in Appendix, Section \ref{appsec:noise_lip}.
\subsubsection{Further (expected) properties.} In the following paragraph we state what should be the properties of the dynamics similarly to the SDE on the training loss case. Any proof could be adapted, but we take the option to avoid being redundant and state directly the expected properties without re-doing the work of the previous section. To be fair we do not state the results as propositions and theorems. We will also abuse of the notation $\mathsf{c}> 0$ that plays the role of a universal constant in any of the expressions below.
\vspace{0.15cm}

\noindent \textit{It is possible to localize the invariant measure} similarly to what has been done in Proposition~\ref{prop:localization}. We give below estimates on its mean and standard deviation around it. Indeed,  if $\Theta_*$ be a random variable distributed according to $\rho^*$, then we expect that 
    \begin{align}
      \E \left[  \Theta_*  \right] = \theta_*~,  \text{ and } \ \E \left[ \left\| \Theta_* - \theta_* \right\|^2 \right] \leq  \mathsf{c}\frac{ \gamma \K  \sigma^2}{\mu}~.
    \end{align}
\vspace{-0.25cm}

\noindent \textit{We expect from ergodicity that time averages} along a trajectory converge towards spatial mean taken according to the invariant measure, i.e. $\bar{\theta}_t := \frac{1}{t} \int_0^t \theta_s ds \to \E_{\rho^*} \left[ \Theta_* \right] = \theta_*$ , as $t$ goes to infinity, as stated in the equation above. A quantification of this fact would give:
\begin{align*}
    \E \| \Sigma (\bar{\theta}_t - \theta_*) \|^2 &\leq \mathsf{c} \frac{ \gamma \K  \sigma^2 +  \| \theta_0 - \theta_* \|^2 }{t}.
\end{align*}
\vspace{-0.25cm}

\noindent \textit{Step-size decay} would help canceling the noise in large times: indeed, choosing the step-size sequence as $\gamma_t= 1/(\K+t^{\alpha})$ for $\alpha>1$, we get
    \begin{align*}
        \E [\| \theta_t - \theta_* \|^2] &\leq  \frac{\mathsf{c}}{t^{\alpha - 1}},
    \end{align*}
\vspace{-0.25cm}

\noindent \textit{The heavy-tails phenomenon} is  also expected to hold similarly to the training case. More precisely, we show that if one fixes a step-size $\gamma > 0$, all moments up to a certain value $\alpha(\gamma)$ of the invariant distribution exist and all higher moments do not. In other words the step size is a direct control of the tail of the asymptotic distribution.

\section{Conclusion and Perspectives}

In this article, we have shown how the SGD could be efficiently modeled by a SDE to reflect its main qualitative \textit{and} quantitative features: convergence speed, difference between the noisy and noiseless settings and study of the asymptotic distribution. The specificity of the least-square set-up enabled us to show some \textit{localization} of the invariant measure $\rho_*$ in the noisy context: however, it seems possible to improve this understanding to its precise \textit{shape}, and some questions remain: is $\rho_*$ log-concave? Is it possible to characterize its covariance in order to better apprehend its shape? Also, regarding its heavy-tail behavior, it would be great to have a precise estimate of the exponent, from which the moment of the stationary distribution explodes. 

This work has been done in order to convey the idea that the SDE framework can improve the understanding of the SGD dynamics. This is clear for the least-squares setup, yet the important question is to go beyond this and try to apply the same methodology for the non-convex dynamics arising from the training of non-linear neural networks. The study of \textit{single or multi-index models}~\cite{ben2022high,bietti2023learning} could be a first step toward broadening this systematic study.

\vspace{0.25cm}

{\noindent \bfseries Acknowledgments.} AS acknowledges support by the Deutsche Forschungsgemeinschaft (DFG, German Research Foundation) under Germany’s Excellence Strategy – GZ 2047/1, Projekt-ID 390685813. AS and LP warmly thank the the Simons Foundation and especially the Flatiron Institute for its support as this research has been initiated while AS was invited in New York. Finally, the authors extend their gratitude to the \textit{Incubateur de Fraîcheur} for hosting them and providing an ideal atmosphere that fostered exceptional discussions.

\bibliographystyle{unsrt}
\bibliography{SDE_SGD.bib}

\clearpage

\appendix

\begin{center}
{\LARGE \textsc{Organization of the Appendix}}    
\end{center}

\vspace{1cm}

\noindent \textbf{\ref{AppendixA}}. \textbf{SGD on the training loss: Proof of Section~\ref{SGD_train}}.\\[0.5 cm]
 \textbf{\ref{appsec:noiseless_train}}: \textbf{The noiseless case}. 
 This includes the proof of the fact that $\theta_*$ is the orthogonal projection of $\theta_0$ into $\I$ (Lemma \ref{lem:projection}) and the proof of the convergence theorem of $\theta$ to $\theta_*$ (Theorem \ref{Th:convergConst}).\\[0.2 cm]
 \noindent \textbf{\ref{appsec:noisy_training}}: \textbf{The noisy case}. We prove here the existence of a stationary distribution (Lemma \ref{lem:Lyapunov_noisy}).\\[0.2 cm]
  \noindent \textbf{\ref{appsubsec:Invariant measure}}: \textbf{Invariant measure and convergence}. In this subsection, we prove the quantitative convergence to the the stationary distribution (Theorem \ref{thm:convergence_noisy}) with the help of a technical Lemma (Lemma \ref{lem:almost_def_pos}). \\[0.2 cm]
  \noindent \textbf{\ref{appsubsec:Localization}}: \textbf{Localization of the invariant measure}. We prove the insights given on the first and second moments of the stationary distribution (Proposition \ref{prop:localization}). Then, we show the moment explosion of the invariant distribution (Proposition \ref{prop:momentexplo}).  \\[0.2 cm]
  \noindent \textbf{\ref{appsubsec:Reductiontechniques}}: \textbf{Convergence of variance reduction techniques}. In this subsection, we prove the convergence of the time-average of the iterates to $\theta_*$, i.e. ergodicity (Proposition \ref{prop:ergodic}) as well as the convergence of $\theta_t$ to $\theta_*$ in the case of the step-size decay (Proposition \ref{prop:stepsizedecay}). \\[0.5 cm]
 \noindent \textbf{\ref{AppendixB}}: \textbf{Online SGD: Proofs of Section~\ref{SGD_online}.}\\[0.5 cm]
 \noindent \textbf{\ref{appsec:noiseless_pop}}: \textbf{The noiseless case} We give the proof of the convergence to $\theta_*$ with rates (Theorem \ref{Th:convergConstpop}).\\[0.2 cm]
 \noindent \textbf{\ref{appsec:noise_lip}}: \textbf{The noisy case}. We first prove that multiplicative noise carries some Lipschitz property (Lemma \ref{lem:noise_lip}). This allow us to prove the quantitative convergence to the the stationary distribution (Theorem \ref{thm:convergence_noisy_pop}).\\[0.2 cm]

\vspace{0.25cm}

\section{SGD on the training loss: Proofs of Section~\ref{SGD_train}} \label{AppendixA}

\subsection{The noiseless case}
\label{appsec:noiseless_train}

We begin by proving Lemma~\ref{lem:projection} on the fact that 
$\theta_*$ is the orthogonal projection of $\theta_0$ into $\I$, that is $\theta_* = \underset{\X \theta = y}{\mathrm{argmin}} \, \, \|\theta - \theta_0\|^2~.$
\begin{proof}[\textbf{Lemma~\ref{lem:projection}}]
    The proof of the lemma follows from Karush–Kuhn–Tucker conditions. Indeed, the argmin is unique, being the projection of $\theta_0$ on a linear set, and it satisfies that there exists Lagrange multipliers $\lambda \in \R^n$ such that
\begin{align*}
    \theta_* - \theta_0 = \X^\top \lambda~, \ \text{and} \ X\theta_* = y~.
\end{align*}
This means that $ \lambda = (\X \X^\top)^{-1} (y - \X \theta_0) $, and hence, 
\begin{align*}
    \theta_* &= \theta_0+  \X^\top (\X \X^\top)^{-1} (y - \X \theta_0)  \\
    &= \X^\dagger y +  (I - \X^\dagger \X) \theta_0~,
\end{align*}
which corresponds to the given definition of $\theta_*$.
\end{proof}

We prove now the main convergence theorem of this section. This is Theorem~\ref{Th:convergConst}.
\\
\begin{proof}[\textbf{Theorem~\ref{Th:convergConst}}]
\noindent $\bf (i)$ The key ingredient of the proof is the Gronwall Lemma. Combining the Itô formula with \eqref{eq:train_SDE} gives us
    \begin{align*}
       \frac{d}{dt} \E \| \theta_t - \theta_*\|^2 &=  -2 \E \langle \theta_t - \theta_*,  \Sigma (\theta_t - \theta^*)  \rangle + \frac{\gamma}{n} \E \Tr\left( \X \X^\top R^2_\x (\theta_t)\right) \\
         &\leq  -2 \E \langle \theta_t - \theta_*,  \Sigma (\theta_t - \theta^*)  \rangle + \frac{\gamma}{n} \E \sum_{i=1}^n \|x_i\|^2 (\langle \theta_t , x_i \rangle - y_i)^2 \\
         &\leq  -2 \E \langle \theta_t - \theta_*,  \Sigma (\theta_t - \theta^*)  \rangle + \gamma \K \E \langle \theta_t - \theta_*,  \Sigma (\theta_t - \theta^*)  \rangle \\
         &\leq  -(2 - \gamma \K) \E \langle \theta_t - \theta_*,  \Sigma (\theta_t - \theta^*)  \rangle~.
    \end{align*}
       By integrating the latter, we get
    \begin{align*}
       \E \| \theta_t - \theta_*\|^2 &=  \| \theta_0 - \theta_*\|^2  - \frac{2}{n} \int_{0}^t \E \| \X \left(\theta_u - \theta_* \right) \|^2   du   \\
        &\hspace{2.5cm} + \frac{\gamma}{n} \int_{0}^{t} \E\Tr\left(\X^\top \left(\diag(r_{\mathsf{x}}(\theta))^2 - \frac{1}{n}r_{\mathsf{x}}(\theta_t)\, r_{\mathsf{x}}(\theta_t)^\top \right) \X \right) du .
    \end{align*}
    Note that 
    \begin{align*}
    \frac{\gamma}{n} \int_{0}^{t} \E\Tr\left(\X^\top \left(\diag(r_{\mathsf{x}}(\theta))^2  \right) \X \right) du &= \frac{\gamma}{n} \int_{0}^{t} \E \sum_{i=1}^n r_{\mathsf{x}}^2(\theta_t)_i ( \X  \X^\top)_{ii} du \\
    &= \frac{\gamma}{n} \int_{0}^{t} \E \sum_{i=1}^n r_{\mathsf{x}}^2(\theta_t)_i \|x_i\|^2 du  \\
    &\leq \frac{\gamma}{n} \int_{0}^{t} \E \max_{i}{\|x_i\|^2} \sum_{i=1}^n r_{\mathsf{x}}^2(\theta_t)_i  du  \\
    &\leq 2 \gamma \K \int_{0}^{t} \E L(\theta_u)  du, 
    \end{align*}
          and 
          \begin{multline*}
              \frac{\gamma}{n^2} \int_{0}^{t} \E\Tr\left(\X^\top r_{\mathsf{x}}(\theta_t)\, r_{\mathsf{x}}(\theta_t)^\top \X \right) du = \frac{\gamma}{n^2} \int_{0}^{t} \E\Tr\left( r_{\mathsf{x}}(\theta_u)^\top \X\X^\top r_{\mathsf{x}}(\theta_u)\,  \right)  du \\
              \geq \frac{\gamma}{n} \lambda_{min}({\Sigma}) \int_{0}^{t} \E  \left(r_{\mathsf{x}}(\theta_t)^\top  r_{\mathsf{x}}(\theta_t)\,  \right) du= 2 \gamma   \lambda_{min}({\Sigma}) \int_{0}^{t} \E  L(\theta_u) du.
               \end{multline*}
Collecting the estimates, we obtain 
 \begin{equation*}
       \E \| \theta_t - \theta_*\|^2 \leq  \| \theta_0 - \theta_*\|^2  - \left(4+2 \gamma  \lambda_{min}({\Sigma})-2 \gamma \K\right)  \int_{0}^t \E L(\theta_u)   du.
    \end{equation*}

We remark that $\theta_u - \theta_* \in \text{Ran}(X^T)$. Indeed, we have
 \[
 \theta_u - \theta_*=(\theta_u - \theta_0)+(\theta_0 - \theta_*),
 \]
the first term on the r.h.s. is in $\text{Ran}(X^T)$ by \eqref{eq:train_SDE} and the second term also by \eqref{eq:least_square_estimator}. We recall that ${\R^d}=\text{Ran}(X^T)\oplus \text{Ker}(X)$ and therefore note that $X_{|\text{Ran}(X^T)}$ is a bijection into its image. Combining the two last facts yield
 \[
\frac{\lambda_{min}(\Sigma)}{2} \E  \|  \left(\theta_u - \theta_* \right) \|^2 \leq \frac{1}{2n}\E  \| \X \left(\theta_u - \theta_* \right) \|^2 =   \E L(\theta_u) 
\]
because $\X\X^\top$ as the same spectrum as $\X^\top\X$. All in all, we have
 \begin{equation*}
        \E \| \theta_t - \theta_*\|^2 \leq  \| \theta_0 - \theta_*\|^2 - \lambda_{min}(\Sigma) \left(2+ \gamma   \lambda_{min}({\Sigma})- \gamma K\right) \int_{0}^t \E \| \left(\theta_u - \theta_* \right) \|^2   du  , 
    \end{equation*}
  and the first statement of the Theorem follows with Gronwall Lemma. We move to the proof of the second statement of the Theorem.
    \\
\noindent $\bf (ii)$ We define $\eta_t:=\theta_t - \theta_*$ and recall that 
    \begin{equation*}
        \E \| \eta_t\|^2 \leq  \| \theta_0 - \theta_*\|^2  - 2(2-\gamma \K)\int_{0}^t \E L(\theta_u)   du.
    \end{equation*}
    The first consequence of this inequality is that
    \begin{align*}
        \int_{0}^t \E L(\theta_u) du \leq  \frac{1}{2(2-\gamma \K)}\| \theta_0 - \theta_*\|^2~.
    \end{align*}
    
    We want to lower bound the term $\E L(\theta_u)$ without using the smallest eigenvalue of $\Sigma$, that is supposedly infinitely small. First note $\X^\top \X$ is symmetric and thus diagonalizable in an orthonormal basis $(v_i)$. We steadily check that
    \[
    \text{Ker}(\X^\top \X)=\text{Ker}( \X)
    \]
    by invertibility of $\X\X^\top$. We thus have  ${\R^d}=\text{Ran}(X^T)\oplus \text{Ker}(\X^\top \X)$. We denote by $\lambda_1 \leq \dots \leq \lambda_{d^*}$ the non zero eigenvalue of $\X^\top \X$ where $d^*\leq d$ is the number of non-zero eigenvalues and $(v_1, \hdots, v_{d^{*}})$ the corresponding eigenvectors. For all $t  > 0$, we thus  have the following decomposition  $\eta_t = \sum_{k = 1}^{d^*} \eta^t_k v_k$. Thanks to the Hölder inequality, we claim that the following holds true:\\
        Let $p,q \in (0,1), \, p+q=1$, we have, for all $t \geq 0$, 
        \begin{equation}\label{eq:hoelder}
            \left(\E \left[\|\eta_t\|^2\right]\right)^{\frac{1}{p}} \leq 2\E \left[ L(\theta_t)\right]  \left(\E \left[\langle  \eta_t,  \Sigma^{-p/q} \eta_t   \rangle \right]\right)^{q/p}~.
        \end{equation}
 We move to the proof of the above claim. To lighten the notation, we write $\eta_k^t=\eta_k$ in the following to wit
    \begin{align*}
        \E \left[\|\eta_t\|^2\right]  &=  \sum_{k=1}^{d^*} \E\left[\eta_k^2\right] =  \sum_{k=1}^{d^*} \E\left[\eta_k^{2p} \lambda_k^p \eta_k^{2q} \lambda_k^{-p}\right] \leq  \sum_{k=1}^{d^*} \left(\E\left[\eta_k^{2} \lambda_k\right]\right)^{p} \left(\E\left[\eta_k^{2} \lambda_k^{-p/q}\right]\right)^{q}~,
\end{align*}
thanks to Hölder inequality w.r.t.\ the expectation. Then, applying once again the Hölder inequality for the sum, we have
     \begin{align*}
      \E \left[\|\eta_t\|^2\right] &\leq  \left(\sum_{k=1}^{d^*} \E\left[\eta_k^{2} \lambda_k\right]\right)^{p} \left(\sum_{k=1}^{d^*}\E\left[\eta_k^{2} \lambda_k^{-p/q}\right]\right)^{q} \\
      &=   \left(\E 2 L(\theta)\right)^{p} \left(\E \left[\langle  \eta_t,  \Sigma^{-p/q} \eta_t   \rangle \right]\right)^{q}~,
    \end{align*}
    and the claim \eqref{eq:hoelder} follows.
We now prove that $t \mapsto \E \left(\langle  \eta_t,  \Sigma^{-p/q} \eta_t  \rangle \right)$ is bounded. Indeed, we proceed as before with Itô formula to get
    \begin{align*}
     \frac{d}{dt}\E \left(\langle  \eta_t,  \Sigma^{-p/q} \eta_t   \rangle \right) &= 2 \E \langle  \frac{d\eta_t}{dt},  \Sigma^{-p/q} \eta_t   \rangle + \E \langle  \frac{d\eta_t}{dt},  \Sigma^{-p/q} \frac{d\eta_t}{dt}   \rangle \\
     &= - 2 \E \langle  -\Sigma \eta_t ,  \Sigma^{-p/q} \eta_t   \rangle + \E \langle \sqrt{\frac{\gamma}{n}}  \X^\top R_\x (\theta_t) d B_t,  \Sigma^{-p/q} \sqrt{\frac{\gamma}{n}}  \X^\top R_\x (\theta_t) d B_t  \rangle \\
     &= - 2  \E \langle \eta_t ,  \Sigma^{1-p/q} \eta_t   \rangle + \frac{\gamma}{n}\E \Tr\left((\X^TR)^T \Sigma^{-p/q} \X^TR \right) \\
     &\leq \frac{\gamma}{n}\E \Tr\left(\Sigma^{-p/q} \X^\top \left(\diag(r_{\mathsf{x}}(\theta))^2 - \frac{1}{n}r_{\mathsf{x}}(\theta_t)\, r_{\mathsf{x}}(\theta_t)^\top \right) \X \right)  \\ 
     &\leq \frac{\gamma}{n} \E \sum_{i = 1}^n \langle x_i, \Sigma^{-p/q} x_i \rangle (\langle \eta_t , x_i \rangle-y_i)^2  \\ 
     & \leq  2\gamma \K_{p/q} L(\theta_t)~, 
\end{align*}
where we have used that for all $i \in \llbracket 1,n \rrbracket$, we have $ \langle x_i, \Sigma^{-p/q} x_i \rangle \leq \K_{p/q}$. Then, by integrating with respect to $t$, it yields
     \begin{align*}
    \E \left(\langle  \eta_t,  \Sigma^{-p/q} \eta_t   \rangle \right) &\leq \langle  \eta_0,  \Sigma^{-p/q} \eta_0   \rangle +2 \gamma \K_{p/q} \int_0^t \E L(\theta_u) du  \\
    &\leq \langle  \eta_0,  \Sigma^{-p/q} \eta_0   \rangle + \frac{\gamma \K_{p/q}}{2-\gamma \K} \|\theta_0 - \theta_*\|^2~. \\
    \end{align*}

Hence, calling $\Cons =\frac{1}{2}( \langle \eta_0,  \Sigma^{-p/q} \eta_0   \rangle + \frac{\gamma \K_{p/q}}{2-\gamma \K} \|\theta_0 - \theta_*\|^2)^{-p/q}~,$ we have the inequality, for all $t \geq 0$,
\begin{align*}
    \E L(\theta_t) \geq \Cons \left(\E \|\eta_t\|^2\right)^{1/p}~,
\end{align*}
and this yields the inequality
\begin{equation}\label{eq:algr}
    \E \| \eta_t\|^2 \leq  \| \eta_0 \|^2  - \Cons \int_{0}^t \left(\E \| \eta_u\|^2\right)^{1/p} du~,
\end{equation}
that implies, from a slight modification of Gronwall Lemma that for all $t \geq 0$, we have
\begin{equation*}
\E \| \eta_t\|^2 \leq \left[ \frac{1}{\frac{1}{\| \eta_0\|^{2(1/p - 1)}} + (1/p - 1) \Cons t}\right]^{\frac{1}{1/p - 1}}~,
\end{equation*}
 this gives the result claim in the theorem. To see how the last inequality goes, we define $g(t)=\| \eta_0 \|^2  - \Cons \int_{0}^t \left(\E \| \eta_u\|^2\right)^{1/p} du$ (which is positive) and we rewrite \eqref{eq:algr} as
 \begin{equation*}
\left(\frac{g'(t)}{-\Cons}\right)^p \leq g(t) \Longleftrightarrow \frac{g'(t)}{g(t)^{1/p}}\geq -\Cons  \Longrightarrow \frac{1}{-1/p+1}(g(t)^{-1/p+1}-g(0)^{-1/p+1})\geq -\Cons t,
\end{equation*}
and thus
 \begin{equation*}
g(t) \leq \left[-\Cons t (-\frac{1}{p}+1)+\| \eta_0 \|^{2(-1/p+1)}\right]^{\frac{1}{-1/p+1}},
\end{equation*}
and we conclude with \eqref{eq:algr}. \\
\noindent To prove the convergence almost surely, we use the Itô Formula to obtain 
\begin{align*}
        \E \left(\| \theta_t - \theta_*\|^2 \big|\, \mathcal{F}_{s} \right) &= \| \theta_0 - \theta_*\|^2+  2 \int_{0}^{s}  \sqrt{\frac{\gamma}{n}} \langle  R_\x (\theta_u)^\top\X \left( \theta_u -  \theta_* \right),    d B_u \rangle  \\
        & \hspace{1cm}+ \E \big(\int_{0}^{t} \frac{\gamma}{n}\Tr\left(\X^\top \left(\diag(r_{\mathsf{x}}(\theta))^2 du -  \frac{1}{n}r_{\mathsf{x}}(\theta_u)\, r_{\mathsf{x}}(\theta_u)^\top \right) \X \right) \\
        & \hspace{1cm} -\frac{2}{n} \langle \left(\theta_u - \theta_* \right),  \X^\top \X \left(\theta_t - \theta_*\right)  \rangle du \big|\, \mathcal{F}_{s} \big),
    \end{align*}
    For $\gamma <  \frac{1}{2\K}$, we have proven in (i) that the term inside the conditional expectation value/integral is negative. We can thus overestimate the latter by integrating from $0$ to $s$ to obtain
    \begin{equation*}
     \E \left(\| \theta_t - \theta_*\|^2 \big|\, \mathcal{F}_{s} \right) \leq  \| \theta_s - \theta_*\|^2,
       \end{equation*}
       and we deduce that $\| \theta_t - \theta_*\|^2$ is a positive supermartingale, $L^1$ bounded and therefore converge almost surely to $0$.
 \end{proof}

\subsection{The noisy case}
\label{appsec:noisy_training}

We begin by showing why $V (\theta) := \frac{1}{2} \|\theta - \theta_*\|^2$ is a Lyapunov function for the dynamics. This is Lemma~\ref{lem:Lyapunov_noisy}. \\

\begin{proof}[\textbf{Lemma~\ref{lem:Lyapunov_noisy}}]
    First, as $L$ is a quadratic function, we have that for all $\theta \in \R^d$, we have $$  \langle \nabla L(\theta), \theta - \theta_* \rangle =2(L(\theta) - L(\theta_*)). $$
    We recall that $\X^\top \X \theta_*= \X^\top y$ and it follows that $\nabla L(\theta)=\Sigma(\theta-\theta_*)$. We thus deduce that for all $\theta \in \R^d$, 
    \begin{align*}
        \mathcal{L}V(\theta) &= - \langle \Sigma \left(\theta - \theta_*\right), \left(\theta - \theta_*\right) \rangle + \frac{\gamma}{2n} \Tr \left[ \X^\top R^2_{\x}(\theta) \X\right]  \\
        &= 2( L(\theta_*) - L(\theta) ) + \frac{\gamma}{2n} \Tr \left[ \X^\top R^2_{\x}(\theta) \X\right]  \\
        &\leq 2( L(\theta_*) - L(\theta) )+ \frac{\gamma}{2n} \sum_{i = 1}^n (\langle x_i, \theta \rangle - y_i)^2 \|x_i\|^2 \\
        &\leq 2 \left(L(\theta_*) - L(\theta)(1-\frac{\gamma \K}{2})\right)~.
    \end{align*}
This Lyapunov identity implies the existence of a stationary distribution as explained in \cite[Theorem 3.7]{khasminskii2011stochastic}. 
This implies that for all $t \geq 0$, the dynamics does not explode.
\end{proof}

\subsubsection{Invariant measure and convergence} \label{appsubsec:Invariant measure}
We now turn into proving the main theorem of this part on the quantitative convergence to the stationary distribution. 
\\

\begin{proof}[\textbf{Theorem \ref{thm:convergence_noisy}}]
    $\bf (i)$ Wassertein contraction comes essentially from \textit{coupling arguments}. Let $\gamma \leq \K^{-1}$ and $\rho^1_0, \rho^2_0 \in \mathbb{P}_2(\R^d)$ two possible initial distributions. Then by \cite[Theorem 4.1]{villani2009optimal}, there exists a couple of random variables $\theta^1_0, \theta^2_0$ such that $W_2^2 (\rho^1_0, \rho^2_0) = \E \left[ \|\theta^1_0 - \theta^2_0\|^2 \right]$. Let $(\theta^1_t)_{t \geq 0}$ (resp. $(\theta^2_t)_{t \geq 0}$) be the solution of the SDE \eqref{eq:train_SDE}, sharing the same Brownian motion $(B_t)_{t \geq 0}$. Then, for all $t \geq 0$, the random variable $(\theta^1_t, \theta^2_t)$ is a coupling between $\rho^1_t$ and $\rho^2_t$, and hence 
    \begin{align*}
        W_2^2 ( \rho^2_t, \rho^1_t) \leq \E\left[ \| \theta^1_t - \theta^2_t \|^2 \right].
    \end{align*}
    Moreover, we denote by $\|.\|_{\mathrm{HS}}$ the Frobenius norm and by Itô formula, we have 
    \begin{align*}
        \frac{d}{dt}\E\left[ \| \theta^1_t - \theta^2_t \|^2 \right] &= - \frac{2}{n}\E\langle \theta^1_t - \theta^2_t, \X^\top (\X \theta^1_t - y) -  \X^\top (\X \theta^2_t - y)  \rangle + \frac{\gamma}{n} \E\| \X^\top R_{\x} (\theta^1_t) - \X^\top R_{\x} (\theta^2_t) \|^2_{\mathrm{HS}} \\
        &= - \frac{2}{n} \E\| \X(\theta^1_t - \theta^2_t) \|^2 + \frac{\gamma}{n} \E\| \X^\top (R_{\x} (\theta^1_t) - R_{\x} (\theta^2_t)) \|^2_{\mathrm{HS}} \\
        &\leq  - \frac{2}{n} \E\| \X(\theta^1_t - \theta^2_t) \|^2 \\
        &\hspace{1cm} + \frac{2\gamma}{n} \E\left(\| \X^\top \diag(\langle \theta^1_t - \theta^2_t, x_i \rangle_i ) \|^2_{\mathrm{HS}} + \frac{1}{n^2}\| \X^\top (\langle\theta^1_t - \theta^2_t,  x_i \rangle)_i \mathds{1}^\top  \|^2_{\mathrm{HS}} \right)~.
    \end{align*}
    Furthermore, we have for all $\theta^1, \theta^2 \in \R^d$ that
    \begin{align*}
        \| \X^\top \diag(\langle \theta^1 - \theta^2, x_i \rangle_i ) \|^2_{\mathrm{HS}} &= \Tr \left( \X \X^\top \diag(\langle \theta^1 - \theta^2, x_i \rangle^2_i )  \right) \\
        &= \sum_{i = 1}^n \|x_i\|^2 \langle \theta^1 - \theta^2, x_i \rangle^2  \\
        &\leq \K \|\X (\theta^1 - \theta^2)\|^2~, 
    \end{align*} 
    and 
    \begin{align*}
    \frac{1}{n^2}\| \X^\top (\langle\theta^1 - \theta^2, x_i \rangle)_i \mathds{1}^\top  \|^2_{\mathrm{HS}} &=  \frac{1}{n} \Tr \left( \X \X^\top (\langle\theta^1 - \theta^2, x_i \rangle)_i (\langle\theta^1 - \theta^2,  x_i \rangle)_i^\top  \right) \\
     &\leq  \frac{1}{n} \Tr \left( \X \X^\top \right)  \Tr \left( (\langle\theta^1 - \theta^2, x_i \rangle)_i (\langle\theta^1 - \theta^2,  x_i \rangle)_i^\top  \right) \\
    &\leq  \K \|\X (\theta^1 - \theta^2)\|^2~,
  \end{align*} 
where we use from the first to the second line the inequality $\Tr(AB) \leq \Tr(A) \Tr(B)$ for any $A$ and $B$ positive semi-definite.
Altogether, this gives the inequality:
   \begin{align}\label{eq:t1-t2}
        \frac{d}{dt}\E\left[ \| \theta^1_t - \theta^2_t \|^2 \right] 
        &\leq  - \frac{2 - 4\gamma \K }{n} \E\| \X(\theta^1_t - \theta^2_t) \|^2 \\
        &\leq  - 2\mu(1 - 2 \gamma  \K) \E \|\theta^1_t - \theta^2_t \|^2~,
    \end{align}
    and by Gronwall Lemma, denoting $c_\gamma = 2\mu( 1 - 2\gamma \K)$ this gives that 
    \begin{align*}
    W_2^2 ( \rho^1_t, \rho^2_t) \leq \E\left[ \| \theta^1_t - \theta^2_t \|^2 \right] \leq e^{ - c_\gamma t} \E\left[ \| \theta^1_0 - \theta^2_0 \|^2 \right] = e^{ - c_\gamma t} W_2^2 (\rho^2_0, \rho^1_0)~.
    \end{align*}
    Now, for all $s \geq 0$ setting $\rho^1_0 = \rho_0 \in \P_2(\R^d)$ and $\rho^2_0 = \rho_s \in \mathbb{P}_2(\R^d)$, we have for all $t \geq 0$,
    \begin{align*}
    W_2^2 ( \rho_t, \rho_{t+s}) \leq  e^{ - c_\gamma t} W_2^2 (\rho_0, \rho_s)~,
    \end{align*}
    which shows that the process $(\rho_t)_{t \geq 0}$ is of Cauchy type, and since $(\mathbb{P}_2(\R^d), W_2)$ is a Polish space, $\rho_t \to \rho_* \in \mathbb{P}_2(\R^d) $  as $t$ grows to infinity.
Now, since there exists a stationary solution to the process, let us fix $ \rho^1_0 = \rho^* \in \mathbb{P}_2(\R^d)$ and $\rho^2_0 = \rho_0\in \mathbb{P}_2(\R^d)$. We have then, 
$$ W_2^2 ( \rho_t, \rho^*) \leq e^{ - c_\gamma t} W_2^2 (\rho_0, \rho^*)~, $$
which concludes the first part of the Theorem. 

\vspace{0.35cm}

$\bf (ii)$   We will use the same steps as for the proof of Theorem \ref{Th:convergConst} (ii). Again, one steadily checks that for $p,q \in (0,1), \, p+q=1$, we have, for all $t \geq 0$, 
        \begin{equation}
            \frac{1}{n}\E\left[ \| \X \left(\theta^1_t - \theta^2_t\right) \|^2 \right] \geq \frac{  \E\left[ \| \left(\theta^1_t - \theta^2_t\right) \|^2 \right]^{1/p}}{\E\left[ \langle \theta^1_t - \theta^2_t, \Sigma^{-p/q} (\theta^1_t - \theta^2_t) \rangle \right]^{q/p} }~.
        \end{equation}
By Ito formula, skipping the details, we get
\begin{align*}
     \frac{d}{dt}&\E \left(\langle  \theta^1_t - \theta^2_t,  \Sigma^{-p/q} (\theta^1_t - \theta^2_t )  \rangle \right) \\
     &= - 2  \E \langle \theta^1_t - \theta^2_t ,  \Sigma^{1-p/q} (\theta^1_t - \theta^2_t)   \rangle +\frac{\gamma}{n}\E \Tr\left( \Sigma^{-p/q} \X^T (R_{\x}(\theta_1)-R_{\x}(\theta_2))(R_{\x}(\theta_1)-R_{\x}(\theta_2))^\top \X
     \right) \\
     & \leq \frac{\gamma}{n}\E \Tr\left(  \Sigma^{-p/q} \X^T (R_{\x}(\theta_1)-R_{\x}(\theta_2))(R_{\x}(\theta_1)-R_{\x}(\theta_2))^\top \X 
     \right)  \\ 
     &= \frac{\gamma}{n} \|  (R_{\x}(\theta_1)-R_{\x}(\theta_2))^\top \X \Sigma^{-p/2q}\|^2_{\mathrm{HS}} \\
     & \leq  \frac{2\gamma}{n} \E\big(\| \Sigma^{-p/2q} \X^\top \diag(\langle \theta^1_t - \theta^2_t, x_i \rangle_i ) \|^2_{\mathrm{HS}} + \frac{1}{n^2}\| \Sigma^{-p/2q} \X^\top (\langle\theta^1_t - \theta^2_t,  x_i \rangle)_i \mathds{1}^\top  \|^2_{\mathrm{HS}} \big)\\
     &= \frac{2\gamma}{n} \E \big[\Tr \left( \X \Sigma^{-p/q} \X^\top \diag(\langle \theta^1 - \theta^2, x_i \rangle^2_i )  \right) + \frac{1}{n} \Tr\left(  \X\Sigma^{-p/q} \X^\top (\langle\theta^1_t - \theta^2_t,  x_i \rangle)_i (\langle\theta^1_t - \theta^2_t,  x_i \rangle)_i^{\top}  \right) \big]\\
     & \leq \frac{2\gamma}{n}\E \big[ \sum_{i = 1}^n \langle x_i, \Sigma^{-p/q} x_i \rangle \langle \theta^1 - \theta^2, x_i \rangle^2+\frac{1}{n}\Tr\left(  \X\Sigma^{-p/q} \X^\top\right) \Tr\left( (\langle\theta^1_t - \theta^2_t,  x_i \rangle)_i (\langle\theta^1_t - \theta^2_t,  x_i \rangle)_i^{\top}  \right)\big]~,
\end{align*}
the last line by the fact that $\Tr(AB) \leq \Tr(A) \Tr(B)$ for $A$ and $B$ positive semi-definite. We thus conclude that
\begin{align*}
     \frac{d}{dt}&\E \left(\langle  \theta^1_t - \theta^2_t,  \Sigma^{-p/q}( \theta^1_t - \theta^2_t )  \rangle \right) \leq \frac{4 \gamma}{n} \E \|\X (\theta^1 - \theta^2)\|^2  \K_{p/q},
\end{align*}  
where we have used that for all $i \in \llbracket 1,n \rrbracket$, we have $ \langle x_i, \Sigma^{-p/q} x_i \rangle \leq \K_{p/q}$.
In addition, by \eqref{eq:t1-t2}, we get   
\begin{equation*}
\int_{0}^{t}\E\left[ \| \X(\theta^1_u - \theta^2_u) \|^2 \right] du  \leq   \frac{n}{2 - 4\gamma \K} \E\| (\theta^1_0 - \theta^2_0) \|^2~.
        \end{equation*}
        Collecting the estimates, we thus get
        \begin{equation*}
           \frac{1}{n}\E\left[ \| \X \left(\theta^1_t - \theta^2_t\right) \|^2 \right] \geq C  \E\left[ \| \left(\theta^1_t - \theta^2_t\right) \|^2 \right]^{1/p},
        \end{equation*}
        with $C=\left(\E\left[ \langle \theta^1_0 - \theta^2_0, \Sigma^{-p/q} (\theta^1_0 - \theta^2_0) \rangle \right]+2 \gamma \frac{\K_{p/q}}{1-2\gamma\K}\E(\|\theta^1_0-\theta^2_0\|^2)\right)^{-q/p}$,
        which combined with \eqref{eq:t1-t2}, gives
        \begin{align*}
        \E\left[ \| \theta^1_t - \theta^2_t \|^2 \right] 
        &\leq \E\left[ \| \theta^1_0 - \theta^2_0 \|^2 \right] - C\left(2 - 4\gamma \K\right) \int_{0}^{t} \E\| (\theta^1_u - \theta^2_u) \|^2du~,
    \end{align*}
that implies, from a slight modification of Gronwall Lemma that for all $t \geq 0$ (for $\Cons=2C\left(1 - 2\gamma \K)\right)$), we have
\begin{equation*}
\E\left[ \| \theta^1_t - \theta^2_t \|^2 \right] \leq \left[ \E\left[ \| \theta^1_0 - \theta^2_0 \|^2 \right]^{1  - 1/p} + (1/p - 1) \Cons t\right]^{\frac{1}{1 - 1/p }}~,
\end{equation*}
and we conclude as for (i).
\end{proof}

\subsubsection{Localization of the invariant measure.}
\label{appsubsec:Localization}

We now turn in the localization of the invariant measure presented in Proposition~\ref{prop:localization}.
\\

\begin{proof}[\textbf{Proposition~\ref{prop:localization}}]
We proved that $\theta_t$ tends weakly to $\Theta_*$ for $\gamma \leq 1/\K$. To insure that the first and second moment of $\theta_t$ tends to the first and two moments of $\Theta_*$, we first prove that it exists a $M$ such that for all $t$,  $\E\left\|\theta_t - \theta_*\right\|^4 \leq M$. By the Itô formula, 
\begin{align*}
    \frac{d}{dt}\E [V(\theta)]=\frac{d}{dt} \frac{1}{2}\E [\| \theta_t - \theta_* \|^2] 
    &\leq \E\left(-\frac{1}{n} \|\X (\theta_t - \theta_*) \|^2+\frac{\gamma \K }{2n} \|\X \theta_t - y\|^2\right) \notag\\
    &\leq  \E\left((-\frac{1}{n}+\frac{\gamma \K }{n}) \|\X (\theta_t - \theta_*) \|^2+\frac{\gamma \K }{n} \|\X \theta_t^* - y\|^2\right) \notag\\
    &\leq(-2+2\gamma \K ) \mu V(\theta)+2 \gamma \K  \sigma^2,
\end{align*}
 the second line by triangle inequality and the third line because $L(\theta_*)=\sigma^2$.
This implies by Gronwall Lemma that
\begin{equation}\label{eq:V}
    \E [V(\theta)]\leq e^{-2(1-\gamma \K)t}\left(\frac{1}{2}\| \theta_0 - \theta_* \|^2-\frac{ \gamma \K  \sigma^2}{(1-\gamma \K ) \mu}\right)+\frac{ \gamma \K  \sigma^2}{(1-\gamma \K )\mu}~,
\end{equation}
thus  $t \mapsto \E [\| \theta_t \|^2]$ is uniformly upper bounded for $\gamma<1/\K$. Again, by Ito formula, we get
 \begin{align*}
    \frac{d}{dt} \E\left\|\theta_t - \theta_*\right\|^4 
    &= -4 \E \left\| \theta_t - \theta_* \right\|^2 \langle \theta_t-\theta_*, \Sigma(\theta-\theta*) \rangle\\
    & \hspace{1cm} +\frac{\gamma}{2n} \E \Tr\left(\X^\top R_\x (\theta_t) R_\x (\theta_t)^{\top}\X \left(8 (\theta_t-\theta_*)(\theta_t-\theta_*)^{\top}+4  \left\|\theta_t - \theta_*\right\|^2I_d\right)\right) \\
    &\leq -\frac{4}{n}  \E \left\| \theta_t - \theta_* \right\|^2 \left\| \X (\theta_t - \theta_*) \right\|^2\\
    & \hspace{1cm}+\frac{\gamma}{2n} \E \Tr\left(\X^\top \diag(r_{\mathsf{x}}(\theta))^2\X \left(8 (\theta_t-\theta_*)(\theta_t-\theta_*)^{\top}+4  \left\|\theta_t - \theta_*\right\|^2I_d\right)\right) \\
    &\leq -\frac{4}{n}  \E \left\| \theta_t - \theta_* \right\|^2 \left\| \X (\theta_t - \theta_*) \right\|^2\\
    & +\frac{8\gamma}{2n} \E \Tr\left(\X^\top \diag(r_{\mathsf{x}}(\theta))^2\X \left( (\theta_t-\theta_*)(\theta_t-\theta_*)^{\top}\right)\right) +\frac{4\gamma \K}{2n} \E \left\|\theta_t - \theta_*\right\|^2 \left\|\X\theta_t - y\right\|^2\\
    &\leq -\frac{4}{n}  \E \left\| \theta_t - \theta_* \right\|^2 \left\| \X (\theta_t - \theta_*) \right\|^2+\frac{6 \gamma \K}{n}\E\left\|\X\theta_t - y\right\|^2\left\|\theta_t - \theta_*\right\|^2, 
\end{align*}
the last line by the fact that $\Tr(AB) \leq \Tr(A) \Tr(B)$ for $A$ and $B$ positive semi-definite and we deduce with the triangle inequality that the latter is less than
\begin{align*}
       \frac{d}{dt} \E\left\|\theta_t - \theta_*\right\|^4 &\leq \left(-\frac{4}{n}+\frac{12 \gamma \K}{n}  \right)\E \left\| \theta_t - \theta_* \right\|^2 \left\| \X (\theta_t - \theta_*) \right\|^2+\frac{12 \gamma \K}{n}\E\left\|\X\theta_* - y\right\|^2\left\|\theta_t - \theta_*\right\|^2 \\
    &\leq (-4+12 \gamma \K  ) \mu \E \left\| \theta_t - \theta_* \right\|^4 +24 \gamma \K \sigma^2 \E\left\|\theta_t - \theta_*\right\|^2,
 \end{align*}
 the last term is bounded by \eqref{eq:V} and we conclude with Gronwall Lemma for $\gamma<\frac{1}{3 \K}$ the claim on the fourth moment.

    For the mean, we consider the equation satisfies by $\E(\theta_t)$: $
        \frac{d}{dt} \E(\theta_t) = - \nabla L (\E(\theta_t))~,$
    by linearity of $\theta \mapsto \nabla L(\theta)$. Hence, we have 
    \begin{align*}
        \frac{d}{dt} \frac{1}{2}  \left\|\E(\theta_t) - \theta_*\right\|^2 = - \langle \Sigma (\E(\theta_t) - \theta_*), \E(\theta_t) - \theta_* \rangle \leq -\mu  \left\|\E(\theta_t) - \theta_*\right\|^2~,
    \end{align*}
    which gives that $\E(\theta_t) \to \theta_*$ as $t$ goes to infinity. Combining the latter with the boundedness of the fourth moment (One actually needs only a $1+\epsilon$ moment) and the weak convergence of $\theta_t$ to $\Theta_*$ imply the first claim of the proposition. By \eqref{eq:V}, we recall that

\begin{align*}
 \E [V(\theta)]\leq e^{-2(1-\gamma \K)t}\left(\frac{1}{2}\E\| \theta_0 - \theta_* \|^2-\frac{ \gamma \K  \sigma^2}{(1-\gamma \K ) \mu}\right)+\frac{ \gamma \K  \sigma^2}{(1-\gamma \K )\mu},
\end{align*}
which which combined with the boundedness of the fourth moment implies by taking the limit in the inequality, and weak convergence of $\rho_t$ towards $\rho^*$ that $\theta_* \sim \rho^*$ satisfies the claimed inequality in the proposition. 
\end{proof}
We turn into proving the moment explosion of the invariant distribution. This corresponds to proving Proposition \ref{prop:momentexplo}. First, we need the following lemma that aims at lower bounding some quartic form of $\theta$: 
\begin{lemma} \label{lem:almost_def_pos}
    For $n\geq 2d$, there exists a constant $\mathsf{c}> 0$ that depends only on $\X, y$ such that  $ \forall \eta \in \R^d$, $$\langle \X\eta, R_\x^2  \X\eta \rangle \geq \mathsf{c} \|\eta\|^2\|\X\eta\|^2~.$$
\end{lemma}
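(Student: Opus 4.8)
The plan is to reduce the claimed quartic lower bound to an explicit scalar positivity statement on the $d$‑dimensional subspace $\mathrm{Ran}(\X)\subset\R^n$, and then to close by compactness.

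First I would dispose of the factor $\|\eta\|^2$. We are in the noisy regime, so $\ker\X=\{0\}$ and $\|\X\eta\|^2=n\langle\eta,\Sigma\eta\rangle\ge n\mu\|\eta\|^2$, where $\mu>0$ is the smallest eigenvalue of $\Sigma$; hence $\|\eta\|^2\|\X\eta\|^2\le\tfrac1{n\mu}\|\X\eta\|^4$, and it suffices to find $\mathsf{c}'>0$ with $\langle\X\eta,R_\x^2\X\eta\rangle\ge\mathsf{c}'\|\X\eta\|^4$ for all $\eta$, after which $\mathsf{c}=n\mu\,\mathsf{c}'$ works. Next, writing $v:=\X\eta\in\mathrm{Ran}(\X)$, letting $r$ be the residual vector entering $R_\x$ — of the form $r=v+\rho$ with $\rho=\X\theta_*-\y\in\mathrm{Ran}(\X)^\perp$ fixed, so that $\langle\rho,v\rangle=0$ — and recalling $R_\x=\diag(r)-\tfrac1n r\mathds{1}^\top$ with $R_\x^2$ standing for $R_\x R_\x^\top$ as in the generator, a direct computation gives
\[
\langle v,R_\x^2 v\rangle=\|R_\x^\top v\|^2=\sum_{i=1}^n r_i^2 v_i^2-\tfrac1n\Big(\sum_{i=1}^n r_i v_i\Big)^2=\sum_{i=1}^n(\rho_i+v_i)^2v_i^2-\tfrac1n\|v\|^4\ \ge\ 0,
\]
which is $n$ times the empirical variance of the vector $(r_iv_i)_{i\le n}$.

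The heart of the matter is to bound this below by $\mathsf{c}'\|v\|^4$ on $\mathrm{Ran}(\X)$. It equals the leading quartic $\sum_i v_i^4-\tfrac1n\|v\|^4$ plus $2\sum_i\rho_iv_i^3+\sum_i\rho_i^2v_i^2$, where the last term is nonnegative and $\big|2\sum_i\rho_iv_i^3\big|\le 2\|\rho\|_\infty\|v\|^3$. For the quartic, Cauchy--Schwarz gives $\sum_i v_i^4\ge\tfrac1n\|v\|^4$ with equality only when all $|v_i|$ are equal, i.e.\ when $v$ lies on one of the finitely many lines $\R\epsilon$, $\epsilon\in\{\pm1\}^n$; since $\mathrm{Ran}(\X)$ is only $d$‑dimensional — with $n\ge 2d$ of ample codimension — it avoids all of them (for data in general position), so by compactness of the unit sphere of $\mathrm{Ran}(\X)$ the continuous $4$‑homogeneous map $v\mapsto\sum_i v_i^4-\tfrac1n\|v\|^4$ is $\ge\mathsf{c}''\|v\|^4$ on $\mathrm{Ran}(\X)$. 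Thus $\langle v,R_\x^2 v\rangle\ge\mathsf{c}''\|v\|^4-2\|\rho\|_\infty\|v\|^3\ge\tfrac12\mathsf{c}''\|v\|^4$ as soon as $\|v\|\ge 4\|\rho\|_\infty/\mathsf{c}''$. For the complementary bounded range one uses that, for $\|v\|$ small, $\langle v,R_\x^2 v\rangle=\langle v,\diag(\rho)^2v\rangle+O(\|v\|^3)\ge\tfrac12\lambda_{\min}\!\big(\diag(\rho)^2|_{\mathrm{Ran}(\X)}\big)\|v\|^2$, the eigenvalue being positive because $\rho=-P_{\mathrm{Ran}(\X)^\perp}\y$ has no vanishing coordinate in general position; and a final compactness argument on the intermediate closed annulus $\{\varepsilon\le\|v\|\le 4\|\rho\|_\infty/\mathsf{c}''\}\cap\mathrm{Ran}(\X)$ — where $\langle v,R_\x^2 v\rangle$ is continuous and strictly positive, again because $(r_iv_i)_i$ is non‑constant there — closes the remaining gap. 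Taking the smallest of the constants produced yields $\mathsf{c}'$.

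The main obstacle is exactly this non‑degeneracy input: one must verify that $\mathrm{Ran}(\X)$ avoids the sign lines $\R\epsilon$ and the directions making $(r_iv_i)_i$ constant, and that $\rho$ has no zero coordinate. This is where the dependence of $\mathsf{c}$ on both $\X$ and $\y$ enters, and where the hypothesis $n\ge 2d$ is used, providing enough codimension for $\mathrm{Ran}(\X)$ to dodge these lower‑dimensional bad loci; the remaining steps are routine Cauchy--Schwarz and compactness.
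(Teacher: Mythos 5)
Your proof is correct, but it takes a genuinely different route from the paper's. You expand the quadratic form explicitly as $\langle \X\eta, R_\x^2 \X\eta \rangle = n\,\mathrm{Var}\!\big((r_i v_i)_i\big) = \sum_i (\rho_i+v_i)^2 v_i^2 - \tfrac1n\|v\|^4$, separate the quartic, cubic, and quadratic contributions in $v = \X\eta$, and then control three regimes of $\|v\|$ by a leading-order analysis together with compactness. The paper instead exploits the quadratic homogeneity $R^2_{\lambda r} = \lambda^2 R^2_r$ to reduce the claim to uniform positivity of $\langle v, R^2_u v\rangle$ on (a subset of) the product of unit spheres, and then attacks that by an explicit characterization of $\ker R^2_u$ (splitting into the cases $u_i=0$ for some $i$ versus not), a lower bound on the projection $\|\Pi_u v\|$ onto $(\ker R^2_u)^\perp$, and a final compactness argument for the restricted smallest eigenvalue. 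Your route avoids the kernel/projection machinery and is arguably more elementary; the paper's makes the geometry of the noise degeneracy more explicit. Both approaches require, and leave partly implicit, the same genericity hypotheses on $(\X,y)$ (no vanishing entry of $\rho$, $\mathrm{Ran}(\X)$ avoiding the sign lines $\R\epsilon$, no nonzero $v$ making $(r_i v_i)_i$ constant). One small imprecision in your write-up: you invoke $n\ge 2d$ as giving ``ample codimension'' for the sign-line avoidance, but that only needs $n>d$; the place $n\ge 2d$ is actually operative in your argument is the intermediate annulus, where constancy of $(r_i v_i)_i$ with a vanishing coordinate would force $\{i: r_i=0\}\cup\{i: v_i=0\}$ to exhaust $\llbracket 1,n\rrbracket$, which for generic data needs $|\{i: r_i=0\}|\ge n-d+1>d$ and is thus ruled out when $n\ge 2d$ — mirroring the $|I^c|\ge d$ count used in the paper.
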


We assume the validity of Lemma \ref{lem:almost_def_pos} for the time being and turn into the proof of Proposition \ref{prop:momentexplo}.

\begin{proof}{\textbf{of Proposition \ref{prop:momentexplo}}}
Using Itô's Lemma, we obtain that
\begin{align*}
    \frac {d}{dt} \, \E [ \frac{1}{2}\|\eta_t \|^{2\alpha}] &=-\alpha \E\|\eta_t\|^{2(\alpha-1)} \langle \Sigma\eta_t,\eta_t \rangle +\E \frac{\gamma \alpha(\alpha-1) \|\eta_t\|^{2(\alpha-2)}}{n} \Tr\left(\X^\top R_\x^2 \X \eta_t \eta_t^T \right) \\
    &+\E \frac{\gamma \alpha \|\eta_t\|^{2(\alpha-1)}}{2n} \Tr\left(\X^\top R_\x^2 \X \right).
\end{align*}
We suppose that $\|\Theta_*\|$ has moments of order $2 \alpha$. We take $\eta_0=\Theta_*$ and thus obtain
\begin{equation*}
  \E\|\Theta_*\|^{2(\alpha-1)} \langle \Sigma\Theta_*,\Theta_* \rangle \geq \frac{\gamma (\alpha-1)}{n} \E \|\Theta_*\|^{2(\alpha-2)}\Tr\left(\X^\top R_\x^2 \X \Theta_* \Theta_*^\top \right).
\end{equation*}
Using the Lemma \ref{lem:almost_def_pos}, we deduce that
\begin{equation*}
  \E\|\Theta_*\|^{2\alpha} \geq \mathsf{c} \frac{\mu}{\lambda} \gamma (\alpha-1) \E   \|\Theta_*\|^{2\alpha},
\end{equation*}
 which leads to a contradiction for $\alpha$ large enough and we thus deduce that $\|\Theta_*\|$ has no moments of order $2 \alpha$ for $\alpha > 1+\frac{\lambda}{\mu \mathsf{c} }$.
\end{proof}
 It remains to prove Lemma \ref{lem:almost_def_pos}. For the ease of writing and clarity we now denote the residual~$r = X\theta - y \in \R^n$, and the constant noise vector~$\sigma = \X \theta^* - y \in \R^n$, and note in this proof~$R^2_\x \equiv R^2_r$ and~$\X\eta = r - \sigma$ to emphasize that every thing here can be expressed as a function of the residuals~$r$.

    \begin{proof}{\textbf{of Lemma~\ref{lem:almost_def_pos}}}. We first give the kernel of~$R^2$ as a function of~$r$. Let $I = \{ i \in \llbracket 1 , n \rrbracket, \ r_i = 0 \}$ and $\alpha_r=(1/r_1, 1/r_2, \hdots, 1/r_n)^\top$, then 
        %

           \begin{enumerate}[label=(\roman*),ref=\roman*]
                \item \label{i} \text{ if } $I \neq \emptyset,  \ker(R^2_r) = \mathrm{span} (e_i)_{i \in I}$ 
                \item \label{ii} \label{eq:kernel_of_R} \text{ if } $I = \emptyset,   \ker(R^2_r)=\mathrm{span} (\alpha_r)$~.
            \end{enumerate}

        We fix $r \in \R^n$ and let $z \in \ker(R^2_r) $, that is 
        \begin{align*}
            \left(\diag(r)^2 - \frac{1}{n}r r^\top\right) z &= 0~,
        \end{align*}
        i.e. for all $i \in \llbracket 1 , n \rrbracket$, 
        \begin{align*}
            r_i^2 z_i = \frac{\langle r, z \rangle}{n} r_i~.
        \end{align*}
        If $I = \emptyset$, we have the relationship for all $i,j \in \llbracket 1 , n \rrbracket$, $r_i z_i = \frac{\langle r, z \rangle}{n} = r_j z_j$, which instantly gives the result.
        Otherwise if $I \neq \emptyset$, for all $i \in I^c$, we have $r_i z_i = \frac{\langle r, z \rangle}{n}$ and summing all these equality of $i \in I^c$, we have $$ \langle r, z \rangle = \sum_{i \in I^c} r_i z_i = (n - |I|) \frac{\langle r, z \rangle}{n}~, $$
        which implies, as $|I| \neq n$, that $z \perp r$. Recalling that for $i \in \I^c$, we have $z_i = \frac{\langle r, z \rangle}{r_i n} = 0$ and \eqref{i},\eqref{ii} follow.\\
        Let us remark that $r \mapsto R^2_r$ carries the homogeneity property that for all $r \in \R^n$, $$R^2_{r/\|r\|} = R^2_r/\|r\|^2~, $$ so that denoting the sphere $\mathcal{S}_{n-1} := \{ v \in \R^n, \text{ s.t. } \|v\| = 1 \}$, we have for  $u = r/\|r\| \in \mathcal{S}_{n-1}$ and $ v = (r - \sigma) / \|r - \sigma\| \in \mathcal{S}_{n-1} \cap \mathrm{Ran}(\X)$ (note that $v$ is defined i.f.f. $\eta$  is not equal to $0$), we have
        \begin{align}
        \label{eq:quadratic_sphere}
           \frac{\langle \X\eta, R_\x^2  \X\eta \rangle}{ \|\X\eta\|^2 \|\X\eta + \sigma\|^2} =  \langle v, R^2_u v \rangle~.
        \end{align}
        Now we also need the following intermediate result:
            Let $\Pi_u$ be the orthogonal projector into $\ker(R^2_u)^\perp$, we have that there exists a constant $c > 0$ that depends only on the data such that for all $\eta \in \R^d$,
            \begin{equation*}
                \|\Pi_u v\| \geq c~.
            \end{equation*}
There are two cases given by~\eqref{i} and \eqref{ii}, that allow to compute explicitly $\Pi_u$ for all $u \in \R^n$:

            \eqref{i} If $I \neq \emptyset$, we have $$ \|\Pi_u v\|^2 = \frac{\left\langle \sum_{i \in I^c} x_i x_i^\top \eta, \eta \right\rangle }{ \left\langle \sum^n_{i= 1} x_i x_i^\top \eta, \eta \right\rangle} \geq c~,$$
            because any covariance matrix of the form $\sum_{i \in I^c} x_i x_i^\top$ is invertible if $|I^c|\geq d$, which is verified for $n\geq 2d$.
            
             \eqref{ii} If $I = \emptyset$, 
             $$\|\Pi_u v\|^2 = 1 - \frac{\langle v, \alpha_r \rangle^2}{ \|\alpha_r\|^2} = 1 - \varphi(\eta)~,$$ where $$  \varphi(\eta)= \frac{\left(\sum_{i = 1}^n \frac{\langle x_i, \eta \rangle}{(\langle x_i, \eta \rangle - \sigma_i)} \right)^2}{\left(\sum_{i = 1}^n \langle x_i, \eta \rangle^2 \right) \left(\sum_{i = 1}^n (\langle x_i, \eta \rangle - \sigma_i)^{-2}\right)}. $$
             From the Cauchy-Schwarz inequality, $\varphi \in [0,1]$ and let $m = \sup_{\eta \in \R^d} \varphi(\eta)$. We show below that $ m < 1$.
             
             First, let's study the situation \textit{at infinity}. For this, we write $\eta = \bar{\eta} \|\eta\|$, where $\bar{\eta} = \eta / \|\eta\| \in \mathcal{S}_{d-1}$ and call $J = \{i \in \llbracket 1, n\rrbracket, \text{ such that } \langle x_i, \bar{\eta} \rangle = 0\}$.  We know that $| J | \leq d$. We fix $\bar{\eta}$, and study the limit of $\varphi$ as $\|\eta\|$ grows. 
             
             If $J \neq \emptyset$, we have 
             \begin{align*}
             \varphi (\eta) \sim_{\|\eta\| \to \infty} \frac{(n - |J|)^2}{\left(\sum_{i \in J^c}^n \langle x_i, \eta \rangle^2 \right) \left(\sum_{i \in J} \sigma_i^{-2} \right)}.
             \end{align*}
             Hence, if $J \neq \emptyset$, then $\varphi(\eta) \to 0$ when $\|\eta\| \to \infty$. Now if $J = \emptyset$, 
             \begin{align*}
             \varphi (\eta) \sim_{\|\eta\| \to \infty} \frac{n^2}{\left(\sum_{i = 1}^n \langle x_i, \bar{\eta} \rangle^2 \right) \left(\sum_{i = 1}^n \langle x_i, \bar{\eta} \rangle^{-2} \right)} := \phi(\bar{\eta}).
             \end{align*}
             Moreover, $M := \sup_{\bar{\eta} \in \mathcal{S}_{d-1}}\phi(\bar{\eta}) < 1$, because if $M=1$, it would be attained for a certain $\bar{\eta}_*$ by compactness of $\mathcal{S}_{d-1}$ and continuity of $\phi$, and if $M = 1$, then  $\phi(\bar{\eta}_*) = 1$ and by the equality case of Cauchy-Schwarz, this corresponds to the fact that there exists $\lambda \in \R$ such that $|\X \bar{\eta}_* | = \lambda$, which has no solution for $n>d$ (recall that because $\bar{\eta}_* \in \mathcal{S}_{d-1}$, $\lambda = 0$ offer no solution either). We conclude from this that $M < 1$. 
             
             Now either the supremum of $\varphi$ is $M$, and in this case we have $m = M < 1$ and we are done with the proof. Either, $ m > M$, and in this case, the supremum is attained in a compact set of $\R^d$, hence attained for a certain $\eta_* \in \R^d$ by continuity of $\varphi$. As previously, if $m = 1$, this corresponds to the equality case of Cauchy-Schwarz and hence the existence of $\lambda \in \R$ such that 
             $$ (\langle x_i, \eta \rangle  - \sigma_i)^2 \langle x_i, \eta \rangle^2 = \lambda~, $$
             which has no solution generically for $n > d$. Finally $m  = 1$ is impossible and we have proven that $ m < 1$, which corresponds to $c = 1 - m > 0$.

        Now if we take back Eq.\eqref{eq:quadratic_sphere}, and remark that $$\langle v, R^2_u v \rangle = \langle \Pi_u v, R^2_u \Pi_u v \rangle = \|\Pi_u\|^2 \langle \bar{\Pi}_u v, R^2_u \bar{\Pi}_u v \rangle \geq c\, \langle \bar{\Pi}_u v, R^2_u \bar{\Pi}_u v \rangle \geq c \, \lambda_{\min}(R^2_{u_{\vert \ker(R^2_u)^\perp}} ). $$ 
        Now define $\Xi : \mathcal{S}_{n-1} \to \R_{\geq 0}$ as $\Xi(u) = \lambda_{\min}(R^2_{u_{\vert \ker(R^2_u)^\perp}})$. We look for $\ell := \min_{u \in \mathcal{S}_{n-1}} \Xi(u)$. As $\Xi$ is a continuous function defined on a compact, it attains is minimum within this compact. If $\ell = 0$, then this means that  there exists $u^* \in \mathcal{S}_{n-1}$ such that $\Xi(u^*) = \lambda_{\min}(R^2_{u^*_{\vert \ker(R^2_{u^*})^\perp}}) = 0$, which is impossible by the fact that we have restricted $R^2_u$ orthogonalilly to its kernel. Hence $\ell > 0$ and finally we have  $$ \frac{\langle \X\eta, R_\x^2  \X\eta \rangle}{ \|\X\eta\|^2 \|\X\eta + \sigma\|^2} = \langle  v, R^2_u v \rangle \geq c \ell > 0~,$$
         which leads to the proof of the lemma considering that there exists $K> 0$ such that $\|\X\eta + \sigma\|^2 \geq K \|\eta\|^2$  and noting $\mathsf{c} = c \ell K$. Indeed, combining the fact that $\langle \Sigma\eta_t,\eta_t \rangle \leq 4(L(\theta_t)+L(\theta_*))$ by the triangle inequality and the fact that $L(\theta^*) \leq L(\theta_t)$, we obtain that  $\langle \Sigma\eta_t,\eta_t \rangle \leq 8L(\theta_t)$, which implies that
\[
\frac{\mu n}{4}\langle \eta_t,\eta_t \rangle \leq \|\X\eta + \sigma\|^2.
\]           
        \end{proof}

\subsubsection{Convergence of variance reduction techniques} \label{appsubsec:Reductiontechniques}
Now, we turn into the convergence of the time-average of the iterates. This corresponds to the proof of Proposition~\ref{prop:ergodic}.

\begin{proof}[\textbf{Proposition~\ref{prop:ergodic}}]
The idea we use to show the quantitative convergence of the average iterates comes from~\cite{mattingly2010convergence} and the use of Poisson equation. Indeed, let us define the map $\varphi : \R^d \to \R^d$, $\varphi(\theta) = \Sigma^{-1} \theta$, and note $(\varphi_i(\theta))_{i \in \llbracket 1, d \rrbracket}$ its coordinates. Note that, for all $ i \in \llbracket 1, d \rrbracket$, we have $\nabla \varphi_i(\theta) = \Sigma^{-1} e_i$, where $(e_i)_{i \in \llbracket 1, d \rrbracket}$ is the canonical basis of $\R^d$. Hence, we have the Poisson equation, for all $ i \in \llbracket 1, d \rrbracket$ and $\theta \in \R^d$,
\begin{align*}
    \mathcal{L} \varphi_i (\theta) = - \langle \Sigma (\theta - \theta_*), \Sigma^{-1} e_i \rangle = \theta_i - \theta_{*, i}~. 
\end{align*}
Hence considering the action of $\mathcal{L}$ on fields of $\R^d$ applied coordinate-wise, we have more generally $ \mathcal{L} \varphi (\theta) = \theta - \theta_* $. Thus, we have by Itô calculus that for all $s \geq 0$
\begin{align*}
    d\varphi (\theta_s) = (\theta_s - \theta_*) ds + \sqrt{\frac{\gamma}{n}} \Sigma^{-1}\X^\top R_\x(\theta_s) d B_s
\end{align*}
and thus integrating from $0$ to $t$ and dividing by $t$, and defining the martingale  $M_t =  -\frac{1}{\sqrt{n}} \X^\top \int_0^t R_\x(\theta_s) d B_s $ we have, 
\begin{align*}
     \bar{\theta}_t - \theta_* &= \frac{1}{t} \left(\varphi (\theta_t) -  \varphi (\theta_0)\right) + \frac{\sqrt{\gamma}}{t} \Sigma^{-1} M_t = \frac{\Sigma^{-1} (\theta_t - \theta_0)}{t} + \frac{\sqrt{\gamma}}{ t} \Sigma^{-1} M_t~.
\end{align*}
Hence, multiplying by $\Sigma$, taking norms and then expectations, we have 
\begin{align*}
     \E \| \Sigma (\bar{\theta}_t - \theta_*) \|^2 &\leq \frac{2 \E \| \theta_t - \theta_0 \|^2}{t^2} + \frac{2 \gamma}{ t^2} \E\|M_t\|^2~.
\end{align*}
Then, by Itô isometry, we have 
\begin{align*}
    \E\|M_t\|^2 &= \frac{1}{n} \E \left[ \int_0^t \| \X^\top  R_\x(\theta_s) \|^2_{\mathrm{HS}} ds \right] \\
    & = \frac{1}{n} \E \left[ \int_0^t \Tr \left( \X \X^\top  R^2_\x(\theta_s) \right) ds \right] \\
    & \leq 2\K  \int_0^t \E \left[L(\theta_s) \right]  ds~.
\end{align*}
Moreover, we have seen that in Lemma~\ref{lem:Lyapunov_noisy} that for $\gamma \leq \K^{-1}$, we have 
    \begin{align*}
        \frac{d}{dt} \frac{1}{2} \E \|\theta_t - \theta_*\|^2 &\leq  - \E L(\theta_t) + 2 \sigma^2~,
    \end{align*}
and hence we have the upper bound on the loss $\int_0^t \E \left[L(\theta_s) \right]  ds \leq \frac{1}{2} \|\theta_0 - \theta_*\|^2 + 2 \sigma^2 t $. Finally, this gives that 
\begin{align*}
    \E\|M_t\|^2 &\leq  \K \|\theta_0 - \theta_*\|^2 + 4  \K 
 \sigma^2 t ~.
\end{align*}
On the other hand, we have for all $t \geq 0$

\begin{align*}
     \E \|\theta_t - \theta_0\|^2 & \leq 2\E \|\theta_t - \theta_*\|^2+2\E \|\theta_* - \theta_0\|^2 \\
     &\leq 4e^{-2(1-\gamma \K)t}\left(\frac{1}{2}\| \theta_0 - \theta_* \|^2-\frac{ \gamma \K  \sigma^2}{(1-\gamma \K ) \mu}\right)+\frac{4 \gamma \K  \sigma^2}{(1-\gamma \K )\mu}+ 2 \|\theta_* - \theta_0\|^2\\
     &\leq 4\| \theta_0 - \theta_* \|^2,
\end{align*}
the second line by \eqref{eq:V} for $\gamma<1/\K$. Overall, we have the bound,
\begin{align*}
    \E \| \Sigma (\bar{\theta}_t - \theta_*) \|^2  &\leq  \frac{8 \gamma \K  \sigma^2 }{t}  + \frac{10 \| \theta_0 - \theta_* \|^2}{t^2} ~.
\end{align*}
\end{proof}

Now, we turn into the convergence of $\theta_t$ in the case of the step-size decay. This corresponds to the proof of Proposition~\ref{prop:stepsizedecay}.

\begin{proof}[\textbf{Proposition~\ref{prop:stepsizedecay}}]
By Ito formula, we obtain that
    \begin{align*}
        \frac{d}{dt} \E [\| \theta_t - \theta_* \|^2] 
        &\leq \E\left(-\frac{2}{n} \|\X (\theta_t - \theta_*) \|^2+\frac{\gamma_t \K }{n} \|\X \theta_t - y\|^2\right) \notag\\
        &\leq  \E\left((-\frac{2}{n}+\frac{2\gamma_t \K }{n}) \|\X (\theta_t - \theta_*) \|^2+\frac{2\gamma_t \K }{n} \|\X \theta_t^* - y\|^2\right) \notag\\
        &\leq(-2+2\gamma_t \K ) \mu \E [\| \theta_t - \theta_* \|^2] +4 \gamma_t \K  \sigma^2\\
        &\leq  -\mu \E [\| \theta_t - \theta_* \|^2] +4 \gamma_t \K  \sigma^2~.
\end{align*}
Applying the Gronwall lemma, we get
\begin{align*}
    \E [\| \theta_t - \theta_* \|^2] &\leq \E [\| \theta_0 - \theta_* \|^2] e^{-\mu t}+\int_{0}^{t}e^{-\mu(t-s)} 4\K  \sigma^2 /(2\K+s^{\alpha})  ds \\
    &\leq \E [\| \theta_0 - \theta_* \|^2] e^{-\mu t}+ e^{-\mu t/2} \int_{0}^{t/2} 4\K  \sigma^2 /(2\K)  ds+\int_{t/2}^{t} 4\K  \sigma^2 /s^{\alpha}  ds \\
     &\leq \left(\E [\| \theta_0 - \theta_* \|^2] +  t   \sigma^2\right) e^{-\mu t/2}+4\K \sigma^2 \frac{(t/2)^{1-\alpha}}{\alpha-1} \\
     &\leq \frac{1}{t^{\alpha-1}}\left(e^{-\alpha}\left(\E [\| \theta_0 - \theta_* \|^2]e(2(\alpha-1)/\mu)^{\alpha-1} +(2\alpha/\mu)^{\alpha}   \sigma^2\right) + \frac{2^{1+\alpha}\K \sigma^2}{(\alpha-1)} \right).
\end{align*}
\end{proof}

\section{Proofs of Section~\ref{SGD_online} : online SGD}
\label{AppendixB}

\subsection{The noiseless case}
\label{appsec:noiseless_pop}

We begin this proof's section with the proof of convergence in the noiseless case. This corresponds to Theorem~\ref{Th:convergConstpop}.

\begin{proof}[\textbf{Theorem~\ref{Th:convergConstpop}}]
\noindent $\bf (i)$ The key ingredient of the proof is the Gronwall Lemma. Combining the Itô formula with \eqref{eq:train_SDE} gives us
    \begin{align*}
       \frac{d}{dt} \E \| \theta_t - \theta_*\|^2 &=  -2 \E \langle \theta_t - \theta_*,  \Sigma (\theta_t - \theta^*)  \rangle + \gamma\E \Tr\left( \sigma (\theta_t) \sigma(\theta_t)^\top\right) \\
         &\leq  -2 \E \langle \theta_t - \theta_*,  \Sigma (\theta_t - \theta^*)  \rangle + \gamma \E \left[ \Tr\left( (\langle \theta_t, X \rangle - Y)^2 XX^\top\right)\right] \\
         &\leq  -2 \E \langle \theta_t - \theta_*,  \Sigma (\theta_t - \theta^*)  \rangle + \gamma \K \E \langle \theta_t - \theta_*,  \Sigma (\theta_t - \theta^*)  \rangle \\
         &\leq  -(2 - \gamma \K) \E \langle \theta_t - \theta_*,  \Sigma (\theta_t - \theta^*)  \rangle \\
         &\leq  -\mu (2 - \gamma \K) \E \| \theta_t - \theta_*\|^2~.
    \end{align*}
       By integrating the latter thanks to Gronwall Lemma, we get the result claim for (i). \\
    
\noindent $\bf (ii)$ We define $\eta_t:=\theta_t - \theta_*$ and recall that, thanks to the penultimate inequality of (i) we have by integration that:
    \begin{equation*}
        \E \| \eta_t\|^2 \leq  \| \theta_0 - \theta_*\|^2  - 2(2-\gamma \K)\int_{0}^t \E L(\theta_u)   du.
    \end{equation*}
    The first consequence of this inequality is that
    \begin{align*}
        \int_{0}^t \E L(\theta_u) du \leq  \frac{1}{2(2-\gamma \K)}\| \theta_0 - \theta_*\|^2~.
    \end{align*}
    
    We want to lower bound the term $\E L(\theta_u)$ without using the smallest eigenvalue of $\Sigma$, that is supposedly infinitely small. Similarly to what was done before, thanks to the Hölder inequality, if $p,q \in (0,1), \, p+q=1$, we have, for all $t \geq 0$, 
        \begin{equation}
            \left(\E \left[\|\eta_t\|^2\right]\right)^{\frac{1}{p}} \leq 2\E \left[ L(\theta_t)\right]  \left(\E \left[\langle  \eta_t,  \Sigma^{-p/q} \eta_t   \rangle \right]\right)^{q/p}~.
        \end{equation}
We now prove that $t \mapsto \E \left(\langle  \eta_t,  \Sigma^{-p/q} \eta_t  \rangle \right)$ is bounded. Indeed, we proceed as before with Itô formula to get
    \begin{align*}
     \frac{d}{dt}\E \left(\langle  \eta_t,  \Sigma^{-p/q} \eta_t   \rangle \right) 
     &= - 2  \E \langle \eta_t ,  \Sigma^{1-p/q} \eta_t   \rangle + \gamma\E \Tr\left(\sigma \sigma^\top \Sigma^{-p/q} \right) \\
     &\leq \gamma \E \Tr\left(\Sigma^{-p/q} \left( \langle \theta_t, X \rangle - Y \right)^2 X X^\top\right)  \\ 
     &\leq \gamma \E \left[\langle X, \Sigma^{-p/q} X \rangle (\langle \theta_t , X \rangle- Y)^2\right]  \\ 
     & \leq  2\gamma \K_{p/q} L(\theta_t)~, 
\end{align*}
where we have used that almost surely, we have $ \langle X, \Sigma^{-p/q} X \rangle \leq \K_{p/q}$. Then, by integrating with respect to $t$, it yields
     \begin{align*}
    \E \left(\langle  \eta_t,  \Sigma^{-p/q} \eta_t   \rangle \right) &\leq \langle  \eta_0,  \Sigma^{-p/q} \eta_0   \rangle +2 \gamma \K_{p/q} \int_0^t \E L(\theta_u) du  \\
    &\leq \langle  \eta_0,  \Sigma^{-p/q} \eta_0   \rangle + \frac{\gamma \K_{p/q}}{2-\gamma \K} \|\theta_0 - \theta_*\|^2~. \\
    \end{align*}

Hence, calling $\Cons =\frac{1}{2}( \langle \eta_0,  \Sigma^{-p/q} \eta_0   \rangle + \frac{\gamma \K_{p/q}}{2-\gamma \K} \|\theta_0 - \theta_*\|^2)^{-p/q}~,$ we have the inequality, for all $t \geq 0$,
\begin{align*}
    \E L(\theta_t) \geq \Cons \left(\E \|\eta_t\|^2\right)^{1/p}~,
\end{align*}
and this yields the inequality
\begin{equation}\label{eq:algro}
    \E \| \eta_t\|^2 \leq  \| \eta_0 \|^2  - \Cons \int_{0}^t \left(\E \| \eta_u\|^2\right)^{1/p} du~,
\end{equation}
that implies, from a slight modification of Gronwall Lemma that for all $t \geq 0$, we have
\begin{equation*}
\E \| \eta_t\|^2 \leq \left[ \frac{1}{\frac{1}{\| \eta_0\|^{2(1/p - 1)}} + (1/p - 1) \Cons t}\right]^{\frac{1}{1/p - 1}}~,
\end{equation*}
 this gives the result claim in the theorem. To see how the last inequality goes, we define $g(t)=\| \eta_0 \|^2  - \Cons \int_{0}^t \left(\E \| \eta_u\|^2\right)^{1/p} du$ (which is positive) and we rewrite \eqref{eq:algro} as
 \begin{equation*}
\left(\frac{g'(t)}{-\Cons}\right)^p \leq g(t) \Longleftrightarrow \frac{g'(t)}{g(t)^{1/p}}\geq -\Cons  \Longrightarrow \frac{1}{-1/p+1}(g(t)^{-1/p+1}-g(0)^{-1/p+1})\geq -\Cons t,
\end{equation*}
and thus
 \begin{equation*}
g(t) \leq \left[-\Cons t (-\frac{1}{p}+1)+\| \eta_0 \|^{2(-1/p+1)}\right]^{\frac{1}{-1/p+1}},
\end{equation*}
and we conclude with \eqref{eq:algro}. \\
\noindent To prove the convergence almost surely, we use the Itô Formula to obtain 
\begin{align*}
        \E \left(\| \theta_t - \theta_*\|^2 \big|\, \mathcal{F}_{s} \right) &= \| \theta_0 - \theta_*\|^2+  2 \sqrt{\gamma} \int_{0}^{s}   \langle \left( \theta_u -  \theta_* \right), \sigma(\theta_u)   d B_u \rangle  \\
        & \hspace{1cm}+ \gamma \E \left[\int_{0}^{t} \left[\Tr\left(\sigma \sigma^\top(\theta_u) \right) - 4 L(\theta_u) \right] du \big|\, \mathcal{F}_{s} \right],
    \end{align*}
    For $\gamma <  \frac{1}{2\K}$, we have proven in (i) that the term inside the conditional expectation value/integral is negative. We can thus overestimate the latter by integrating from $0$ to $s$ to obtain
    \begin{equation*}
     \E \left(\| \theta_t - \theta_*\|^2 \big|\, \mathcal{F}_{s} \right) \leq  \| \theta_s - \theta_*\|^2,
       \end{equation*}
       and we deduce that $\| \theta_t - \theta_*\|^2$ is a positive supermartingale, $L^1$ bounded and therefore converge almost surely to $0$.
 \end{proof}

\subsection{The noisy case}
\label{appsec:noise_lip}

We begin this section by proving that the multiplicative noise carries some Lipschitz property. This corresponds to Lemma~\ref{lem:noise_lip}.

\begin{proof}[\textbf{Lemma~\ref{lem:noise_lip}}]
Recall that the diffusion matrix writes
\begin{align*}
    \sigma(\theta) &= \left(f(\theta) - g(\theta)  \right)^{1/2}, \quad \text{with} \\
    f(\theta) = \E_\rho \left[ \left(\langle \theta, X \rangle - Y \right)^2 X X^\top \right] , &\text{ and } \ g(\theta) = E_\rho \left[ \left(\langle \theta, X \rangle - Y \right) X\right] E_\rho \left[ \left(\langle \theta, X \rangle - Y \right) X\right]^\top~.
\end{align*}
Let us introduce for all $\theta \in \R^d$, the differential operator of $\sigma$, that is $\mathsf{d} \sigma_\theta : \R^d \to \R^{d \times d}$ such that for all $h \in \R^d$, 
    $$ \sigma(\theta + h) = \sigma(\theta) +  \mathsf{d} \sigma_\theta(h) + o(\|h\|). $$ 
    We calculate, for all $\theta, h \in \R^d$,
    \begin{align*}
        \sigma(\theta + h) &= \left( f(\theta + h) - g(\theta + h) \right)^{1/2} \\
        &= \left( f(\theta) - g(\theta) + \mathsf{l}_{\theta} (h) + o(\|h\|) \right)^{1/2}
    \end{align*}
where $\mathsf{l}_{\theta} (h) = \mathsf{d}f_{\theta} (h) - \mathsf{d}g_{\theta} (h)$ and
\begin{align*}
    \mathsf{d}f_{\theta} (h) &= 2 E_\rho \left[ \left(\langle \theta, X  \rangle - Y\right) \langle h, X  \rangle XX^\top \right] \\
    \mathsf{d}g_{\theta} (h) &=\E\left[ \langle h, X \rangle X \right] E_\rho \left[ \left(\langle \theta, X \rangle - Y \right) X\right]^\top + E_\rho \left[ \left(\langle \theta, X \rangle - Y \right) X\right] \E\left[ \langle h, X \rangle X \right]^\top~.
\end{align*}
    Hence, introducing the square root operator $\psi : \mathbb{S}^d_{+} \to \mathbb{S}^d_{+}$, such that for all $M \in \mathbb{S}^d_{+}$, $\psi(M) = M^{1/2}$. We have, for all $\theta, h \in \R^d$, 
    \begin{align*}
        \mathsf{d} \sigma_\theta(h) =  \mathsf{d} \psi_{ \sigma^2(\theta) } \left( 
\mathsf{l}_{\theta} (h) \right), 
    \end{align*}
    that is that $\mathsf{d} \sigma_\theta(h)$ is the unique solution to the matrix Lyapunov equation
    \begin{align*}
        \sigma(\theta) \mathsf{d} \sigma_\theta(h) + \mathsf{d} \sigma_\theta(h) \sigma(\theta) = \mathsf{l}_{\theta} (h)~,
    \end{align*}
    or equivalently, expressed in close form as the expression
    \begin{align*}
        \mathsf{d} \sigma_\theta(h) = \int_{0}^{+ \infty} e^{- s \sigma(\theta)}  \mathsf{l}_{\theta} (h)  e^{- s \sigma(\theta)} ds~.
    \end{align*}
    That being written. Let us pose for $t \in [0,1]$, the function $\Psi(t) = \sigma(t \theta + (1-t) \eta)$, we have $$  \sigma(\theta) -  \sigma(\eta)   = \Psi(1) - \Psi(0)  =  \int_0^1 \Psi'(u) du  ~,$$
    and knowing that $\Psi'(u) = \mathsf{d}\sigma_{m_u} (\theta- \eta)$, where $m_u = u \theta + (1 - u) \eta$, and hence using the triangular inequality for the Hilbert-Schmidt norm, we have
    \begin{align*}
        \| \sigma(\theta) -  \sigma(\eta)   \|_{\mathrm{HS}} \leq \int_0^1 \| \mathsf{d}\sigma_{m_u} (\theta- \eta)\|_{\mathrm{HS}} du
    \end{align*}
    where $(m_u)_{ 0 \leq u \leq 1}$ is a parametrization of the line joining $\theta$ and $\eta$:  $m_u = u \theta + (1 - u) \eta$.

Hence, it remains to upper bound the Hilbert-Schmidt norm of the differential of $\sigma$ thanks to its integral representation presented above. In fact for all $\theta, h \in \R^d$, we have
    \begin{align*}
        \|\mathsf{d} \sigma_\theta(h)\|_{\mathrm{HS}} \leq \|\mathsf{l}_{\theta} (h)\| \int_{0}^{+ \infty} \|e^{- s \sigma(\theta)}\|^2_{\mathrm{HS}} ds~.
    \end{align*}
Moreover, we have $\|\mathsf{l}_{\theta} (h)\| \leq \|\mathsf{d}f_\theta(h)\| +  \|\mathsf{d}g_\theta(h)\|$ and using that $\|.\|_{\mathrm{HS}}$ is sub-multiplicative, we have
\begin{align*}
\|\mathsf{d}f_\theta(h)\| &\leq 2 \E\left[ \left|\langle \theta, X  \rangle - Y\right| |\langle h, X  \rangle| \|XX^\top\| \right] \\
&\leq 2 \K \sqrt{ \E\left[ \left|\langle \theta, X  \rangle - Y\right|^2\right]} \sqrt { \E \left[|\langle h, X  \rangle|^2 \right] } \\
&= 2 \sqrt{2} \K\sqrt{L(\theta)} \sqrt{\langle \Sigma h, h \rangle }~,
\end{align*}
and similarly, 
\begin{align*}
\|\mathsf{d}g_\theta(h)\| &\leq 2 \sqrt{2} \K \sqrt{L(\theta)} \sqrt{\langle \Sigma h, h \rangle }~.
\end{align*}
Finally, we have the bound
\begin{align*}
\int_{0}^{+ \infty} \|e^{- s \sigma(\theta)}\|^2_{\mathrm{HS}} ds = \int_{0}^{+ \infty} \mathrm{Tr}(e^{- 2s \sigma(\theta)}) ds = \frac{1}{2}{\mathrm{Tr} \left( \sigma^{-1}(\theta) \right)}~.
\end{align*}
%
As $\sigma^2(\theta)\succeq a^2 L(\theta) I_d$, we deduce that 
\begin{equation*}
L(\theta) \mathrm{Tr} \left( \sigma^{-1}(\theta) \right)^2 \leq   \mathrm{Tr} \left( L(\theta) \sigma^{-2}(\theta) \right) d \leq \frac{d^2}{a^2} ~,
\end{equation*}
the first inequality by Cauchy-Schwarz and the Lemma \ref{lem:noise_lip} follows.
\end{proof}

We can now turn into the proof of the main theorem of this section: the proof of convergence in the noisy case. This corresponds to proving Theorem~\ref{thm:convergence_noisy_pop}. \\

\begin{proof}[\textbf{Theorem~\ref{thm:convergence_noisy_pop}}]
    $\bf (i)$ Wassertein contraction comes essentially from \textit{coupling arguments}. Let $\gamma \leq \K^{-1}$ and $\rho^1_0, \rho^2_0 \in \mathbb{P}_2(\R^d)$ two possible initial distributions. Then by \cite[Theorem 4.1]{villani2009optimal}, there exists a couple of random variables $\theta^1_0, \theta^2_0$ such that $W_2^2 (\rho^1_0, \rho^2_0) = \E \left[ \|\theta^1_0 - \theta^2_0\|^2 \right]$. Let $(\theta^1_t)_{t \geq 0}$ (resp. $(\theta^2_t)_{t \geq 0}$) be the solution of the SDE \eqref{eq:train_SDE}, sharing the same Brownian motion $(B_t)_{t \geq 0}$. Then, for all $t \geq 0$, the random variable $(\theta^1_t, \theta^2_t)$ is a coupling between $\rho^1_t$ and $\rho^2_t$, and hence 
    \begin{align*}
        W_2^2 ( \rho^2_t, \rho^1_t) \leq \E\left[ \| \theta^1_t - \theta^2_t \|^2 \right].
    \end{align*}
    Moreover, we denote by $\|.\|_{\mathrm{HS}}$ the Frobenius norm and by Itô formula, we have 
    \begin{align*}
        \frac{d}{dt}\E\left[ \| \theta^1_t - \theta^2_t \|^2 \right] &= - 2\E \left[\langle \theta^1_t - \theta^2_t, \Sigma (\theta^1_t - \theta_*) -  \Sigma (\theta^2_t - \theta_*)  \rangle \right] + \gamma \E\| \sigma (\theta^1_t) - \sigma (\theta^2_t) \|^2_{\mathrm{HS}} \\
        &\leq - 2 \E \left[\langle \Sigma (\theta^1_t - \theta^2_t), \theta^1_t - \theta^2_t \rangle\right] + 2 \gamma \K c \E \left[\langle \Sigma (\theta^1_t - \theta^2_t), \theta^1_t - \theta^2_t \rangle\right]~,
     \end{align*}
     thanks to Lemma~\ref{lem:noise_lip}. Hence, this gives the inequality:
   \begin{align}\label{eq:toto}
        \frac{d}{dt}\E\left[ \| \theta^1_t - \theta^2_t \|^2 \right] 
        &\leq  - 2 (1 - \gamma \K c)\E \left[\langle \Sigma (\theta^1_t - \theta^2_t), \theta^1_t - \theta^2_t \rangle\right] \\
        &\leq  - 2\mu (1 - \gamma \K c)  \E \|\theta^1_t - \theta^2_t \|^2~,
    \end{align}
    and by Gronwall Lemma, denoting $c_\gamma = 2\mu( 1 - \gamma \K c)$ this gives that 
    \begin{align*}
    W_2^2 ( \rho^1_t, \rho^2_t) \leq \E\left[ \| \theta^1_t - \theta^2_t \|^2 \right] \leq e^{ - c_\gamma t} \E\left[ \| \theta^1_0 - \theta^2_0 \|^2 \right] = e^{ - c_\gamma t} W_2^2 (\rho^2_0, \rho^1_0)~.
    \end{align*}
    Now, for all $s \geq 0$ setting $\rho^1_0 = \rho_0 \in \P_2(\R^d)$ and $\rho^2_0 = \rho_s \in \mathbb{P}_2(\R^d)$, we have for all $t \geq 0$,
    \begin{align*}
    W_2^2 ( \rho_t, \rho_{t+s}) \leq  e^{ - c_\gamma t} W_2^2 (\rho_0, \rho_s)~,
    \end{align*}
    which shows that the process $(\rho_t)_{t \geq 0}$ is of Cauchy type, and since $(\mathbb{P}_2(\R^d), W_2)$ is a Polish space, $\rho_t \to \rho_* \in \mathbb{P}_2(\R^d) $  as $t$ grows to infinity.
Now, since there exists a stationary solution to the process, let us fix $ \rho^1_0 = \rho^* \in \mathbb{P}_2(\R^d)$ and $\rho^2_0 = \rho_0\in \mathbb{P}_2(\R^d)$. We have then, 
$$ W_2^2 ( \rho_t, \rho^*) \leq e^{ - c_\gamma t} W_2^2 (\rho_0, \rho^*)~, $$
which concludes the first part of the Theorem. 

\vspace{0.35cm}

$\bf (ii)$   We will use the same steps as for the proof of Theorem \ref{Th:convergConst} (ii). Again, one steadily checks that for $p,q \in (0,1), \, p+q=1$, we have, for all $t \geq 0$, 
        \begin{equation}
          \E\left[ \| \Sigma^{1/2} \left(\theta^1_t - \theta^2_t\right) \|^2 \right] \geq \frac{  \E\left[ \| \theta^1_t - \theta^2_t \|^2 \right]^{1/p}}{\E\left[ \langle \theta^1_t - \theta^2_t, \Sigma^{-p/q} (\theta^1_t - \theta^2_t) \rangle \right]^{q/p} }~.
        \end{equation}
By Ito formula, skipping the details, we get
\begin{align*}
     \frac{d}{dt}\E \left(\langle  \theta^1_t - \theta^2_t,  \Sigma^{-p/q} (\theta^1_t - \theta^2_t )  \rangle \right) & \leq \gamma \E \Tr\left(  \Sigma^{-p/q} (\sigma(\theta_1) - \sigma(\theta_2))^2  \right)  \\ 
     & \leq  2 \gamma c_{p/q} \K_{p/q} \langle \Sigma (\theta_1 - \theta_2), \theta_1 - \theta_2\rangle ~,
\end{align*}
thanks to assumption \eqref{eq:lip_chelou}. In addition, by \eqref{eq:toto}, we get   
\begin{equation*}
\int_{0}^{t}\E\left[ \| \Sigma^{1/2} \left(\theta^1_u - \theta^2_u\right) \|^2 \right] du  \leq   \frac{1}{2(1 - \gamma \K c)} \E\| \theta^1_0 - \theta^2_0 \|^2~.
        \end{equation*}
        Collecting the estimates, we thus get
 \begin{equation*}
           \E \left(\langle  \theta^1_t - \theta^2_t,  \Sigma^{-p/q} (\theta^1_t - \theta^2_t )  \rangle \right) \leq \frac{ \gamma c_{p/q} \K_{p/q}}{1 - \gamma \K c} \E\| \theta^1_0 - \theta^2_0 \|^2 +  \E \left(\langle  \theta^1_0 - \theta^2_0,  \Sigma^{-p/q} (\theta^1_0 - \theta^2_0 )  \rangle \right)~.
\end{equation*}
That is to say that
       \begin{equation}
          \E\left[ \| \Sigma^{1/2} \left(\theta^1_t - \theta^2_t\right) \|^2 \right] \geq C \E\left[ \| \theta^1_t - \theta^2_t \|^2 \right]^{1/p}~,
        \end{equation}
        with $C=\left(\E\left[ \langle \theta^1_0 - \theta^2_0, \Sigma^{-p/q} (\theta^1_0 - \theta^2_0) \rangle \right]+ \frac{\gamma c_{p/q}\K_{p/q}}{1-\gamma c\K}\E(\|\theta^1_0-\theta^2_0\|^2)\right)^{-q/p}$,
        which combined with equation \eqref{eq:toto}, gives
        \begin{align*}
        \E\left[ \| \theta^1_t - \theta^2_t \|^2 \right] 
        &\leq \E\left[ \| \theta^1_0 - \theta^2_0 \|^2 \right] - 2 C\left(1 - \gamma c \K\right) \int_{0}^{t} \E\left[\| (\theta^1_u - \theta^2_u) \|^2\right]^{1/p}du~,
    \end{align*}
that implies, from a slight modification of Gronwall Lemma that for all $t \geq 0$ (for $\Cons=2C\left(1 - \gamma c \K)\right)$), we have
\begin{equation*}
\E\left[ \| \theta^1_t - \theta^2_t \|^2 \right] \leq \left[ \E\left[ \| \theta^1_0 - \theta^2_0 \|^2 \right]^{1  - 1/p} + (1/p - 1) \Cons t\right]^{\frac{1}{1 - 1/p }}~,
\end{equation*}
and we conclude as for (i).
\end{proof}

\end{document}